\documentclass[10pt]{article} 
\usepackage[preprint]{tmlr}

\usepackage{amsmath,amsfonts,bm}

\def\eqref#1{equation~\ref{#1}}

\def\1{\bm{1}}

\DeclareMathAlphabet{\mathsfit}{\encodingdefault}{\sfdefault}{m}{sl}
\SetMathAlphabet{\mathsfit}{bold}{\encodingdefault}{\sfdefault}{bx}{n}

\usepackage{hyperref}
\usepackage{url}
\usepackage[nolist]{acronym}

\usepackage{diagbox}
\usepackage{multicol}

\usepackage{microtype}
\usepackage{graphicx}
\usepackage{subcaption}
\usepackage{booktabs} 
\usepackage{xcolor}
\usepackage[nolist]{acronym}
\usepackage{array, multirow}
\usepackage{svg}
\usepackage{lipsum}
\usepackage{amsmath}
\usepackage{pifont}
\usepackage{centernot}

\usepackage{amsthm}

\usepackage{algorithm}
\usepackage{algpseudocode}

\theoremstyle{definition}
\newtheorem{definition}{Definition}[section]
\newtheorem{proposition}{Proposition}[section]
\newtheorem{corollary}{Corollary}[section]

\newcommand\independent{\mathpalette\independenT{\perp}}
\def\independenT#1#2{\mathrel{\rlap{$#1#2$}\mkern2mu{#1#2}}}

\newcommand{\notindependent}{\mathrel{\centernot{\independent}}}

\title{Concept Drift from a Causal Perspective}

\author{\name Eduardo V. L. Barboza \email eduardo.lima-barboza.1@ens.etsmtl.ca \\
      \addr LIVIA, École de Technologie Supérieure
      \AND
      \name Jean Paul Barddal \email paul.jean@pucpr.br \\
      \addr Graduate Program in Informatics (PPGIa), Pontifícia Universidade Católica do Paraná (PUCPR)
      \AND
      \name Robert Sabourin \email robert.sabourin@etsmtl.ca\\
      \addr LIVIA, École de Technologie Supérieure 
      \AND
      \name Rafael M. O. Cruz \email rafael.menelau-cruz@etsmtl.ca\\
      \addr LIVIA, École de Technologie Supérieure 
      }

\def\month{MM}  
\def\year{YYYY} 
\def\openreview{\url{https://openreview.net/forum?id=XXXX}} 

\begin{document}

\maketitle

\begin{acronym}[TDMA]
    \acro{ML}{Machine Learning}
    \acro{IRT}{Item response theory}
    \acro{iid}{independent and identically distributed}
    \acro{DAG}{Directed Acyclic Graph}
    \acro{DAGs}{Directed Acyclic Graphs}
    \acro{IoT}{Internet of Things}
    \acro{SCM}{Structural Causal Model}
    \acro{SCMs}{Structural Causal Models}
    \acro{LTMs}{Large Tabular Models}
    \acro{LLMs}{Large Language Models}
    \acro{GReaT}{Generation of Realistic Tabular data}
    \acro{CaDrift}{\emph{Causal Drift Generator}}
    \acro{EWMA}{exponentially weighted moving average}
    \acro{OWDSG}{Open World Data Stream Generator with Concept Non-stationarity}
    \acro{ACF}{Autocorrelation Function}
    \acro{MMD}{Maximum Mean Discrepancy}
    \acro{HT}{Hoeffding Tree}
    \acro{ARF}{Adaptive Random Forest}
    \acro{DDM}{Drift Detection Method}
    \acro{ADWIN}{Adaptive Windowing}
    \acro{MDDM}{McDiarmid Drift Detection Method}
    \acro{UDD}{Uncertainty Drift Detector}
    \acro{PU-index}{Prediction Uncertainty Index}
    \acro{CD-NOD}{Causal discovery from nonstationary/heterogeneous data}
    \acro{LIN}{Latent Intervened Non-stationary learning}
    \acro{ICP}{Invariant Causal Prediction}
    \acro{IRM}{Invariant Risk Minimization}
    \acro{REx}{Risk Extrapolation}
    \acro{ICM}{Independent Causal Mechanism}
    \acro{SMS}{Sparse Mechanism Shift}
\end{acronym}

\begin{abstract}
Concept drift is a common phenomenon in real-world data streams, in which changes in the data-generating distribution can degrade predictive model performance. 
Most existing definitions characterize drift as changes in the joint distribution $P(\mathbf{x}, y)$, without distinguishing which component of the data-generating process has changed. 
In this work, we introduce a causal perspective on concept drift based on Structural Causal Models (SCMs). 
We propose a taxonomy that categorizes drift events by their causal origin, including changes in exogenous variables, endogenous mechanisms, confounders, and target-generating processes. 
Building on this framework, we develop an SCM-based data stream generator that simulates controlled mechanism-level drift events. 
Our experiments empirically characterize the distributional effects of each drift type and show that drifts with different causal origins induce distinct patterns of distribution shift and predictive behavior.
Furthermore, by integrating causal discovery methods, we use our framework to construct data streams grounded in real-world dependency structures, enabling more realistic and informative evaluation scenarios. 
We also demonstrate that leveraging the generated data can improve downstream performance.
These results highlight the importance of accounting for causal structure when studying and evaluating adaptive learning methods, and establish a foundation for causally-aware evaluation in non-stationary environments.
\end{abstract}

\section{Introduction}
\label{sec:intro}

Concept drift is a pervasive challenge in real-world data streams, where the statistical properties of data evolve over time \citep{lu2019, hinder2024a}. 
Such changes can significantly degrade the performance of predictive models deployed in dynamic environments, including census analysis \citep{Chakrabarty2018}, fraud detection \citep{Hernandez2024}, and social media analysis \citep{yogi2024}. 
As a result, a large body of research has focused on detecting, characterizing, and adapting to concept drift in streaming settings \citep{barboza2025, paim2025, Kurian2024}.

Most existing definitions characterize concept drift as any change in the joint distribution $P(\mathbf{x},y)$ \citep{gama2014survey, lu2019, hinder2024a}. 
While this probabilistic view captures a broad family of distribution shifts, it does not reveal which components of the data-generating process have changed.
Consequently, drift events stemming from fundamentally different causes may appear indistinguishable at the level of $P(\mathbf{x},y)$, even though their implications for learning and adaptation can be substantially different.

This limitation is closely related to challenges studied in out-of-distribution (OOD) generalization \citep{arjovsky2020invariantriskminimization}.
A growing body of work argues that robust generalization to distribution shifts requires identifying invariant causal mechanisms rather than relying on spurious correlations that may change across environments \citep{arjovsky2020invariantriskminimization, scholkopf2021towardCRL, eastwood2022QRM}. 
A classic example is that of an image classifier trained to recognize cows, but that inadvertently relies on the presence of green pastures \citep{arjovsky2020invariantriskminimization}. 
Such a model may fail on images of cows standing on beaches, not because the concept has changed, but because the causal structure that generated the data differs across environments.

We can build this intuition about mechanism changes using the graph in Figure \ref{fig:graph-treatment}. 
Consider that $T$ is a treatment variable corresponding to a drug dosage, and the outcome $Y$ is the effect of the dosage on a patient. 
The confounder $C$ (e.g., patient demographics) creates spurious correlations by influencing both the dosage $T$ and the outcome $Y$. 
With continued use, a patient may develop physiological resistance to the drug \citep{English2010}, which characterizes a change in the cause--effect relationship between $T$ and $Y$.

\begin{figure}
    \centering
    \includegraphics[width=0.2\linewidth]{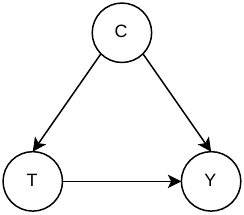}
    \caption{A graph exemplifying a treatment-outcome relationship with a confounder variable.}
    \label{fig:graph-treatment}
\end{figure}

These observations suggest that concept drift is fundamentally a \emph{causal} phenomenon: it arises from changes in the structural mechanisms that generate the data. 
Within the framework of \acp{SCM} \citep{peters2017elements}, data are generated by a set of structural equations defining how each variable depends on its causes.
From this perspective, drift corresponds to changes in one or more components of this system, such as the distribution of exogenous variables, the functional mechanisms relating variables, or the structure of the causal graph itself.
Different types of mechanism changes may induce similar distributional shifts in $P(\mathbf{x},y)$, yet they may require fundamentally different adaptation strategies.
Existing approaches categorize drift through statistical discrepancies or model-dependent signals without distinguishing between changes in the underlying data-generating mechanism.
As a result, they provide limited insight into what type of drift occurred and how learning algorithms should respond.

In this work, we propose a causal taxonomy of concept drift grounded in \acp{SCM}. 
We categorize drift events according to their causal origin, distinguishing between exogenous drift, confounder drift, endogenous drift, target drift, and structural drift. 
Each category corresponds to a specific type of change in the underlying data-generating mechanisms.
Such categorization sets a foundation for research in concept drift through the causal perspective.
To the best of our knowledge, this is the first work to explicitly define different sources of concept drift from the causal lens.

Building on this taxonomy, we present an \ac{SCM}-based data stream generator that enables controlled simulation of mechanism-level drift events, which we call \ac{CaDrift}.
Unlike existing synthetic generators, which typically simulate drift by arbitrarily modifying decision boundaries or feature centroids \citep{street2001, bifet2009a, komorniczak2025}, our approach models drift as changes in structural mechanisms of the data-generating process, and simulates time dependence.


Finally, we empirically characterize the distributional effects of each drift type using \ac{CaDrift}.
Our experiments show that drift events with different causal origins induce distinct patterns of change in marginal and conditional distributions, as well as differing impacts on predictive performance.
Furthermore, we show that leveraging \ac{CaDrift} for data augmentation improves downstream performance on real-world datasets.
These results highlight the importance of reasoning about causal mechanisms when studying concept drift and evaluating adaptive learning methods.
We argue that a causal perspective enables more principled analysis of distributional changes and can inform the development of more robust learning algorithms under non-stationarity \citep{scholkopf2021towardCRL}.
\ac{CaDrift}'s source code is available in our GitHub repository\footnote{https://github.com/eduardovlb/CaDrift.}.

Our main contributions are as follows:
\begin{itemize}
    \item We introduce a causal taxonomy of concept drift grounded in structural causal models.
    \item We develop a synthetic data stream generator capable of simulating mechanism-level drift events and temporal dependencies.
    \item We empirically validate the proposed taxonomy, showing that different drift types induce qualitatively different effects on marginal and conditional distributions, as well as on downstream model performance.
    \item We demonstrate the practical utility of our framework by applying it to data augmentation, showing improved performance in streaming scenarios.
\end{itemize}
\section{Background}
\label{sec:theory}

\textbf{Notation.} Let $\mathbf{x} = (X_1,\dots,X_d) \in \mathcal{X} \subseteq \mathbb{R}^d$ denote a $d$-dimensional feature vector, and let $y \in \mathcal{Y}$ denote the target variable (or label). A data stream is defined as an ordered sequence of instances 

\[S=(\mathbf{x}_1,y_1),(\mathbf{x}_2,y_2), \dots,\]

where $(\mathbf{x}_t, y_t)$ corresponds to the observation arriving at time step $t$.
We denote by $P^{(t)}(\mathbf{x},y)$ the joint data-generating distribution at time $t$, with corresponding marginals $P^{(t)}(\mathbf{x})$ and conditionals $P^{(t)}(y \mid \mathbf{x})$.

\textbf{Concept Drift.} \emph{Concept drift} \citep{gama2014survey} is a prevalent phenomenon in data stream mining, characterized by changes in the data distribution over time. Formally, drift occurs when $P^{(t)}(\mathbf{x},y) \neq P^{(t+\delta)}(\mathbf{x},y)$, for some $\delta>0$ \citep{lu2019, hinder2024a}. As a convention, we denote data distributions at different moments in time as $P'(\mathbf{x},y)$.

\emph{Concept drift} is usually divided into two types \citep{gama2014survey, lu2019}: \textit{real concept drift}, also known as \textit{distributional shift}; and \textit{virtual concept drift}, or \textit{covariate shift}.
A \textit{distributional shift} happens when $P(y\mid \mathbf{x})\neq P'(y\mid \mathbf{x})$ \citep{lu2019}. 
This implies that the relationship between features and label evolves over time. 
Consequently, a model trained under distribution $P(\mathbf{x},y)$ may become suboptimal under $P'(\mathbf{x},y)$, even if the feature distribution remains unchanged. 
In this case, the predictive function itself must adapt.

\emph{Covariate Shift} happens when $P(\mathbf{x}) \neq P'(\mathbf{x})$, i.e., the data distribution changes in the feature space but the posterior probability $P(y \mid \mathbf{x})$ remains unaffected \citep{lu2019}. 
As an example, think of an object detection model that has been trained to detect cars. 
If this model were trained using data only from sunny days, it would never see cars in rainy or snowy conditions. 
However, the ``true'' concept definition of what a car is remains unchanged regardless of weather conditions.
Other types of \emph{concept drift} are usually subtypes of either \emph{distributional} or \emph{covariate shift}.

\textbf{Structural Causal Models.} \acp{SCM} \cite{peters2017elements} are defined as follows:

\begin{definition}[Structural Causal Model]\label{def:scm}
    A \emph{structural causal model} (SCM) $\mathcal{M}$ over a set of variables
    $V = \{X_1, \dots, X_d\}$ consists of a collection of structural assignments
    \begin{equation}
        X_i := f_i(\mathrm{pa}_i, U_i), \quad i = 1, \dots, d,
    \end{equation}
    where $\mathrm{pa}_i \subseteq V \setminus \{X_i\}$ denotes the set of parents of
    $X_i$ in the associated causal graph, and $U_i$ are jointly independent exogenous noise
    variables such that $U_i \independent U_j, \forall i\neq j$.
\end{definition}

\acp{SCM} allow us to model complex cause--effect relationships between variables and the target, guided by deterministic effect mapping functions $f_i$. 

\textbf{Interventions.} With \acp{SCM}, we can reason about the effects of interventions on variables in the causal graph. 
Broadly, interventions can be categorized into two types: \emph{hard interventions} and \emph{soft} (or \emph{structural}) interventions. 
Hard interventions correspond to forcibly assigning a value to a variable, thereby removing the influence of its parents and effectively cutting all incoming edges in the causal graph.
Considering the original graph structure in Figure \ref{subfig:graph-no-int}, a hard intervention is exemplified in Figure \ref{subfig:graph-hard-int}, where we apply a hard intervention to the feature $X_2$.
This is formalized using Pearl’s do-notation \citep{pearl2009causality}: $P(y \mid \text{do}(X_2 = x))$, where the do-operator indicates that the variable $X_2$ is set to the value $x$, independently of its usual causal mechanisms.

\begin{figure}[!b]
    \centering
    \subfloat[Original graph structure.]{\label{subfig:graph-no-int}
        \includegraphics[width=0.3\linewidth]{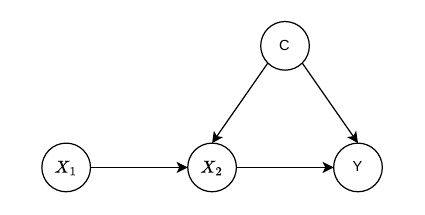}
    }\hspace{3mm}
    \subfloat[Example of a hard intervention on the node $X_2$.]{\label{subfig:graph-hard-int}
        \includegraphics[width=0.3\linewidth]{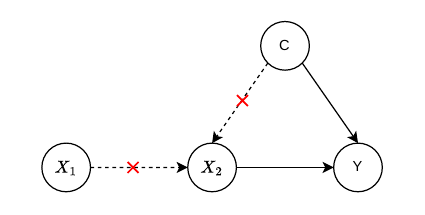}
    }\hspace{3mm}
    \subfloat[Example of a soft intervention on the node $X_2$.]{\label{subfig:graph-soft-int}
        \includegraphics[width=0.3\linewidth]{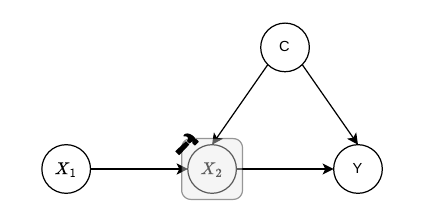}
    }
    \caption{Graphs exemplifying hard and soft interventions on the node $X_2$}
    \label{fig:example-intervention}
\end{figure}

With soft interventions, the causal edges are not cut off from the graph. 
Instead, we modify the structural assignment of a variable, i.e., we replace $f_i$ with a different function $f_i'$, while keeping the same set of parents, as illustrated in Figure \ref{subfig:graph-soft-int}. 
This results in a new data-generating process where the mechanisms relating causes to effects have changed, but the underlying causal graph remains intact.

The framework of \acp{SCM} allows us to reinterpret concept drift through a causal lens. 
In this view, different types of drift correspond to changes in specific components of the causal model. 
For instance, covariate shift can be associated with (soft) interventions on upstream variables (i.e., causes of $\mathbf{x}$), which modify $P(\mathbf{x})$ while leaving the conditional mechanism $P(y \mid \mathbf{x})$ unchanged. 
On the other hand, distributional shift can be understood as interventions on the structural mechanism generating $y$, directly affecting $P(y \mid \mathbf{x})$. 
We explore this further in the next section.

\section{Concept Drift from a Causal Perspective}
\label{sec:drift-causal}

In this work, we consider that concept drift may originate from either observable or unobservable (latent) factors. 
We define \emph{explicit concept drift} as changes affecting observable variables, since these directly alter the cause--effect relationships among the observed components of the data-generating process.

In contrast, \emph{latent concept drift} originates from unobserved variables that are not accessible to the learner. Although these changes occur in latent factors, they can still influence observable associations indirectly. For example, price fluctuations or user preferences may be driven by numerous unmeasured factors, making it impractical to fully specify all relevant variables in real-world environments.

Both \emph{explicit} and \emph{latent drift} can induce downstream changes in the causal graph, altering the joint distribution $P(\mathbf{x},y)$ in different manners depending on which node has drifted.
Within a Structural Causal Model (\ac{SCM}), concept drift occurs when either the distribution of exogenous variables or the mechanisms governing endogenous variables change. Formally, this corresponds to $\mathcal{M}^{(t)} \neq \mathcal{M}^{(t+\delta)}$, for any $\delta>0$.

This formulation reflects the idea that, when the underlying data-generating process evolves, the corresponding \ac{SCM} must also change. We categorize different types of concept drift within the \ac{SCM} framework according to which variables are affected by the drift. The categories can be found in Figure~\ref{fig:causal-taxonomy}. 
Our explanations highlight how different forms of drift emerge through alterations to the cause‑‑effect relationships encoded in the original graph.


\begin{figure}[!b]
    \centering
    \includegraphics[width=0.8\linewidth]{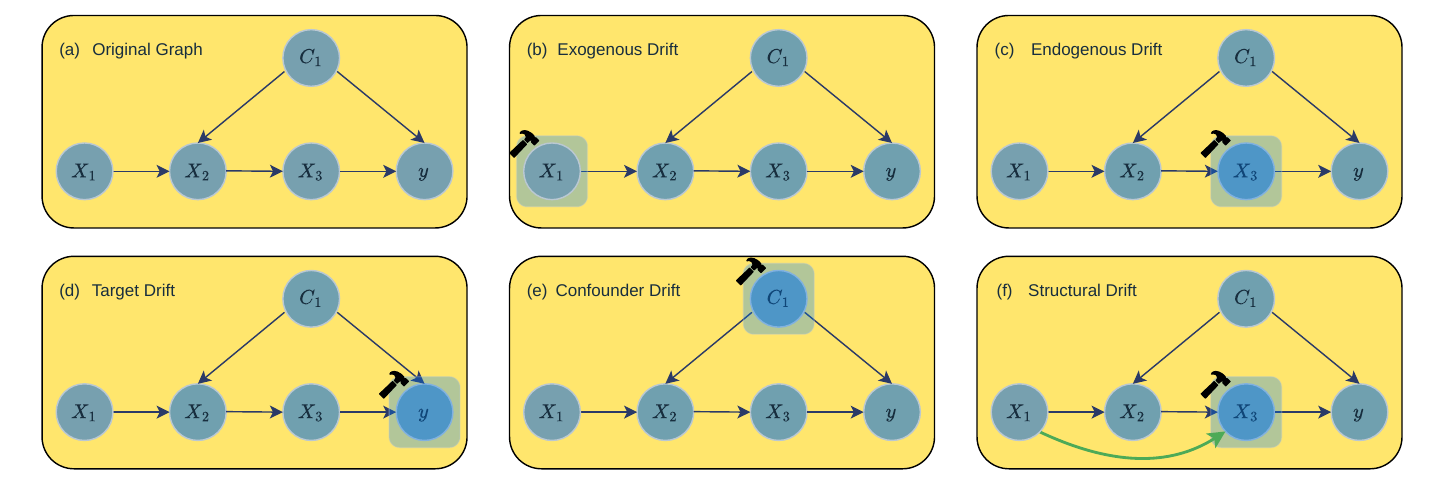}
    \caption{Causal taxonomy of concept drift.}
    \label{fig:causal-taxonomy}
\end{figure}



\emph{Exogenous drift} occurs when the distribution of exogenous variables changes over time, i.e., $P(U) \neq P'(U)$, where $U$ denotes an exogenous variable in the \ac{SCM}. 
In practice, exogenous variables are typically unobserved. 
Therefore, we focus on drift events that affect observable root nodes, whose values are directly determined by exogenous inputs. 
Modeling drift at root nodes allows us to represent changes in upstream environmental factors while keeping the structural mechanisms between variables unchanged.

This type of drift resembles \emph{virtual concept drift}, or \emph{covariate shift}. 
The graph representation of \emph{exogenous drift} is illustrated in Figure~\ref{fig:causal-taxonomy}b, where the mechanism sampling the root node $X_1$ changes.
Importantly, the cause--effect relationships between variables remain intact.

\emph{Endogenous drift} occurs when the mechanism of internal, or endogenous variables, changes, or in other words, there is a change in how one variable affects another. 
To define this mathematically, we should consider the effect function of a node $X_i$: $f^{(t)}_i(pa_i)$, such that $pa_i$ are the parents of the node $X_i$.
\emph{Endogenous drift} takes place when the function $f_i(\cdot)$ changes. 
Hence, $f^{(t)}_i(pa_i) \neq f^{(t+\delta)}_i(pa_i)$. 
This is illustrated in Figure \ref{fig:causal-taxonomy}c.
This type of drift produces a downstream effect in the causal graph, which produces effects on the marginal distribution of the $X_i$-th node descendants.

Essentially, when $P(X_i\mid pa_i)$ changes, the perturbation naturally propagates to the descendants. 
By the law of total probability, both \emph{exogenous} and \emph{endogenous drifts} can change the marginal distribution:

\begin{equation}
    P(X_j) = \sum_{pa_j}P(X_j\mid pa_j)P(pa_j),\ \  \forall X_j \in \text{descendants}(X_i),
\end{equation}

\noindent because either the conditional mechanism $P(X_i \mid pa_i)$ changes (\emph{endogenous drift}) or the upstream marginal $P(pa_i)$ changes (\emph{exogenous drift}), and both propagate through the factorization of the joint distribution.

\emph{Endogenous drift} potentially changes $P(y \mid a)$, where $a$ is the set of ancestor nodes of $X_i$, such that $X_i$ is the drifted node. 
In contrast, the structural mechanism of the target itself remains invariant, assuming that $f_y(pa_y)$ is unchanged. 
In other words, the conditional distribution of the target given all of its direct causes, denoted as $P(y\mid pa_y)$, is kept intact.
For a descendant $X_j$ of $X_i$ (with $X_j \neq y$), $P(y\mid X_j)$ may or may not change, depending on whether $X_j$ blocks every directed path from $X_i$ to $y$.
Concretely, if $X_j$ is a cut vertex separating $X_i$ to $y$ (all paths from $X_i$ to $y$ pass through $X_j$), then $P(y\mid X_j)$ remains invariant.
Otherwise, it may change.
This leads us to the following proposition:

\begin{proposition}[Propagation of Endogenous Drift]\label{prop:endogenous}
    Consider an SCM $\mathcal{M}$ and suppose an \emph{endogenous drift} at node $X_i$, such that the structural mechanism $f_i$ changes while all other mechanisms remain invariant. Assume in particular that the target mechanism $f_y$ is unchanged. Then: 
    \begin{enumerate}
        \item The structural conditional distribution $P(y \mid pa_y)$ remains invariant.
        \item For any ancestor $a \in \text{ancestors}(X_i)$, the conditional distribution $P(y \mid a)$ may change across concepts due to the altered mediation through $X_i$.
        \item Let $X_j$ be a descendant of $X_i$ such that $X_j$ \emph{d-separates} $X_i$ from $y$ -- that is, every path between $X_i$ and $y$ is blocked by conditioning on $X_j$. 
        Then $P(y \mid X_j) = P'(y \mid X_j)$. 
        Conversely, if $X_j$ does not d-separate $X_i$ from $y$, then $P(y\mid X_j)$ may change across concepts.
    \end{enumerate}
\end{proposition}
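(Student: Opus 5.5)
The plan is to work with the Markov factorization that the SCM of Definition~\ref{def:scm} induces. Before the drift we have $P(V) = \prod_k P(X_k \mid pa_k)$, where $V$ includes $y$ and $X_k$ ranges over all nodes. After the drift the factorization is the same except that the single factor $P(X_i \mid pa_i)$ becomes $P'(X_i \mid pa_i)$. The three claims then reduce to tracking which factors enter each conditional of interest.

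For item 1, the noise $U_y$ is independent of $pa_y$, because $pa_y$ is a function of noises of nodes other than $y$ and the $U_k$ are jointly independent. Hence $P(y \mid pa_y = p)$ is the law of $f_y(p, U_y)$. Both $f_y$ and $P(U_y)$ are unchanged by hypothesis, so the factor is invariant. Positivity is needed only so that conditioning on any $p$ in the support is well defined. If the support of $pa_y$ grows under the new concept, I would state the claim on the common support.

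Item 2 is a possibility claim, so I will prove it with an explicit example. Take the chain $a \to X_i \to y$ with linear Gaussian mechanisms $X_i := \alpha a + U_i$ and $y := \beta X_i + U_y$. Changing $\alpha$ to $\alpha'$ moves $E[y \mid a] = \beta\alpha a$ to $\beta\alpha' a$. The general mechanism is the marginalization
\[
P(y \mid a) = \sum_{x_i} P(y \mid x_i, a)\, P(x_i \mid a).
\]
The mediating factor $P(x_i \mid a)$ contains the changed mechanism and is therefore not invariant in general.

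For item 3, I will use the global Markov property. If $X_j$ d-separates $X_i$ from $y$ in $\mathcal{G}$, then $y \independent X_i \mid X_j$, so $P(y \mid X_j) = \sum_{x_i} P(y \mid x_j, x_i)\,P(x_i \mid x_j)$ collapses to an expression in which $P(y \mid x_j, x_i)$ does not depend on $x_i$. Invariance needs more than this: I must show that $P(y \mid x_j)$ does not involve the modified factor at all. The cleanest route is the selection/regime-node construction from transportability. Augment the graph with an indicator $S \to X_i$ that selects between $f_i$ and $f_i'$. Since $S$'s only edge enters $X_i$, every path from $S$ to $y$ extends a path from $X_i$ to $y$ and passes through $X_i$ as a non-collider. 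So $X_j$ d-separating $X_i$ from $y$ implies $X_j$ d-separates $S$ from $y$, which gives $P(y \mid X_j, S{=}0) = P(y \mid X_j, S{=}1)$.

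I expect the main obstacle to be a subtlety in this step. A path from $S$ could have $X_i$ as a collider when combined with $X_i$'s own parents, since the path $S \to X_i \leftarrow pa$ is opened by conditioning on the descendant $X_j$. In that case, blocking every path between $X_i$ and $y$ must be used to show that continuations from $pa_i$ to $y$ are also blocked given $X_j$. My first attempt will be a case analysis on the first edge after $X_i$. If the path leaves $X_i$ by an edge, the assumed d-separation blocks it directly. If it enters $pa_i$ and is open given $X_j$, then the subpath from $pa_i$ onward, prefixed with $X_i \leftarrow pa_i$, is a path from $X_i$ to $y$ with $pa_i$ a non-collider. Every interior node of that new path except $pa_i$ sits in the same position as before, and $pa_i$ itself is a non-collider. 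If $pa_i$ is not $X_j$, this new path would be open, which contradicts the hypothesis. If $pa_i = X_j$, then $X_j$ would be an ancestor of $X_i$ in a DAG, contradicting the assumption that $X_j$ is a descendant of $X_i$. If this case analysis fails, I would fall back on the stronger assumption suggested by the text, namely that $X_j$ is a cut vertex on all directed paths.

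Finally, the converse in item 3 ("may change") is again a possibility claim. I will prove it by example: take $X_i \to y$ and $X_i \to X_j$ with linear mechanisms, and change the variance of $U_i$. This alters the regression of $y$ on $X_j$.
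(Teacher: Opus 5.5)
Your proposal is correct, and for item 3 it takes a genuinely different route from the paper. The paper invokes $y\independent X_i\mid X_j$ in each concept and computes $P(y\mid X_j)=P(y\mid X_j)\sum_{x_i}P(x_i\mid X_j)$, concluding that the result does not depend on $f_i$. That identity holds inside each concept separately and never compares $P$ with $P'$. Given the two conditional independences, invariance of $P(y\mid X_j)$ is equivalent to invariance of $P(y\mid X_i,X_j)$, which the paper does not argue. You flag exactly this (``invariance needs more than this''). Your regime-node construction supplies the missing cross-concept step: adding $S\to X_i$ turns the comparison of $P$ and $P'$ into the single statement $S\independent y\mid X_j$, and your case analysis proves it. A path $S\to X_i\to\cdots$ inherits the blocking of its tail from $X_i$ to $y$. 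For $S\to X_i\leftarrow p\cdots y$, the collider at $X_i$ is opened by its descendant $X_j$, but the tail $X_i\leftarrow p\cdots y$ has the same status at every interior node ($p$ is a non-collider in both). An open path from $S$ would therefore give an open path between $X_i$ and $y$ given $X_j$, and $p=X_j$ is excluded by acyclicity.

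Two small repairs follow. First, delete your earlier sentence saying $X_i$ is always a non-collider. Second, drop the fallback. Lying on every directed path is already implied by d-separation, and on its own it would not suffice. Take $W\to X_i$, $W\to y$, $X_i\to X_j\to y$: the path $S\to X_i\leftarrow W\to y$ is open given $X_j$, and in a linear-Gaussian instance $E[y\mid X_j]$ moves with the coefficient of $W$ in $f_i$. The collider case is exactly why full d-separation is the right hypothesis.

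For item 2 and the converse in item 3, the paper argues generically from the marginalization $\sum_{x_i}P(y\mid x_i,\cdot)\,P(x_i\mid\cdot)$. That explains why invariance is not forced, while your explicit linear-Gaussian witnesses actually establish the ``may change'' claims. In the converse example, give $X_i$ a parent (or express the variance change as a rescaling inside $f_i$). Otherwise the perturbation is an exogenous rather than an endogenous drift in the paper's taxonomy. Item 1 matches the paper, made more precise by your noise-independence argument and the common-support caveat.
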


The proof can be found in Appendix \ref{app:proofs}.




\emph{Target drift} (Figure~\ref{fig:causal-taxonomy}d) is a special case of \emph{endogenous drift} in which the affected node is the target variable itself, i.e., $f_y(pa_y) \neq f_y'(pa_y)$ -- hence, we discard the second assumption in Proposition~\ref{prop:endogenous}.
This causes the posterior probability conditioned on the target node changes for every feature, i.e., $P(y\mid \mathbf{x}) \neq P'(y\mid \mathbf{x})$, such that $\mathbf{x}=\{X_1,\dots,X_d\}$, and $d$ is the number of nodes in the graph that are not the target variable. 
If the target $y$ is not a leaf, it also changes $P(\mathbf{x})$.
\emph{Target drift} produces more drastic changes to the decision process, as the change takes place directly on how direct parents of the target variable influence the outcome:

\begin{corollary}[Target Drift]
    Suppose an \emph{endogenous drift} occurs at the target node $y$, changing its structural mechanism $f_y$. Let $\mathbf{x}$ be the set of all other observable variables. If $y$ is a leaf node (i.e., it has no descendants), then the conditional distribution $P(y \mid \mathbf{x})$ changes across concepts, while the marginal distribution of the features $P(\mathbf{x})$ remains invariant. Conversely, if $y$ is not a leaf node, $P(\mathbf{x})$ may also change, as the drift propagates to any descendants of $y$ contained within $\mathbf{x}$.
\end{corollary}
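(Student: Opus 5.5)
The plan is to use the Markov factorization induced by the SCM and treat the drift at $y$ as a soft intervention that replaces only the factor $P(y \mid pa_y)$ by $P'(y \mid pa_y)$, where $P'$ is induced by $f_y'$. Under the assumptions of Definition~\ref{def:scm}, namely an acyclic graph and jointly independent noises, the joint distribution factorizes as
\[
P(\mathbf{x}, y) = P(y \mid pa_y) \prod_{X_k \in \mathbf{x}} P(X_k \mid pa_k).
\]
After the drift, only the first factor is replaced. Every other factor $P(X_k \mid pa_k)$ is unchanged, because its mechanism $f_k$ and its noise $U_k$ are untouched.

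\textbf{Leaf case.} If $y$ has no descendants, then $y \notin pa_k$ for every $X_k \in \mathbf{x}$. Summing the factorization over $y$ then gives $P(\mathbf{x}) = \prod_k P(X_k \mid pa_k)$, using $\sum_y P(y \mid pa_y) = 1$. This expression contains no term involving $f_y$, so $P(\mathbf{x}) = P'(\mathbf{x})$. For the conditional, divide the joint by $P(\mathbf{x})$ to get $P(y \mid \mathbf{x}) = P(y \mid pa_y)$, since $pa_y \subseteq \mathbf{x}$. Likewise $P'(y \mid \mathbf{x}) = P'(y \mid pa_y)$. A change in $f_y$ that alters the induced conditional therefore gives $P(y \mid \mathbf{x}) \neq P'(y \mid \mathbf{x})$ on a set of positive measure.

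\textbf{Main obstacle.} This step needs care about what ``$f_y$ changes'' means. Two distinct functions can induce the same conditional distribution, for example after a measure-preserving reparametrization of $U_y$. I would therefore read the hypothesis distributionally, as $P(y \mid pa_y) \neq P'(y \mid pa_y)$ on a set of $pa_y$ values of positive probability. That reading is also implicit in the paper's definition of endogenous drift. The continuous case is identical, with integrals and densities in place of sums.

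\textbf{Non-leaf case.} Here the statement only says $P(\mathbf{x})$ \emph{may} change, so it suffices to show that it can change, and to explain why. Write
\[
P(\mathbf{x}) = \sum_y P(y \mid pa_y) \prod_k P(X_k \mid pa_k),
\]
where some $X_k$ now have $y \in pa_k$, so the modified factor is no longer marginalized away. This is the same propagation argument already used for endogenous drift: the law of total probability applied to the descendants of $y$, as in the equation preceding Proposition~\ref{prop:endogenous}. To show a change is genuinely possible, I would give a minimal example. Take $y := U_y \sim \mathrm{Bern}(p)$ changed to $\mathrm{Bern}(p')$, and $X_1 := y + U_1$. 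The law of $X_1$ then shifts, so $P(\mathbf{x}) \neq P'(\mathbf{x})$. Finally, the variables in $\mathbf{x}$ that are not descendants of $y$ keep their joint marginal. This follows by marginalizing the factorization over the descendants of $y$ in topological order, which leaves only invariant factors. It localizes the change to the descendants of $y$, as the statement asserts.
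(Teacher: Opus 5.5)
Your proposal is correct and is essentially the paper's own reasoning made rigorous: the paper gives no separate proof of this corollary, but justifies it in the text as endogenous drift at the node $y$, whose altered factor $P(y \mid pa_y)$ propagates to descendants through the factorization of the joint (the law-of-total-probability argument preceding Proposition~\ref{prop:endogenous}), which is exactly your marginalize-out-$y$ computation in the leaf case and your propagation argument otherwise. Besides your distributional reading of ``$f_y$ changes'', you should also state explicitly that the step $P(y \mid \mathbf{x}) = P(y \mid pa_y)$ needs $pa_y \subseteq \mathbf{x}$, i.e.\ no latent parents of $y$ (consistent with the paper's gloss $\mathbf{x} = \{X_1,\dots,X_d\}$ as all non-target nodes), because with a latent parent the change in $P(y \mid pa_y)$ can average out and the first claim can fail.
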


\emph{Confounder drift} occurs when the drift affects a variable that lies on a backdoor path \citep{pearl2009causality} to the target node. 
In Figure~\ref{fig:causal-taxonomy}e, the mechanism sampling the confounder $C_1$ changes, which might induce a spurious change in the observed association between the node $X_3$ and the target $y$. 
Since $C_1$ induces a spurious (non-causal) association between $X_2$ and $y$, changes in its mechanism alter the observed correlations without modifying the direct causal mechanism of the target, given its true causes.

When such confounders are unobserved, no learner can explicitly condition on an appropriate adjustment set to block the backdoor path \citep{pearl2009causality}. 
Consequently, models may adapt to shifts in spurious correlations rather than to genuine changes in the target-generating mechanism. 
Even when confounders are observed, most stream learners optimize predictive performance without causal constraints, and may therefore rely on associations induced by the backdoor path. 

A practical example arises in medical diagnosis. 
Suppose $X$ represents a treatment and $y$ patient recovery, while an unobserved confounder $C$ corresponds to disease severity. 
Patients with more severe conditions are both more likely to receive the treatment and less likely to recover. 
If the distribution of disease severity changes over time, the observed association between treatment and recovery may change even though the causal effect of the treatment remains unchanged.

From a causal standpoint, confounder drift alters observed associations while leaving the causal effect of true parents on the target invariant.
However, when the confounder is unobserved, this invariance is not statistically identifiable from observational data alone \citep{pearl2009causality}.

\begin{proposition}[Effects of Confounder Drift]\label{prop:confounder-drift}
    Consider an SCM where an exogenous confounder $C$ and a variable $X$ are direct parents of the target $y$, forming a backdoor path $X \leftarrow C \rightarrow y$. 
    Let $Z = pa_y \setminus \{X, C\}$ denote the set of all other parents of $y$. 
    Suppose a drift occurs such that the marginal distribution of the confounder changes ($P(C)\neq P'(C)$), while all other structural mechanisms in the SCM remain invariant. 
    Hence:
    \begin{enumerate}
        \item The observable conditional distribution $P(y \mid X)$ may change across concepts due to the altered spurious association.
        \item The conditional interventional distribution, denoted by $P(y\mid \text{do}(X), c)$, remains strictly invariant.
    \end{enumerate}
\end{proposition}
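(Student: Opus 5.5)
The plan is to write everything in terms of the factorization of the joint distribution induced by the SCM and then track which factors depend on the drifted mechanism $P(C)$. Since $C$ is exogenous (a root node), the joint over $(C, X, Z, y)$ factorizes, after marginalizing the remaining variables, as
\begin{equation*}
P(c, x, z, y) = P(c)\, P(x \mid c)\, P(z \mid x, c)\, P(y \mid x, c, z).
\end{equation*}
The drift modifies only the factor $P(c)$. All mechanism-derived factors ($P(x\mid c)$, and the target mechanism $P(y \mid pa_y)$) are fixed because the corresponding $f_i$ and noise distributions are unchanged. One caveat must be handled first: $P(z\mid x,c)$ is not itself a structural factor if $Z$ has other ancestors. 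I would therefore work with the full Markov factorization over all variables, $\prod_i P(v_i \mid pa_i)$, in which only $P(c)$ differs between $\mathcal{M}$ and $\mathcal{M}'$, and marginalize afterwards.

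\textbf{Part 1} is an existence statement ("may change"), so I prove it with a witness. First I write
\begin{equation*}
P(y \mid x) = \sum_c P(y \mid x, c)\, P(c \mid x),
\qquad
P(c \mid x) = \frac{P(x \mid c) P(c)}{\sum_{c'} P(x\mid c')P(c')}.
\end{equation*}
Here $P(y\mid x,c)$ is invariant, but the posterior weights $P(c\mid x)$ depend on $P(c)$. I then give a concrete binary example: $C\sim\mathrm{Bern}(p)$, $X := C \oplus U_X$ with $U_X\sim\mathrm{Bern}(0.1)$, $y := X\wedge C$, and $Z=\emptyset$. Computing $P(y=1\mid X=1)$ gives $0.9p/(0.9p+0.1(1-p))$, which differs for $p=0.5$ and $p=0.2$. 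This shows the change genuinely occurs.

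\textbf{Part 2} is the invariance claim. I use the truncated factorization: under $\mathrm{do}(X=x)$, delete the factor $P(x\mid c)$ and fix $X=x$ elsewhere. Then
\begin{equation*}
P(y \mid \mathrm{do}(x), c) = \frac{P(y, c \mid \mathrm{do}(x))}{P(c\mid \mathrm{do}(x))}.
\end{equation*}
Since $C$ is a non-descendant of $X$, $P(c\mid \mathrm{do}(x)) = P(c)$. In the numerator, $P(c)$ appears as a multiplicative factor outside the sum over the remaining variables, because no other factor contains the drifted mechanism. Hence $P(c)$ cancels, and
\begin{equation*}
P(y\mid \mathrm{do}(x), c) = \sum_{\text{rest}} \prod_{V\neq C, X} P(v\mid pa_v)\big|_{X=x,\,C=c},
\end{equation*}
which is built only from invariant mechanisms. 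The result is therefore identical under $\mathcal{M}$ and $\mathcal{M}'$. The same cancellation also holds with the drifted noise distribution viewed as the root mechanism of $C$. I will state the result for $c$ with $P(c)>0$ and $P'(c)>0$, so that the conditioning is well defined.

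The main obstacle is rigor in Part 2 when $Z$ contains descendants of $C$ or of other variables that might themselves depend on $C$ through paths not explicitly listed. The cancellation argument needs only that $C$ is a root and that its marginal is the sole changed factor, so working with the full Markov factorization rather than the four-variable abbreviation resolves this. A secondary point is positivity: for $c$ in the support under only one of the concepts, the conditional is undefined on one side, and this case is excluded by assumption.
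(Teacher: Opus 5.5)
Your proof is correct, and it takes a somewhat different route from the paper's in both parts.

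\textbf{Part~1.} The paper marginalizes over $(C,Z)$ jointly. It uses Bayes' theorem to note that the weights $P(C,Z\mid X)$ depend on $P(C)$, and concludes that $P(y\mid X)$ ``need not be invariant.'' Shifted mixture weights do not by themselves force the mixture to change. Your explicit binary witness is what actually establishes the existence claim.

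\textbf{Part~2.} The paper first uses do-calculus, with $\{X,C\}\cup Z = pa_y$, to replace $P(y\mid \mathrm{do}(x),c,z)$ by the structural kernel $P(y\mid x,c,z)$. It then sums over $z$ and asserts that $P(z\mid \mathrm{do}(x),c)$ is invariant ``since all structural mechanisms are invariant.'' That glosses over the fact that $P(C)$ itself has changed. Your truncated-factorization argument supplies the missing reason. Because $C$ is a root, $P(c)$ factors out of $P(y,c\mid \mathrm{do}(x))$ and cancels against $P(c\mid\mathrm{do}(x))=P(c)$, leaving only invariant kernels.

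\textbf{What each route buys.} Your route avoids both the do-calculus step and the $Z$-decomposition. It handles any dependence of $Z$ on $C$ or on descendants of $X$ without case analysis, and it makes the positivity requirement explicit, which the paper leaves implicit. It also proves more than the proposition asks. You never use that $X$ and $C$ are parents of $y$: when only a root's marginal drifts, every conditional given $C=c$, interventional or observational, is invariant. The paper's route instead keeps the invariant target mechanism $f_y$ in the foreground. That fits the taxonomy's narrative, but it is less self-contained as a proof.
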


The proof referring to the propagation of \emph{confounder drift} can also be found in Appendix \ref{app:proofs}.

\textbf{Remark.} While the conditional causal effect remains invariant under \emph{confoudner drift}, it is worth noting that the marginal causal effect (or average treatment effect) does change under confounder drift. 
The interventional distribution is defined by Pearl's backdoor adjustment \citep{pearl2009causality}:

\begin{equation}
    P(y\mid \text{do}(X))=\sum_c P(y\mid X, C=c)P(C=c).
\end{equation}

Because $P(C)$ shifts to $P'(C)$, the outcome of an intervention $P(y\mid \text{do}(X))$ changes, even though the specific mechanism generating $y$ remains fundamentally unchanged.

\emph{Structural Drift} happens when edges in the causal graph change, i.e., the set of parents that cause a node changes. 
It may involve adding or removing causal edges between variables. 
Mathematically, we write this as $pa^{(t)}_i \neq pa^{(t+\delta)}_i$. 

For example, a new tax policy may begin to affect consumer demand for a product. 
Initially, demand $D$ may depend only on the product price $P$, i.e., $P \rightarrow D$. 
After the policy change, the tax variable $T$ may also directly affect demand, yielding the updated structure $P \rightarrow D \leftarrow T$. 
In this case, the parent set of $D$ changes from $pa_D=\{P\}$ to $pa_D=\{P, T\}$, which characterizes a structural drift event.
Figure \ref{fig:causal-taxonomy}f shows an example in which the node $X_1$ becomes a cause of $X_3$.

Most drift types in our taxonomy can be interpreted as soft interventions on the underlying \ac{SCM}, in which structural mechanisms change while the graph structure remains fixed. 
Structural drift, in contrast, corresponds to interventions that modify the graph itself by altering parent sets.

In Table \ref{tab:drift-types}, we position the drift types defined through the causal perspective in contrast to classic and well-known groups of \emph{concept drift}, from the probabilistic perspective. 

\begin{table}[htb]
    \centering
    \caption{Contrasting classical concept drift taxonomy with concept drift from a causal perspective.}
    \begin{tabular}{c c c}
    \toprule
        Classical concept drift &  & Causal perspective \\
        \midrule
        \multicolumn{3}{c}{$P(X) \neq P'(X)$} \\
        Virtual concept drift & \multirow{2}{*}{$\leftrightarrow$} & \multirow{2}{*}{Exogenous Drift} \\
        Covariate shift &  & \\
        \midrule 
        \multicolumn{3}{c}{$P(y\mid X) \neq P'(y\mid X)$} \\
        \multirow{2}{*}{Real concept drift} & $\leftrightarrow$ & Endogenous drift \\
        \multirow{2}{*}{Distributional shift} & $\leftrightarrow$ & Target drift \\
         & $\leftrightarrow$ & Confounder drift \\
         & $\leftrightarrow$ & Structural drift \\
         \bottomrule
    \end{tabular}
    \label{tab:drift-types}
\end{table}


\textbf{Rate of change.} In addition to drift categorization in terms of its intricate factors, the literature also categorizes drifts according to their rate of change, i.e., drifts may happen abruptly, incrementally, or gradually \citep{lu2019}. 

\emph{Abrupt drift} corresponds to an instantaneous change in the underlying \ac{SCM}, i.e., $\delta = 1$. 
\emph{Incremental} and \emph{gradual drifts} involve a transition period ($\delta > 1$), during which the data-generating process evolves over time.

From a causal perspective, \emph{incremental drift} corresponds to a smooth, continuous change in the parameters of one or more structural equations, such that the \ac{SCM} $\mathcal{M}$ is slightly modified at each time step. 
In contrast, under \emph{gradual drift}, two distinct \acp{SCM} coexist: a source model $\mathcal{M}_o$ and a target model $\mathcal{M}_n$ corresponding to the new concept. 
During the transition period $\delta$, data samples are generated by either $\mathcal{M}_o$ or $\mathcal{M}_n$.

Concept drift might also present recurrence, such as the change of seasons during the year.
Under the \ac{SCM} framework, recurrence is simulated by retrieving a previously observed state of the data-generating process, i.e., by reinstating an earlier \ac{SCM} $\mathcal{M}_{t'}$.
This corresponds to the reactivation of past causal mechanisms, thereby making previously valid cause-and-effect relationships relevant again.

\section{Generating Data Streams with Time-dependent Structural Causal Models}

Building on the \ac{SCM} framework, we simulate time-dependence across generated data samples by using autoregressive (AR) noise, sinusoidal seasonality, and an \ac{EWMA} \citep{roberts1959}.
All of these components induce continuous non-stationarity on synthetic data samples. With them, we introduce \ac{CaDrift}, which generates synthetic time-dependent data streams capable of simulating the causal drift events defined in Section~\ref{sec:drift-causal}.

\begin{definition}[Time-dependent Structural Causal Model]\label{def:td-scm}
    A \emph{time-dependent SCM} $\mathcal{M}$ over a set of variables
    $V = \{X_1, \dots, X_d\}$ is defined by the structural assignments
    \begin{equation}
        X_i := f_i(\mathrm{pa}_i, U_i^{(t)}), \quad i = 1, \dots, d,
    \label{eq:timedependentSCM}
    \end{equation}
    where the noise variables follow an autoregressive process:
    \begin{equation}
        U_i^{(t)} = \rho U_i^{(t-1)} + \epsilon_i^{(t)}, \quad 
        \epsilon_i^{(t)} \sim \mathcal{N}(0, \sigma^2), \quad \rho \in [0,1].
    \end{equation}
    Here, $\rho$ controls the temporal smoothness of the noise, determining the degree of dependence between consecutive samples. Higher values of $\rho$ induce stronger temporal correlation, resulting in smoother transitions over time. The impact of this parameter is further analyzed in the experimental section.
\end{definition}

In practice, this process induces a temporal causal dependence on two subsequent data samples, as shown in Figure \ref{fig:td-scm} -- hence, generated data samples are non-iid. 

\begin{figure}
    \centering
    \includegraphics[width=0.3\linewidth]{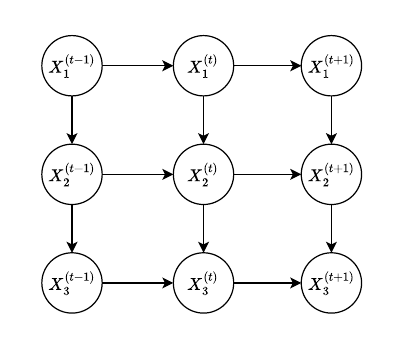}
    \caption{Scheme of a time-dependent SCM.}
    \label{fig:td-scm}
\end{figure}

Next, let us define \ac{EWMA}:

\begin{equation}
    z_t = (1-\alpha)z_{t-1}+\alpha X_t,
    \label{eq:ewma}
\end{equation}

\noindent such that $z_t$ denotes the current average, $\alpha \in [0,1]$ is the smoothing parameter, and $X_t$ the current observation at time $t$. We use \ac{EWMA} at the root nodes to induce temporal correlation in the feature space.
In order to introduce the autoregressive noise $U_i^{(t)}$ on the \ac{SCM} definition, we define time-dependent \ac{SCM}:

Lastly, we incorporate seasonality into the \ac{SCM} framework to capture periodic patterns commonly observed in real-world data streams. 
Seasonality refers to recurring fluctuations that occur at regular intervals, such as daily or yearly cycles.

To model this behavior, we augment each structural assignment with an additive periodic component. 
Specifically, for variables exhibiting seasonality, we define:

\begin{equation}
    X_i^{(t)} := f_i(\mathrm{pa}_i^{(t)}, U_i^{(t)}) + s_i^{(t)},
\end{equation}

\noindent where $s_i^{(t)}$ represents the seasonal component, defined as:

\begin{equation}\label{eq:seasonality}
    s_i^{(t)} = A_i \sin\left( \frac{2\pi t}{T_i} + \phi_i \right).
\end{equation}

\noindent Here, $A_i$ denotes the amplitude of the seasonal effect, $T_i$ the period, and $\phi_i$ an optional phase shift \citep{hyndman2018forecasting}.

By combining autoregressive noise, EWMA smoothing, and seasonal components, the resulting time-dependent \ac{SCM} is capable of generating non-stationary data streams exhibiting both incremental non-stationarity and recurring temporal patterns, in addition to the causal drift events presented in Section~\ref{sec:drift-causal}. 
This enables the simulation of realistic scenarios where both distributional and structural properties evolve over time.

Algorithm \ref{alg:cadrift}, in Appendix \ref{app:pseudocode} shows the pseudocode for CaDrift's synthetic generation process.
As input, \ac{CaDrift} receives a graph $\mathcal{G}$, which can be any \ac{DAG}.
Furthermore, for drift events, it receives a drift schedule $\mathcal{D}$ containing the type of drift, as defined in Section \ref{sec:drift-causal}, the time point $t$ at which it happens, and the drift length $\delta\geq 1$.

\section{Related Work}

\textbf{Concept Drift Adaptation.} Standard approaches typically define concept drift as any change in the joint distribution $P(\mathbf{x}, y)$ \citep{lu2019, hinder2024a}. 
In practice, this broad definition encompasses shifts in the marginal distribution $P(\mathbf{x})$, the conditional distribution $P(y \mid \mathbf{x})$, or both, each of which may impact predictive performance differently.
A large body of work focuses on \emph{error-driven adaptation}, where drift is inferred through changes in model performance.
These methods monitor fluctuations in the prediction error over time \citep{agrahari2022}, triggering adaptation mechanisms such as incremental retraining or ensemble updates \citep{gomes2017, paim2025, barboza2025}.
For instance, the \ac{DDM} \citep{gama2004} monitors the online classification error and raises alarms based on statistically derived thresholds, distinguishing between warning and drift levels.
Similarly, the \ac{ADWIN} detector \citep{Bifet2007} maintains a variable-length sliding window and continuously tests for significant differences between the averages of two sub-windows, providing guarantees on false positive rates.
The \ac{MDDM} \citep{pesaranghader2018mcdiarmid} extends this idea by assigning higher weights to more recent instances and leveraging McDiarmid's inequality to detect changes.
Despite their effectiveness, error-driven methods present notable limitations.
Because they rely on labeled data, their responsiveness is inherently delayed in scenarios where labels are scarce or arrive with latency.

These limitations have motivated the development of \emph{unsupervised} drift detection approaches, which operate directly on the input data distribution.
For example, DD-SCC and DD-KRC \citep{Agrahari2024} measure changes in feature relationships using the Spearman Correlation Coefficient (SCC) and the Kendall Rank Correlation (KRC), respectively.
These methods assume that concept drift manifests as changes in the dependency structure among features, captured by correlation statistics computed over sliding windows.
However, such correlation-based methods may be vulnerable to changes in the underlying causal relationship, such as \emph{confounder drift}, which can trigger false alarms when spurious correlations shift.
Another line of work explores \emph{uncertainty-based} detectors.
Methods such as \ac{UDD} \citep{baier2021} estimate predictive uncertainty from neural networks and monitor its evolution over time as a proxy for drift.
PUDD \citep{Lu_Lu_Liu_Zhang_2025} introduces the \ac{PU-index}, which monitors the classifier's predictive uncertainty to identify concept drift before it manifests as significant drops in accuracy.
However, like error-driven approaches, uncertainty-based methods rely on model-dependent signals that may not fully capture changes in the underlying data-generating process.

Most drift detectors operate by comparing statistics, such as error rates or feature distributions, across consecutive windows.
While effective at identifying distributional changes, these approaches treat the data-generating process as a black box.
\citet{gowerwinter2026windowdilemmaconceptdrift} show that perceived drift may arise from window partitioning artifacts rather than genuine changes in the underlying process.
Moreover, existing detectors primarily indicate \emph{when} drift occurs, but provide limited insight into \emph{where} and \emph{how} the data-generating mechanism has changed.
Taken together, these methods characterize drift through statistical discrepancies or model-dependent signals, without explicitly modeling the causal mechanisms that generate the data.
Our causal taxonomy provides the foundation to address this limitation by explicitly defining drift as mechanistic changes in the underlying system.

\textbf{Causality and Concept Drift.} 
Recent work has begun to study concept drift through a causal lens, leveraging distributional changes to gain insight into underlying data-generating mechanisms.
For example, \citet{komnick2025causalexplanationconceptdrift} propose a framework for post-hoc explanation of drift events, attributing observed performance changes to shifts in specific causal mechanisms and providing actionable insights to users, and \cite{yang2025detecting} considers changes in cause--effect relationships to detect concept drift.
Distributional changes have also been exploited for causal discovery, motivating several extensions of causal discovery algorithms to non-stationary environments.
For instance, \citet{zhang2017cdnod} introduce \ac{CD-NOD}, which exploits distributional changes to recover the underlying \ac{DAG}, under the assumption that such changes correspond to shifts in structural mechanisms.
Similarly, \ac{LIN} considers non-stationary settings with latent interventions, where changes in the data-generating process are not directly observed but can be inferred from distributional variation \citep{liu23lin}.
Moreover, J-PCMCI+ \cite{gunther23ajpcmci} jointly analyzes multiple related time series by introducing context variables that explain differences across environments, while assuming invariant causal mechanisms across datasets.
PCMCI$_\Omega$ \cite{gao2023pcmciomega} also extends PCMCI \cite{runge2019pcmci} to semi-stationary time series, assuming that causal mechanisms vary periodically.
In contrast, JIT-LiNGAM \cite{fujiwara23ajitlingam} learns local linear causal models in the neighborhood of each observation, allowing the recovered graph to adapt continuously to nonlinear and non-stationary dynamics.
Causal-RuLSIF \cite{gao2025causaldiscoverydrivenchangepoint} exploits changes in causal mechanisms through density-ratio estimation to identify causal directions under distribution shifts.
While these approaches use distributional changes to explain, detect drift, or recover causal structure, they do not provide a systematic characterization of different types of drift.
In contrast, our work introduces a causal taxonomy of concept drift, explicitly modeling drift as interventions on \acp{SCM}, enabling controlled analysis of their effects on distributions and learning performance, setting a foundation for linking the fields of causal inference and concept drift.

\textbf{Causal domain generalization.} 
Causal approaches to domain generalization assume that data are generated by an underlying \ac{SCM}, and that certain mechanisms, in particular the target-generating mechanism, remain invariant across environments \citep{arjovsky2020invariantriskminimization, yao2025unifyingcrl}.
Methods such as \ac{ICP} \citep{peters2016icp} and \ac{IRM} \citep{arjovsky2020invariantriskminimization} aim to identify predictors that remain stable under distribution shifts by exploiting these invariances.
Subsequent work has relaxed strict invariance assumptions through risk extrapolation, quantile-based risk minimization, and representation learning approaches \citep{krueger21a_rex, eastwood2022QRM, rosenfeld2021risksinvariantriskminimization, scholkopf2021towardCRL}.
Within our taxonomy, such assumptions correspond to settings where the target-generating mechanism is preserved, such as \emph{exogenous} or \emph{confounder drift}, and, in some cases, \emph{endogenous} and \emph{structural drift}.
In these regimes, invariance-based strategies are well-justified and can yield robust generalization.
In contrast, \emph{target drift} explicitly violates the invariance of the predictive mechanism, highlighting that the effectiveness of domain generalization methods depends critically on the underlying type of causal drift -- an aspect not explicitly captured in existing frameworks.

To address shifts that violate strict invariance, recent work builds on the \ac{ICM} principle and the \ac{SMS} hypothesis \citep{scholkopf2021towardCRL, Chen_Neurips2024}. 
The \ac{SMS} hypothesis posits that distribution shifts typically affect only a small subset of structural mechanisms, while others remain stable, suggesting that robust generalization requires localized adaptation rather than global retraining \citep{bengio2019metatransfer}.
However, most domain generalization methods assume access to discrete, labeled environments during training, whereas real-world concept drift usually happens continuously without explicit boundaries between environments.
By formalizing drift as mechanism-level interventions, our taxonomy and the \ac{CaDrift} framework set the foundation for controlled evaluation of sparse adaptation and causal transfer strategies in continuous streaming settings.

\textbf{Synthetic Data Stream Generators.} 
Synthetic data generators are widely used to evaluate learning algorithms under controlled concept drift scenarios.
Classical generators, such as SEA \citep{street2001}, Hyperplane \citep{hulten2001}, and RandomRBF \citep{bifet2009a}, simulate drift by modifying ideal decision boundaries, feature distributions, or class priors over time.
While these tools enable the evaluation of predictive accuracy under \emph{concept drift}, they rely purely on probabilistic or geometric manipulations.
Consequently, they lack the structural semantics necessary to simulate targeted causal interventions.
In contrast, synthetic data generation based on \acp{SCM} has been widely adopted in causal inference and discovery, where data are generated from user-specified directed acyclic graphs and structural equations.
Examples include simulation frameworks used in libraries such as DoWhy \citep{sharma2020dowhy}, as well as benchmarking platforms like CauseMe \citep{runge2019causeme}, which enable controlled evaluation across diverse causal structures and functional mechanisms.
However, these approaches are typically limited to static settings and do not explicitly model temporal dynamics or evolving data streams.
CauKer \citep{xie2026cauker} is an \ac{SCM}-based generator that incorporates seasonality for training classification time-series foundation models.
However, it lacks a formal taxonomy to distinguish among specific causal origins of drift.
\section{Experiments}

We evaluate whether the proposed \ac{SCM}-based framework (\ac{CaDrift}) generates data streams that reflect well-defined and distinguishable forms of concept drift. 
In particular, we aim to validate three key properties: 
\textbf{(i)} that different causal drift types induce distinct distributional changes in the data, 
\textbf{(ii)} that these changes can be identified through marginal and conditional analyses, and 
\textbf{(iii)} that these changes have a measurable impact on predictive performance.
To this end, we analyze the statistical properties of the generated data streams and relate them to classifier behavior under drift.

After assessing the distributional impact of each causal drift event, we assess the distributional impact when multiple causal drift types happen simultaneously (compound drift) in Section~\ref{subsec:compound-drift}.
After that, we evaluate the serial correlation of samples generated by \ac{CaDrift} in Section~\ref{subsec:autocorr-analysis}.
Finally, in Section \ref{subsec:case-study-elec2} we perform a case study in which we synthesize the real-world ELEC2 dataset with \ac{CaDrift}, and use it for data augmentation in stream learners.

\textbf{Dataset Generation Protocol.} Synthetic datasets are generated using \ac{CaDrift}. We use the handcrafted \acp{DAG} presented in Figure~\ref{fig:graph-mmd}, which gives the basis for us to evaluate the effect of each drift type defined in Section \ref{sec:drift-causal}.
Each node $X_i$ of the \acp{DAG} is associated with a structural equation:
\begin{equation}
X_i^{(t)} := f_i^{(t)}(\text{pa}_i^{(t)}) + U_i^{(t)} + s_i^{(t)}.
\end{equation}

We perform experiments considering linear and nonlinear mechanisms for $f_i$.
To generate the target $y$ for classification tasks, we use a prototype-based mapper, in which prototypes are randomly sampled according to the distribution of $pa_y$, such as in other \ac{SCM} generators \citep{Hollmann2025}.
Each class $y \in \mathcal{Y}$ is assigned to one or more prototypes. 
For each data sample, the classes are assigned to their nearest prototypes based on the Euclidean distance computed on $pa_y$.
This design enables flexible, non-linear class boundaries while preserving dependence on the parent variables.
Further details on prototype generation are provided in Appendix \ref{app:detail-dag}, and drift event details performed for the experiments in this section can be found in Appendix \ref{app:drift-events}.
Unless otherwise stated, drift events are abrupt.

\begin{figure}
    \centering

    \subfloat[DAG \#1.]{\label{subfig:graph-1}
        \includegraphics[width=0.3\linewidth]{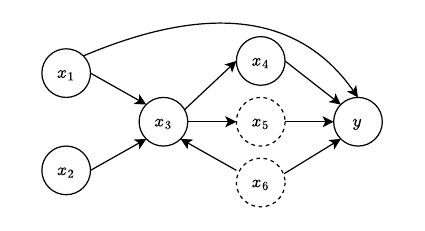}
    }
    \subfloat[DAG \#2.]{\label{subfig:graph-2}
        \includegraphics[width=0.3\linewidth]{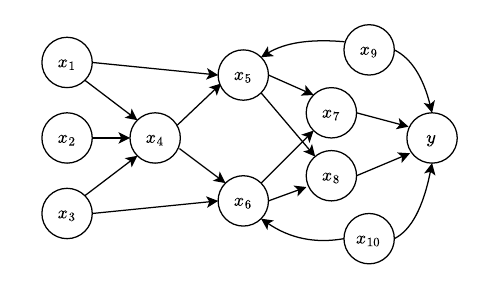}
    }    
    \subfloat[DAG \#3.]{\label{subfig:graph-3}
        \includegraphics[width=0.3\linewidth]{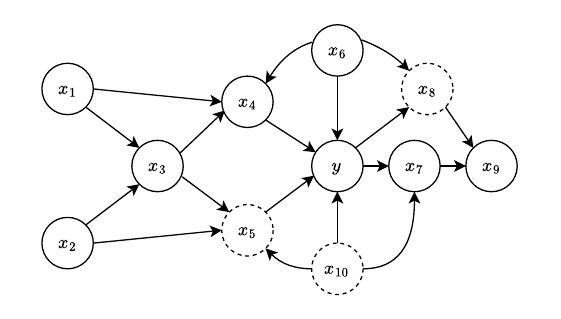}
    }
    \caption{DAGs used to generate data samples for distribution analysis. Dashed nodes represent latent (unobservable) variables.}
    \label{fig:graph-mmd}
\end{figure}

\textbf{Distributional impact analysis.} 
To assess whether different drift types induce distinguishable changes in the data distribution, we analyze the evolution of marginal distributions over time by computing the \acf{MMD} \citep{gretton12ammd} using sliding windows of 200 samples. 
For each window, we compare the distribution of samples from the initial (reference) concept with that in the current window.

For this analysis, we generate streams of 20,000 samples with a drift event introduced every 2,000 samples. 
This configuration provides sufficient observations per concept to obtain stable empirical estimates of \ac{MMD}, while keeping the computational cost tractable given the quadratic complexity of kernel-based \ac{MMD} estimation.

\textbf{Conditional distribution analysis.} 
Since marginal analysis alone cannot capture changes in the predictive relationship between features and the target, we additionally analyze shifts in the conditional distribution $P(y \mid \mathbf{x})$.
To do so, we estimate predictive posteriors using a multinomial logistic regression classifier implemented in scikit-learn with the $lbfgs$ solver and L2 regularization to approximate the conditionals $P(y\mid \mathbf{x})$. 
For each sliding window, we train a model on the current window and compute the conditional KL divergence \citep{lee2024, Kurian2024} between its predicted posterior probabilities and those produced by a model trained on the initial concept.
The KL divergence is averaged over a fixed reference set to ensure comparability across windows.

This procedure captures changes in the predictive relationship between features and the target, allowing us to distinguish drift types that are indistinguishable under marginal analysis (e.g., target drift).

\textbf{Performance evaluation.} 
To evaluate the practical implications of the induced drift, we measure the predictive performance of the \ac{HT} classifier \citep{domingos2000} under different drift scenarios in a test-then-train manner, considering a prompt label availability after the test, as well as when label availability is delayed by 100 samples, as in previous works \citep{gomes2017}.
We also couple the \ac{HT} with the classic \ac{DDM} detector \citep{gama2004}.

For taxonomy validation, we restrict the analysis to stationary concepts without intra-concept temporal dependence (i.e., excluding autoregressive noise, \ac{EWMA}, and seasonality). 
This ensures that observed distributional changes arise exclusively from structural interventions associated with each drift type, isolating the causal impact of the induced drift.
Plots with the non-stationary components enabled can be found in Appendix \ref{app:causal-drift-additional-exp}.


\subsection{The impact of causal drift events on distribution and performance}
\label{subsec:exp-dist}

We analyze the effect of drift events using three complementary measures: marginal divergence (\ac{MMD}), conditional divergence (KL), and prequential accuracy.
The evolution of \ac{MMD}, conditional KL divergence, and prequential accuracy over time for each \ac{DAG} is presented in Figures~\ref{fig:mmd-kl-acc-full} and \ref{fig:mmd-kl-acc-full-nonlinear} for linear and nonlinear mappers, respectively.
We use polynomial and sigmoid functions for the nonlinear mappers.
Results are averaged over 10 datasets generated by each \ac{DAG} per drift type.
Drift events are random adjustments to the weights of the structural equations that map each node, and the graph state is reverted to the original concept before each new drift event, allowing direct comparison with the first concept in the stream.

\begin{figure}[hp]
    \centering
    \subfloat[Exogenous drift.]{\label{subfig:mmd-kl-acc-exogenous}
        \includegraphics[width=0.8\linewidth]{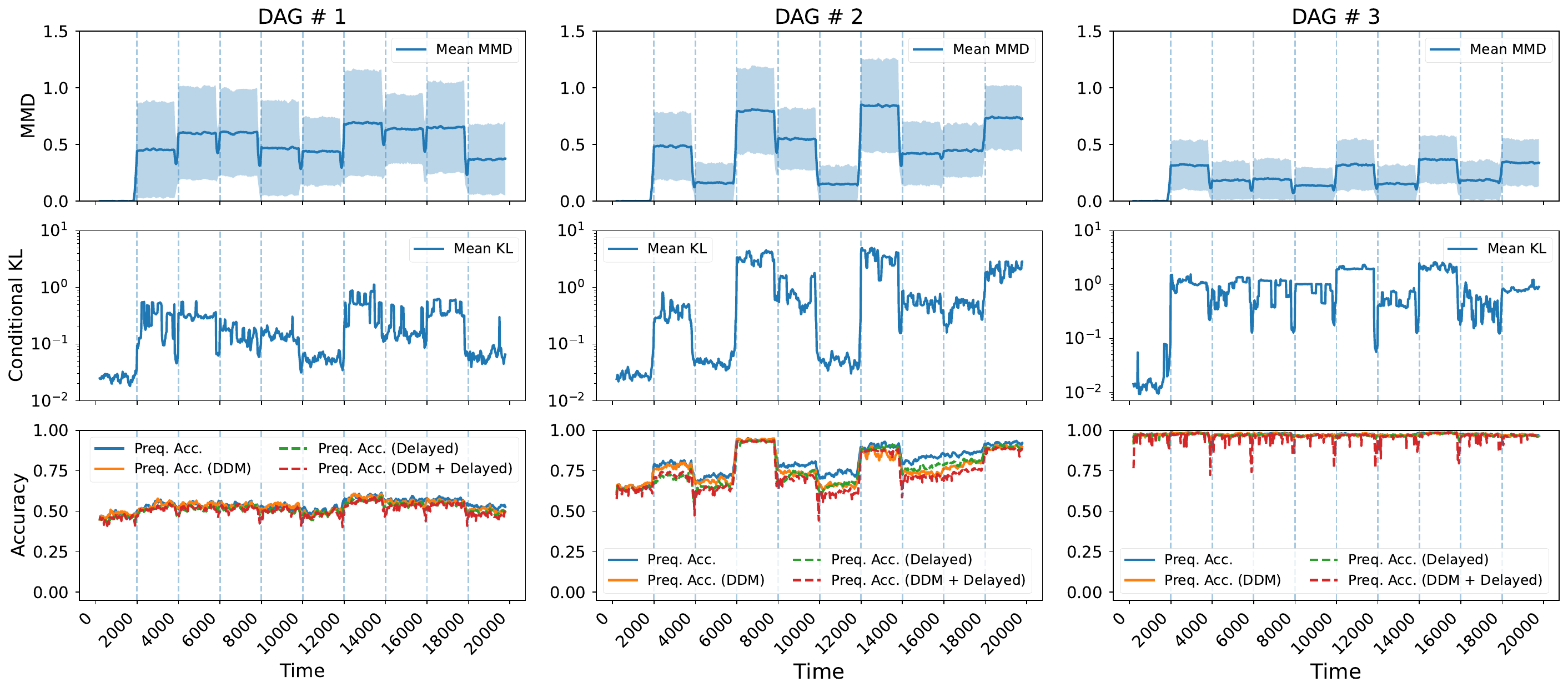}
    }    
    
    \subfloat[Endogenous drift.]{\label{subfig:mmd-kl-acc-endogenous}
        \includegraphics[width=0.8\linewidth]{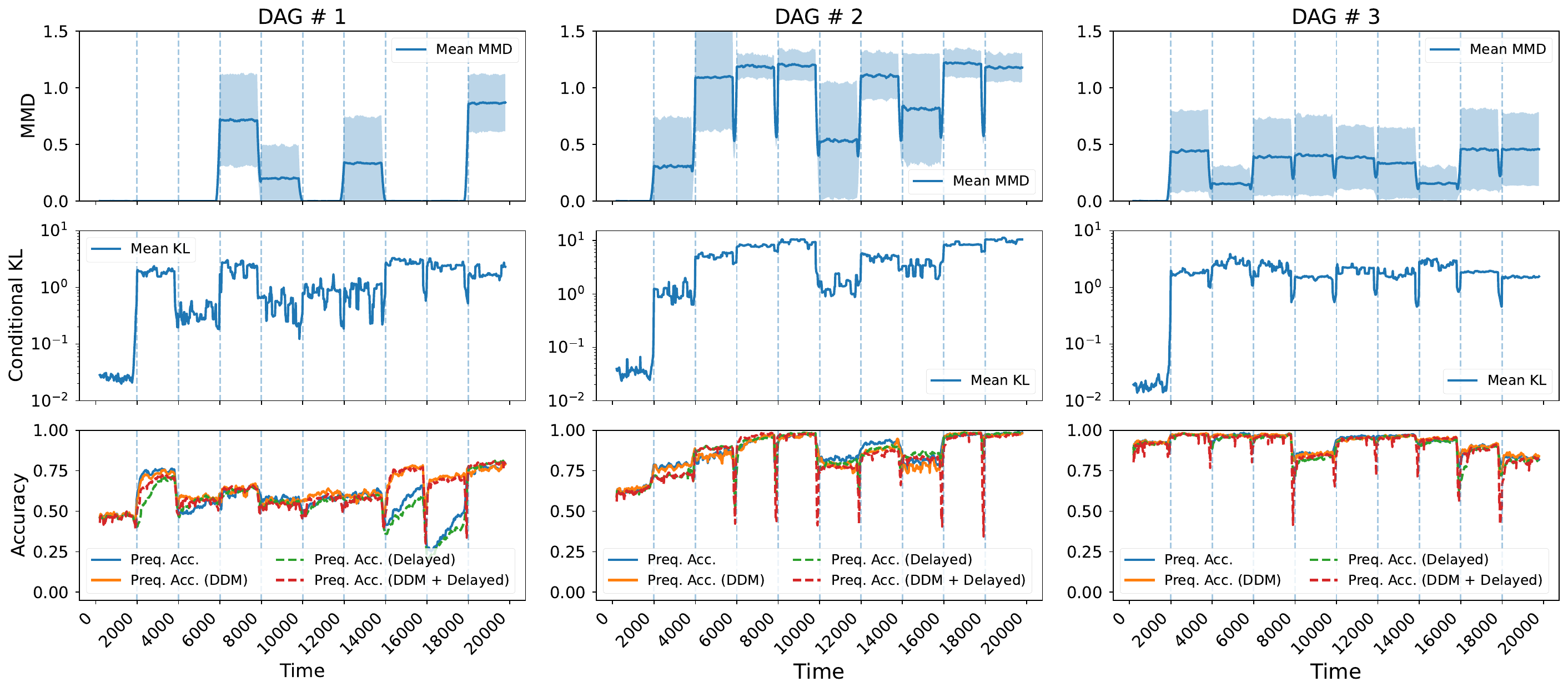}
    }
    
    \subfloat[Confounder drift.]{\label{subfig:mmd-kl-acc-confounder}
        \includegraphics[width=0.8\linewidth]{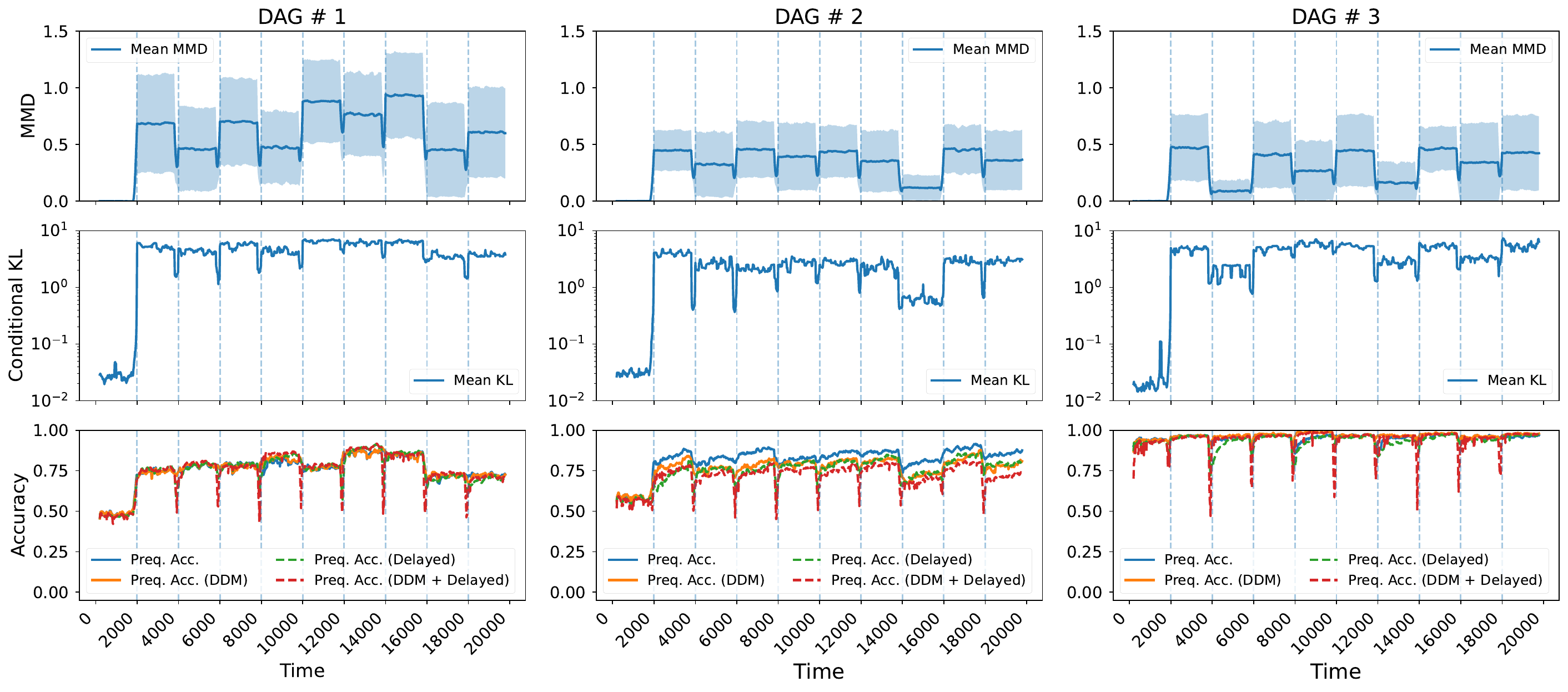}
    }
    
    \caption{MMD, KL divergence, and prequential accuracy over time in each DAG on causal drift types.}
    \label{fig:mmd-kl-acc-full}
\end{figure}

\begin{figure}[ht]
\ContinuedFloat
    \centering
    
    \subfloat[Target drift.]{\label{subfig:mmd-kl-acc-target}
        \includegraphics[width=0.8\linewidth]{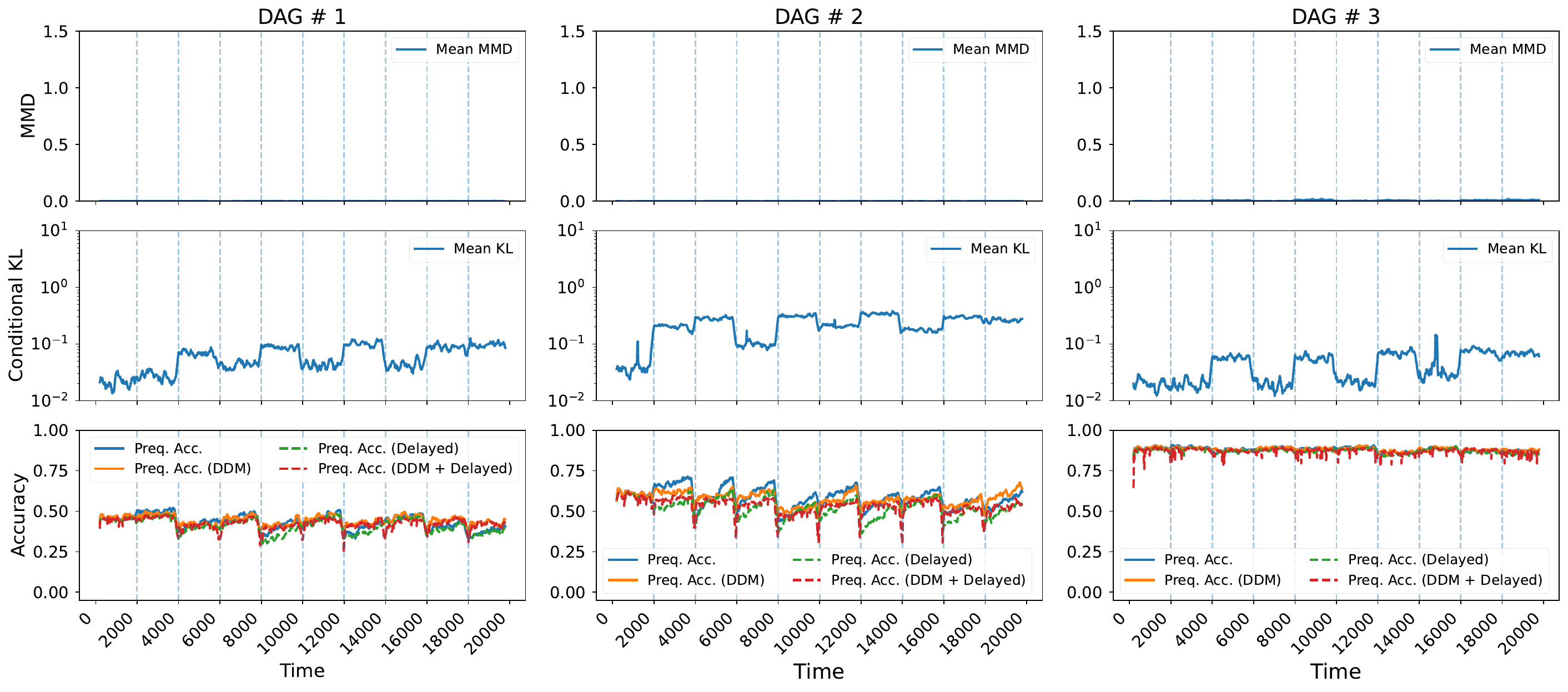}
    }

    \subfloat[Structural drift.]{\label{subfig:mmd-kl-acc-structural}
        \includegraphics[width=0.8\linewidth]{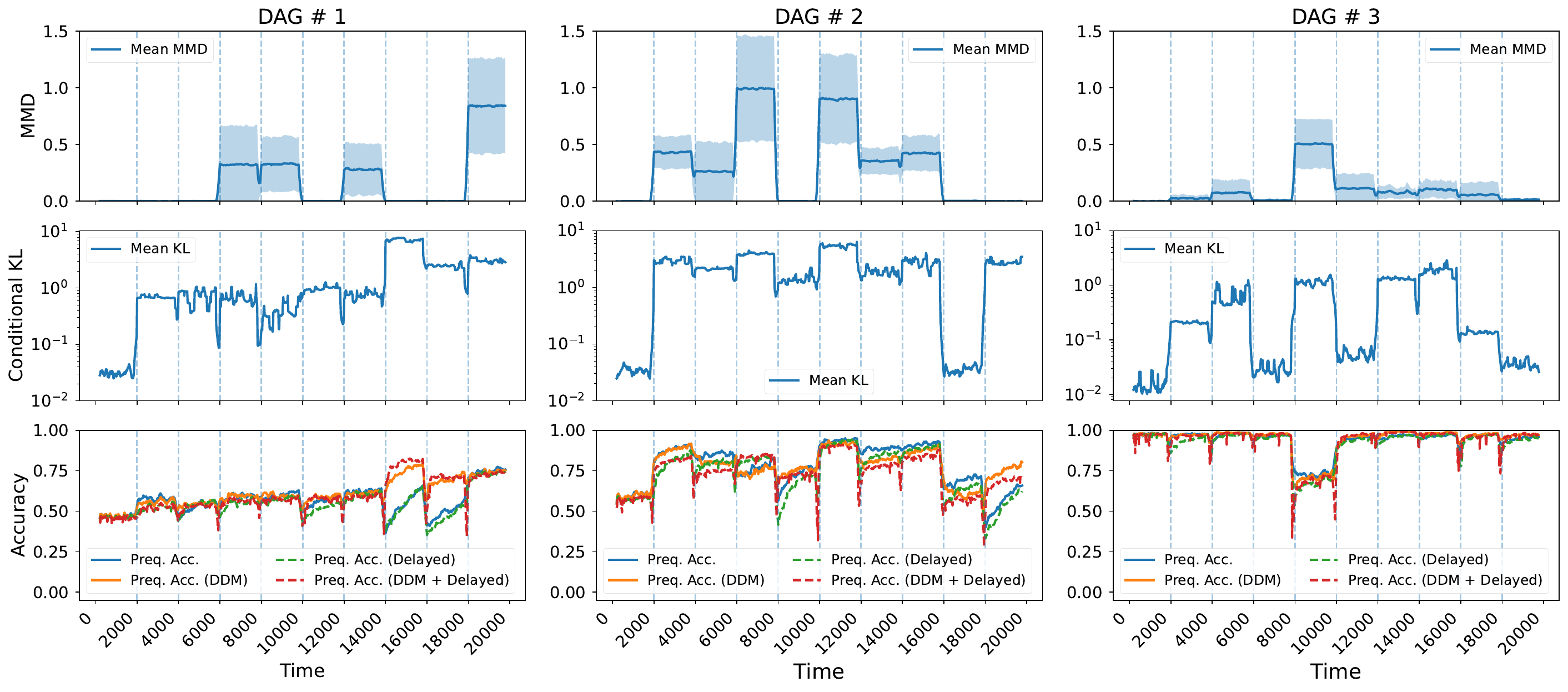}
    }
    
    \caption{MMD, KL divergence, and prequential accuracy over time in each DAG on causal drift  (continued).}
    \label{fig:mmd-kl-acc-full-2}
\end{figure}

\begin{figure}[hp]
    \centering
    \subfloat[Exogenous drift.]{\label{subfig:mmd-kl-acc-exogenous-nonlinear}
        \includegraphics[width=0.8\linewidth]{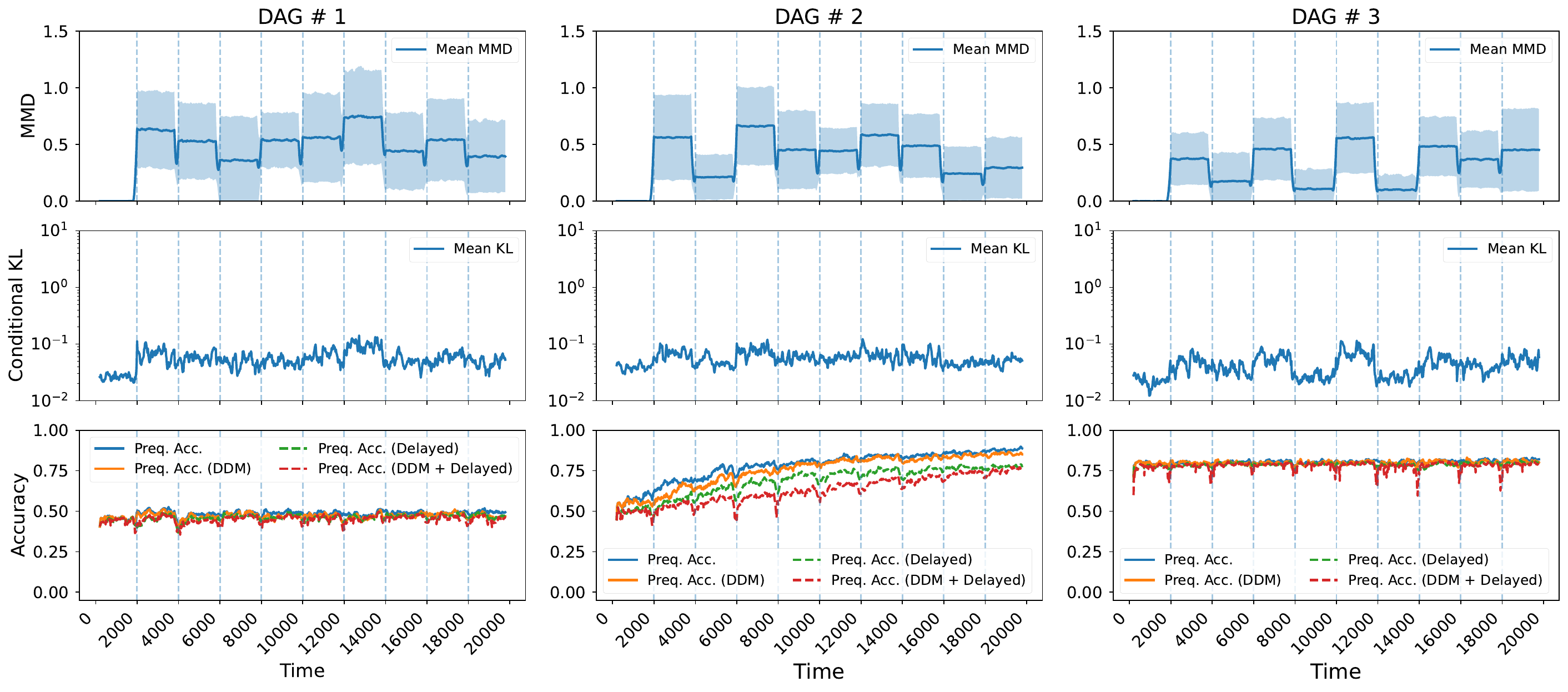}
    }    
    
    \subfloat[Endogenous drift.]{\label{subfig:mmd-kl-acc-endogenous-nonlinear}
        \includegraphics[width=0.8\linewidth]{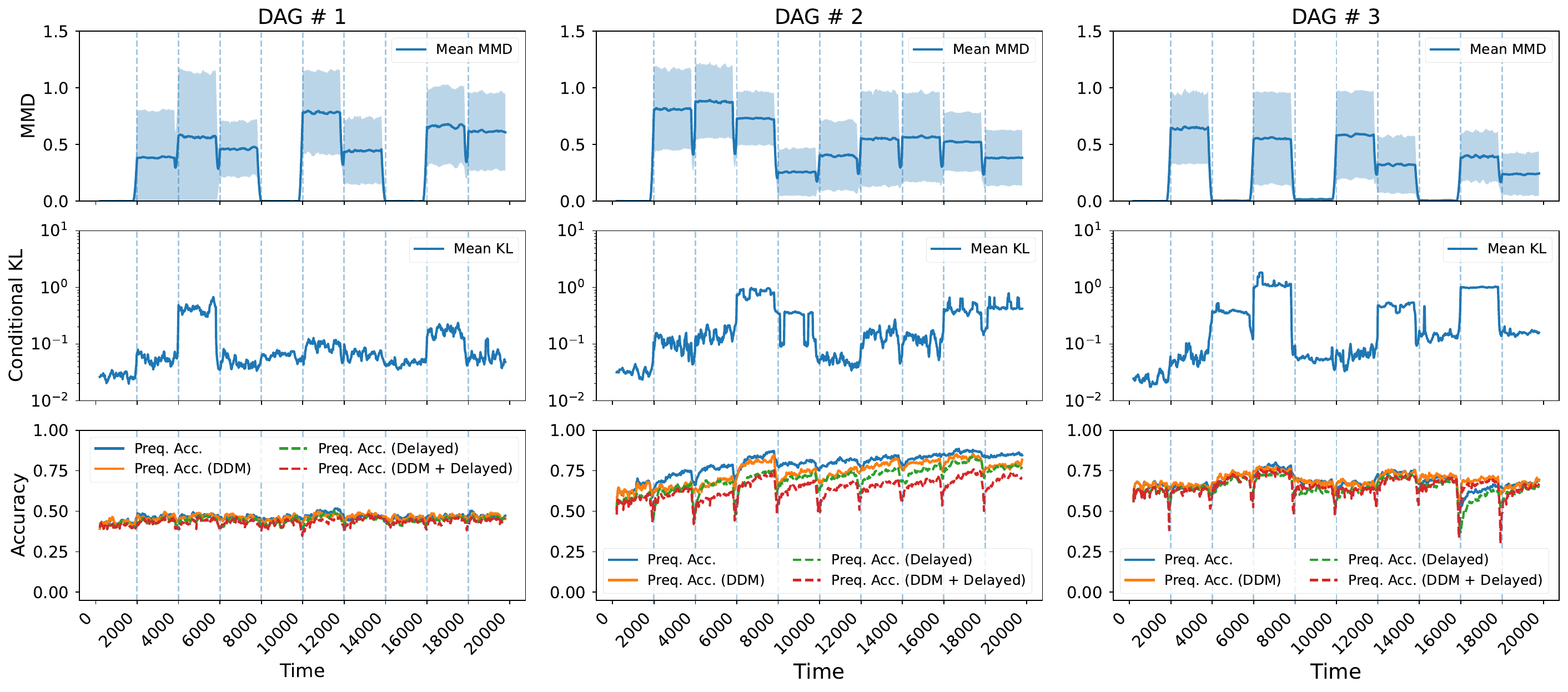}
    }
    
    \subfloat[Confounder drift.]{\label{subfig:mmd-kl-acc-confounder-nonlinear}
        \includegraphics[width=0.8\linewidth]{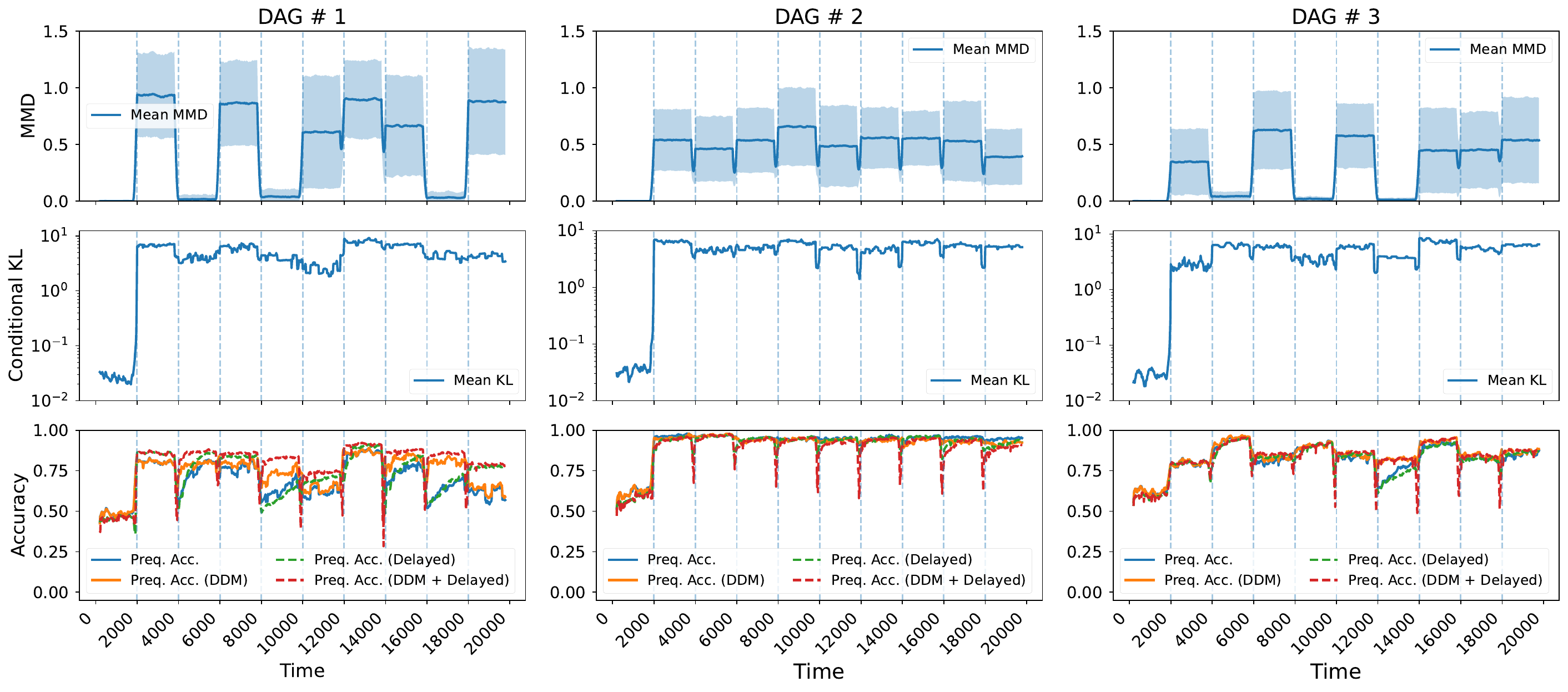}
    }
    
    \caption{MMD, KL divergence, and prequential accuracy over time in each DAG on causal drift types -- Nonlinear mappers.}
    \label{fig:mmd-kl-acc-full-nonlinear}
\end{figure}

\begin{figure}[ht]
\ContinuedFloat
    \centering
    
    \subfloat[Target drift.]{\label{subfig:mmd-kl-acc-target-nonlinear}
        \includegraphics[width=0.8\linewidth]{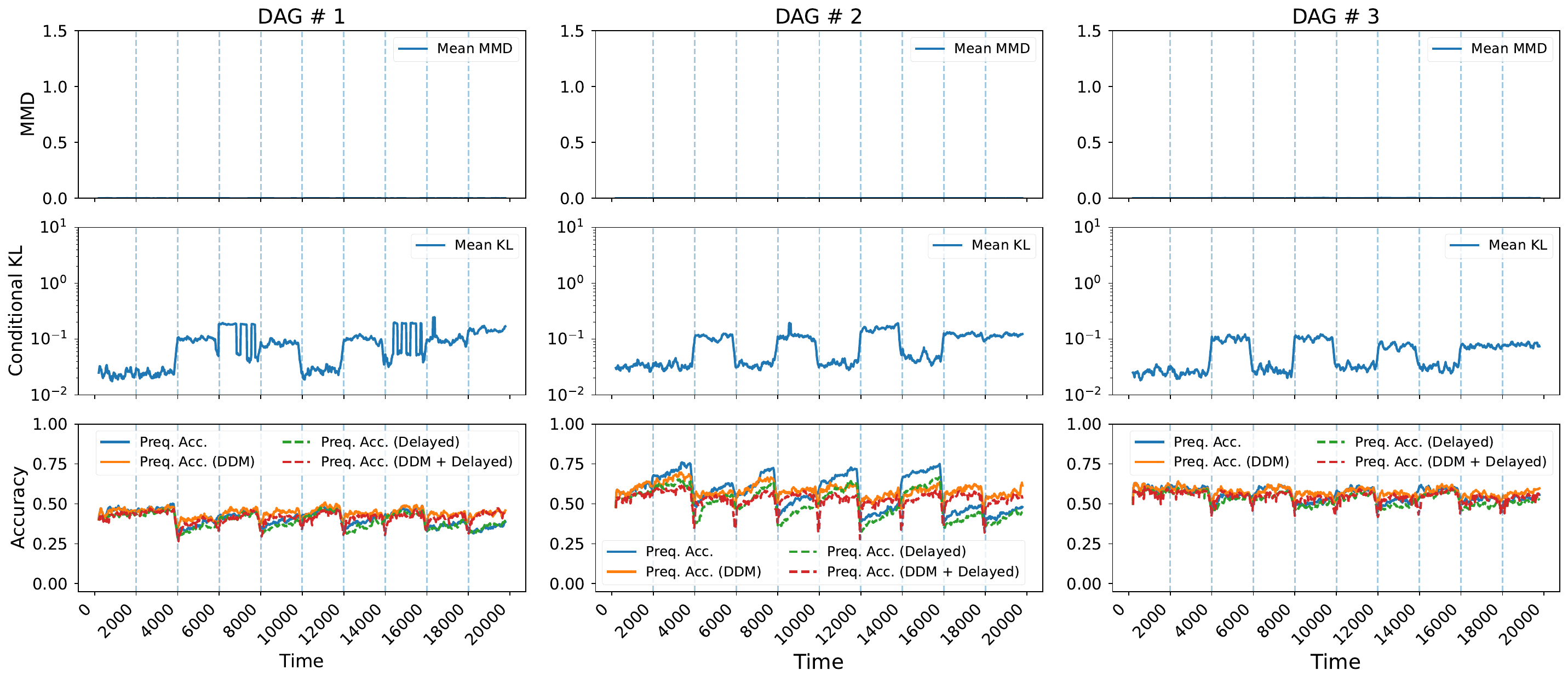}
    }

    \subfloat[Structural drift.]{\label{subfig:mmd-kl-acc-structural-nonlinear}
        \includegraphics[width=0.8\linewidth]{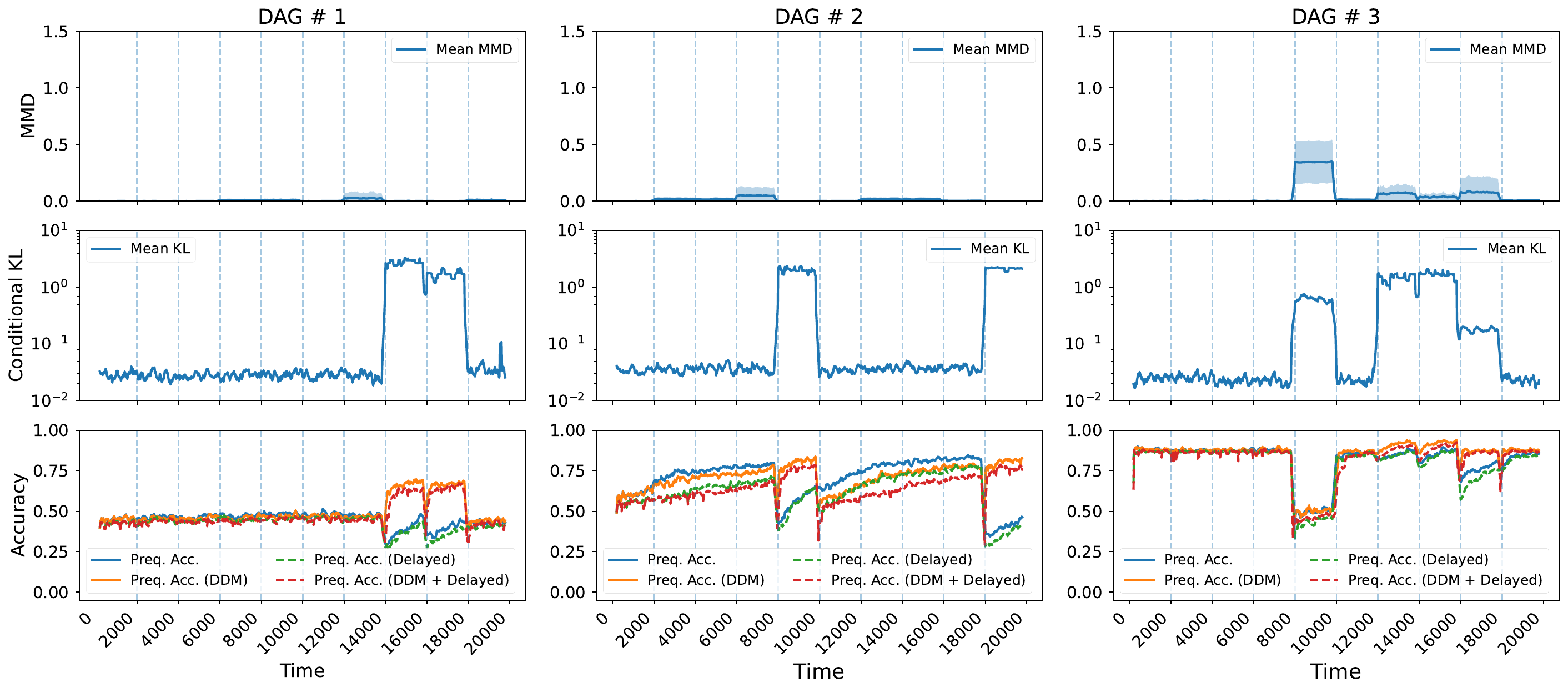}
    }
    
    \caption{MMD, KL divergence, and prequential accuracy over time in each DAG on causal drift -- Nonlinear mappers  (continued).}
    \label{fig:mmd-kl-acc-full-nonlinear-2}
\end{figure}

\emph{Exogenous drift} (Figures~\ref{subfig:mmd-kl-acc-exogenous} and \ref{subfig:mmd-kl-acc-exogenous-nonlinear}), on both linear and nonlinear equations, impacts the marginal distribution measured by \ac{MMD}, as well as the KL divergence, even though the cause--effect relationships between nodes remain intact. 
We observe an impact on prequential accuracy due to \emph{exogenous drift} induced in \ac{DAG} \#2, whereas for \acp{DAG} \#1 and \#3 it usually remains stable, except for minor fluctuations when delayed feedback is applied.

Differences in accuracy performance are best perceived when we apply \emph{endogenous drift} (Figures~\ref{subfig:mmd-kl-acc-endogenous} and \ref{subfig:mmd-kl-acc-endogenous-nonlinear}). 
In this type of drift, we notice drops in accuracy followed by recovery, as the \ac{HT} classifier incorporates sufficient samples from the new concept.
Using \ac{DDM} to actively adapt to \emph{endogenous drift} gives different behaviors.
We hypothesize that the effectiveness of active adaptation via \ac{DDM} is determined by the degree of distributional overlap between concepts.
In some instances of \emph{endogenous drift}, the change creates an inconsistency between the concepts, where the optimal decision boundary for the new concept directly contradicts the previously learned one.
In these cases, adaptation through detectors such as \ac{DDM} is essential to clear the model's ``memory'' of the obsolete concept.
In contrast, other events may simply push the data into a previously unobserved region of the feature space where the underlying causal mechanism remains locally consistent with the global rule.
In such a scenario, the model faces a sample-complexity issue rather than a structural failure -- therefore, incremental learning from new samples is more effective.
This argument is potentialized when we observe \ac{DDM}'s under datasets with nonlinear mappers, where the relationships between the variables are more complex (Figure~\ref{subfig:mmd-kl-acc-endogenous-nonlinear}): \ac{DDM} does not seem to provide any advantage over the HT -- the triggers in concept drift actually seem to decrease the immediate accuracy more than if not using \ac{DDM}.

\emph{Confounder drift}, in Figures~\ref{subfig:mmd-kl-acc-confounder} and \ref{subfig:mmd-kl-acc-confounder-nonlinear}, also produces measurable marginal shifts. 
Its effect resembles that of exogenous drift with respect to the observed covariates, but its impact on the target variable is revealed only when analyzing conditional distributions and model performance. 
Drops in accuracy are observed, even though the inner and target nodes remained invariant -- as expected from the theoretical insights explained in Section \ref{sec:drift-causal}.
Drops in accuracy are more pronounced when relationships between variables are nonlinear (Figure~\ref{subfig:mmd-kl-acc-confounder-nonlinear}).
These observations highlight that marginal measures alone are insufficient to characterize the causal nature of drift, as \emph{confounder drift} alters statistical associations without modifying the underlying generating mechanisms.
Consequently, performance degradation under this type of drift is primarily a symptom of a model having learned statistical correlations rather than the invariant causal mechanism of the target variable.

As expected, \emph{target drift} (Figure~\ref{subfig:mmd-kl-acc-target} and \ref{subfig:mmd-kl-acc-target-nonlinear}) does not modify covariate distributions on either linear or nonlinear mappers, since the intervention affects only the conditional mechanism of the target variable. 
The effect of this drift type is perceived in the conditional KL divergence and in the accuracy performance -- an increase in KL divergence is associated with a decrease in accuracy on datasets generated by \acp{DAG} \#1 and \#2.
Interestingly, \emph{target drift} on data generated by \ac{DAG} \#3 produced no substantial accuracy degradation on linear mappers.
We can notice some drops on the data generated with nonlinear mappers, but not as substantial. 
As a matter of fact, \ac{DAG} \#3 seems to produce less challenging environments for the \ac{HT} classifier.
We attribute this to the presence of causal descendants of the target, which act as noisy proxies of $y$ and may be exploited by the \ac{HT} classifier during splitting, thus explaining why \emph{target drift} did not lead to degradation in accuracy.
This suggests that the presence of observable descendants of the target can partially mask \emph{target drift}, reducing its impact on predictive performance.

Under \emph{structural drift} (Figures~\ref{subfig:mmd-kl-acc-structural} and \ref{subfig:mmd-kl-acc-structural-nonlinear}), we observe varying effects on both \ac{MMD} and KL divergence: in some cases, both remain low despite noticeable drops in accuracy.
In \ac{DAG} \#3, for instance, a sharp accuracy decline between samples 8,000 and 10,000 on both setups (linear and nonlinear) coincides with the removal of the edge $y \rightarrow X_7$. This supports the hypothesis that observable causal descendants can simplify the learning problem and, when present, mask the effects of \emph{target drift}.
Additional experiments on incremental interventions in inner nodes are provided in Appendix~\ref{app:incremental-endogenous-analysis}.

We have also performed experiments where we enable the components to induce time dependence in Appendix~\ref {app:causal-drift-additional-exp}, where we notice that the distributional signatures and model performance are the same as reported in this section, with some noise on the observed MMD and KL divergence curves.

Overall, the results reveal that each drift type induces a characteristic and interpretable signature across marginal divergence, conditional divergence, and predictive performance, empirically validating the proposed causal taxonomy. 
More importantly, no single metric is sufficient to fully characterize drift: marginal measures such as \ac{MMD} fail to capture purely conditional changes, while conditional divergence alone does not reflect shifts in the input distribution, and predictive performance conflates both effects with model adaptation dynamics. 
This highlights the necessity of a joint analysis to correctly diagnose the nature of drift.
In Table~\ref{tab:signatures}, we summarize the observed signatures in \ac{MMD}, KL divergence, and accuracy performance of each drift type. 

From a causal perspective, these findings demonstrate that the impact of drift on learning systems depends not only on the magnitude of distributional change, but on where the intervention occurs in the data-generating process. 
As a result, two drift events with similar statistical signatures may require fundamentally different adaptation strategies. 
This reinforces the central premise of our framework: understanding and modeling concept drift through a causal lens is essential for disentangling its effects and designing robust adaptive learning systems.


\begin{table}[htb]
    \centering
    \caption{Observed signatures of each causal drift type.}
    \begin{tabular}{l c c c}
    \toprule
        Drift Type & MMD & KL & Accuracy Impact \\
        \midrule
        Exogenous & $\uparrow$ & $\uparrow$ & low to moderate \\
        Endogenous & $\uparrow$ & $\uparrow$ & moderate to high \\
        Confounder & $\uparrow$ & $\uparrow$ & moderate \\
        Target & none & $\uparrow$ & moderate to high \\
        Structural & variable & variable & variable \\
        \bottomrule
    \end{tabular}
    \label{tab:signatures}
\end{table}

\subsection{Compound Drift}
\label{subsec:compound-drift}

Although different causal drift types produce distinct distributional signatures depending on which node suffered the intervention, it is common in real-world scenarios that many features change at the same time.
For instance, in energy demand forecasting, a shift in weather patterns (exogenous drift) may co-occur with a new pricing regulation that alters how consumers respond to temperature variation (target drift).
In such cases, from a causal perspective, concept drift is a composition of multiple causal interventions.

In this section, we investigate whether the distributional signatures measured by MMD and KL divergence retain their identifiers.
Cases of compound drift when events are combined with \emph{target drift} are presented in Figure~\ref{fig:mmd-kl-acc-compound-target}, and some cases when combined with \emph{endogenous drift} in Figure~\ref{fig:mmd-kl-acc-compound-endogenous}.

\begin{figure}[hp]
    \centering
    \subfloat[Exogenous and target drift.]{\label{subfig:mmd-kl-acc-compound-exogenous-target}
        \includegraphics[width=0.8\linewidth]{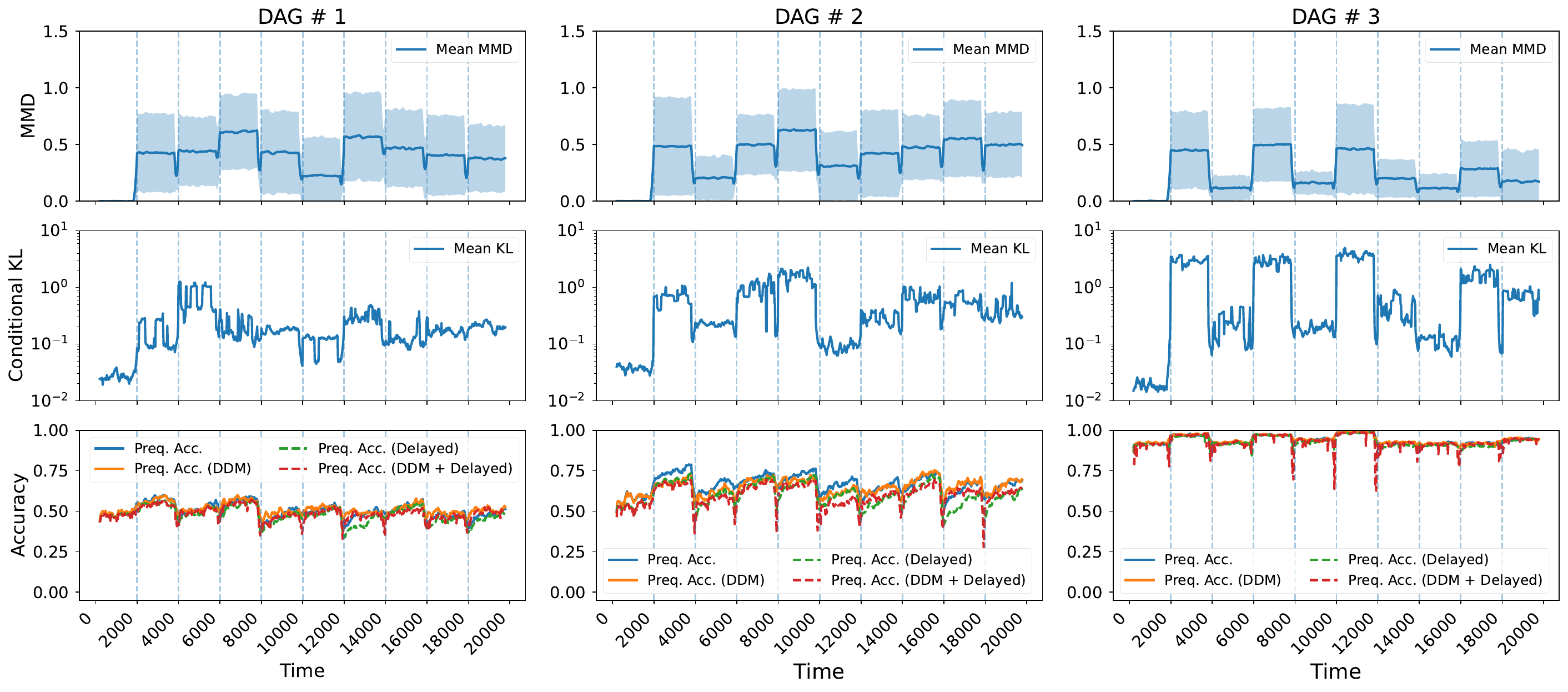}
    }    
    
    \subfloat[Endogenous and target drift.]{\label{subfig:mmd-kl-acc-compound-endogenous-target}
        \includegraphics[width=0.8\linewidth]{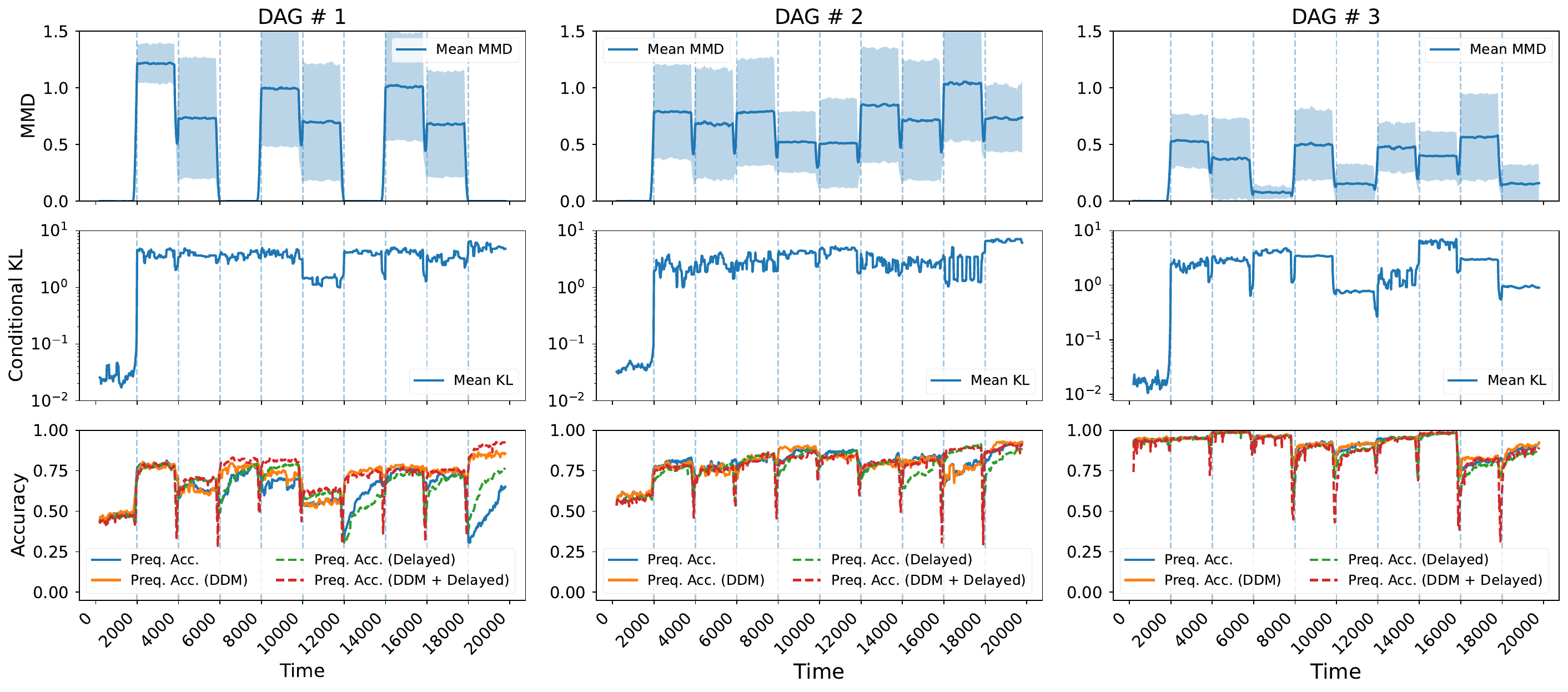}
    }
    
    \subfloat[Confounder and target drift.]{\label{subfig:mmd-kl-acc-compound-confounder-target}
        \includegraphics[width=0.8\linewidth]{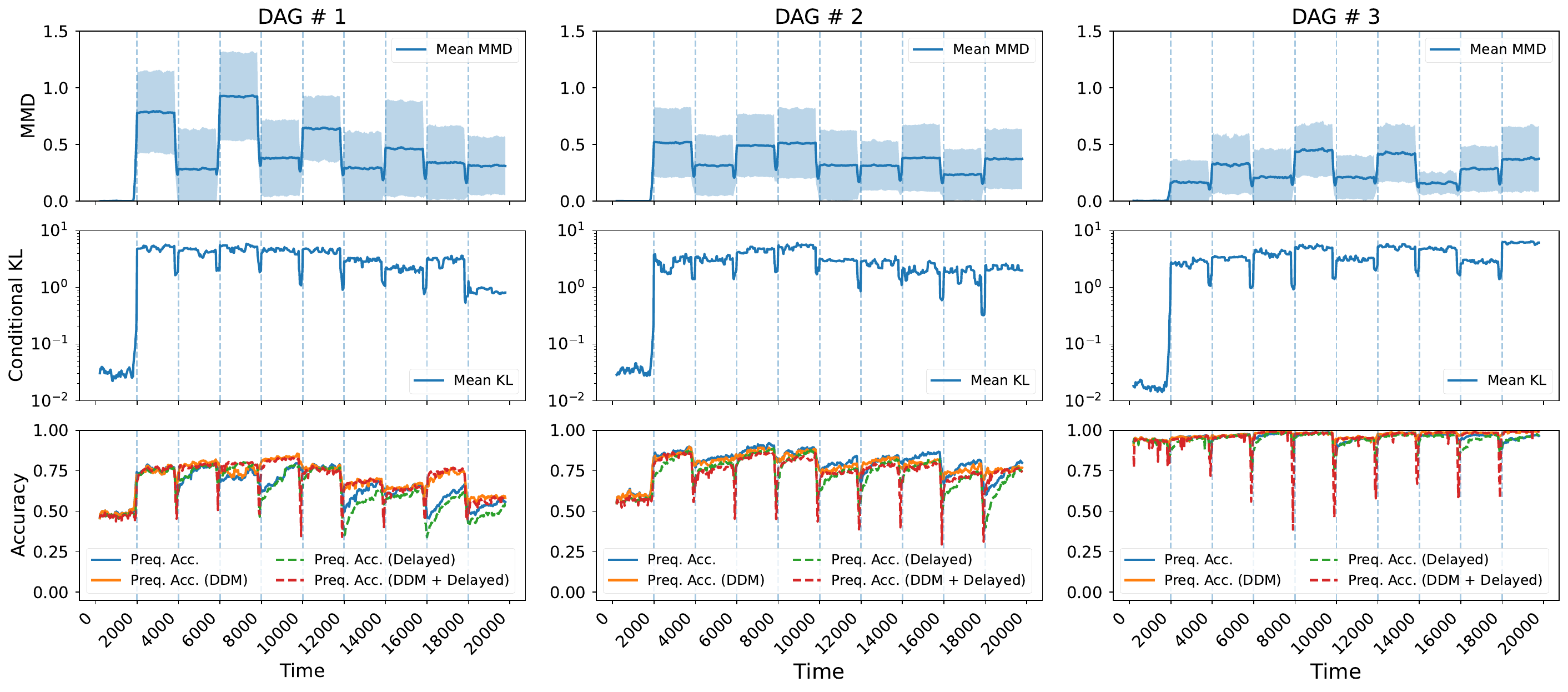}
    }
    
    \caption{MMD, KL divergence, and prequential accuracy over time for compound drift in each DAG on causal drift events for all drift types along with target drift.}
    \label{fig:mmd-kl-acc-compound-target}
\end{figure}

\begin{figure}[ht]
\ContinuedFloat
    \centering
    
    \subfloat[Structural and target drift.]{\label{subfig:mmd-kl-acc-compound-structural-target}
        \includegraphics[width=0.8\linewidth]{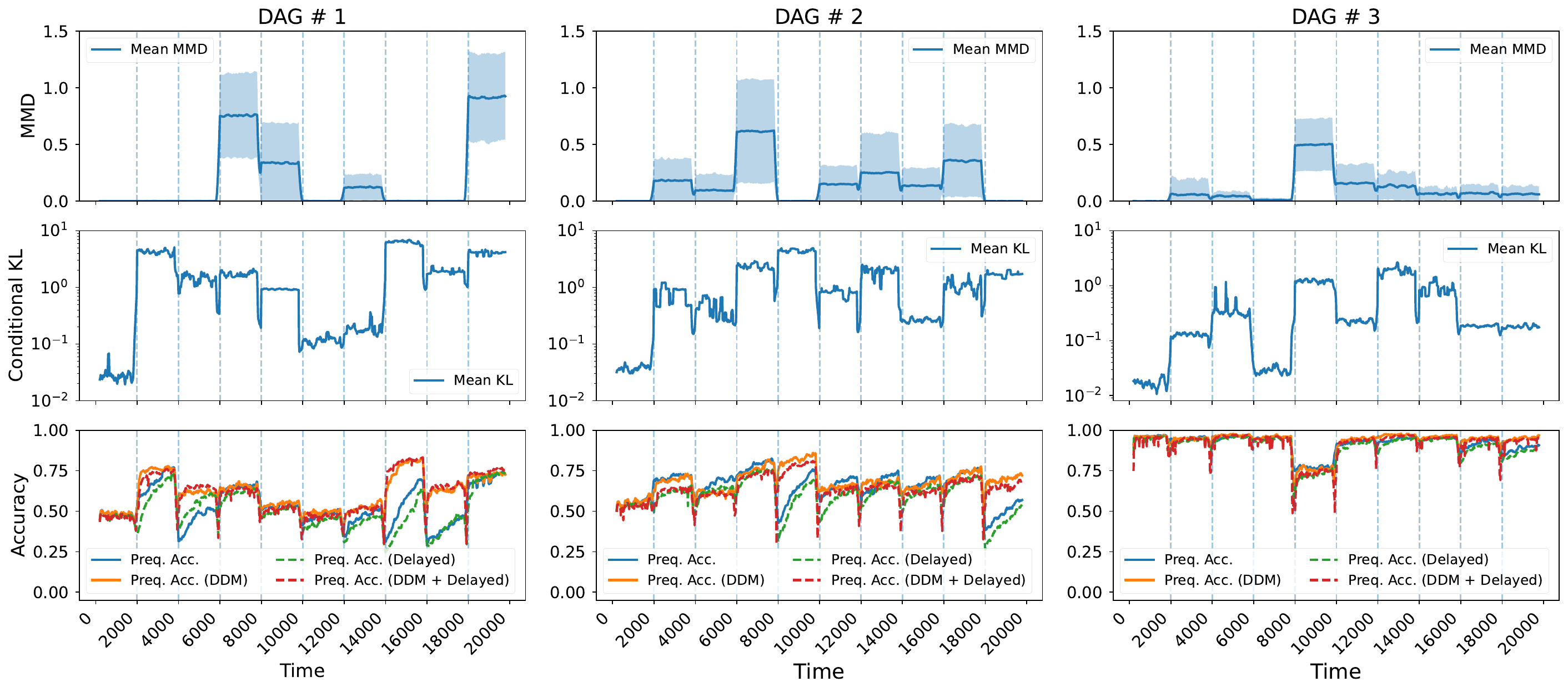}
    }
    
    \caption{MMD, KL divergence, and prequential accuracy over time for compound drift in each DAG on causal drift events for all drift types along with target drift (continued).}
    \label{fig:mmd-kl-acc-compound-target-2}
\end{figure}

\begin{figure}[htb]
    \centering

    \subfloat[Exogenous and Endogenous drift.]{\label{subfig:mmd-kl-acc-compound-exogenous-endogenous}
        \includegraphics[width=0.8\linewidth]{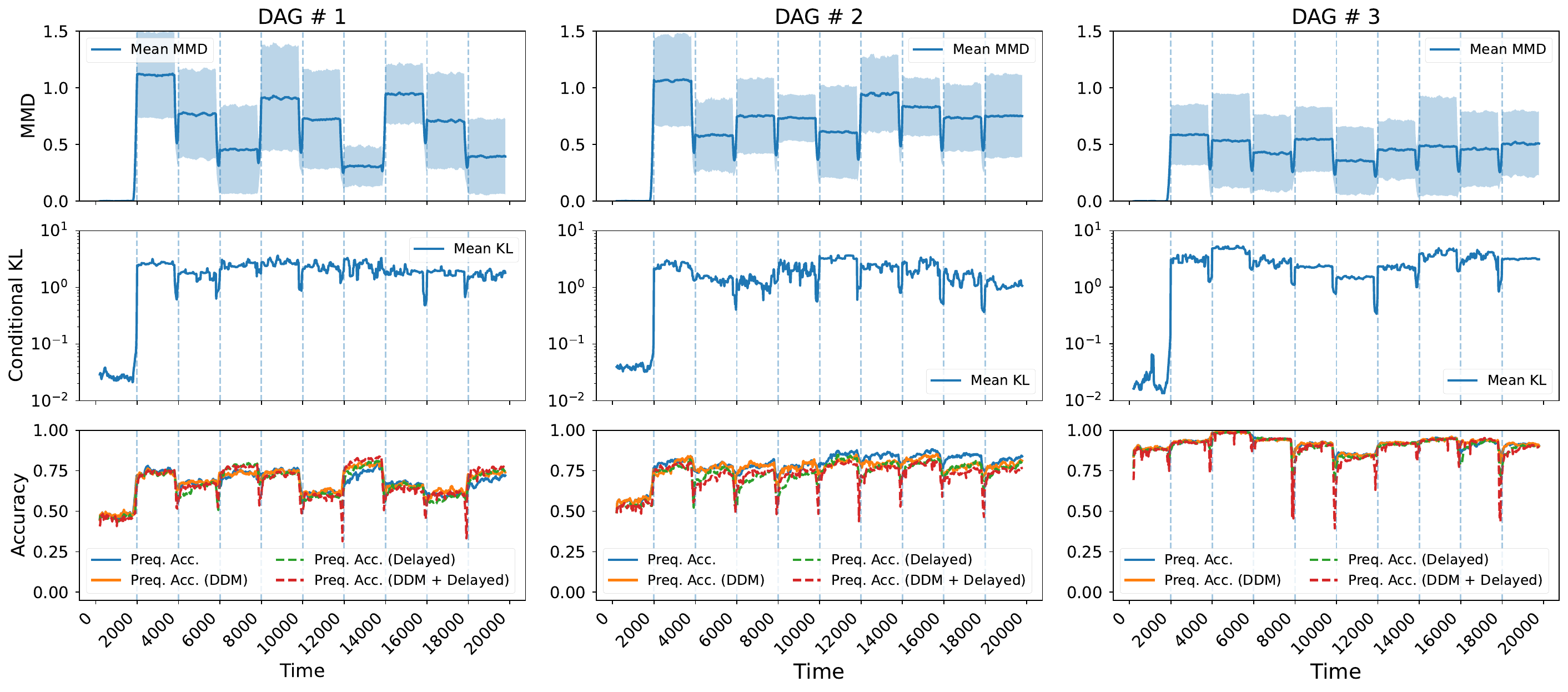}
    }

    \subfloat[Confounder and Endogenous drift.]{\label{subfig:mmd-kl-acc-compound-confounder-endogenous}
        \includegraphics[width=0.8\linewidth]{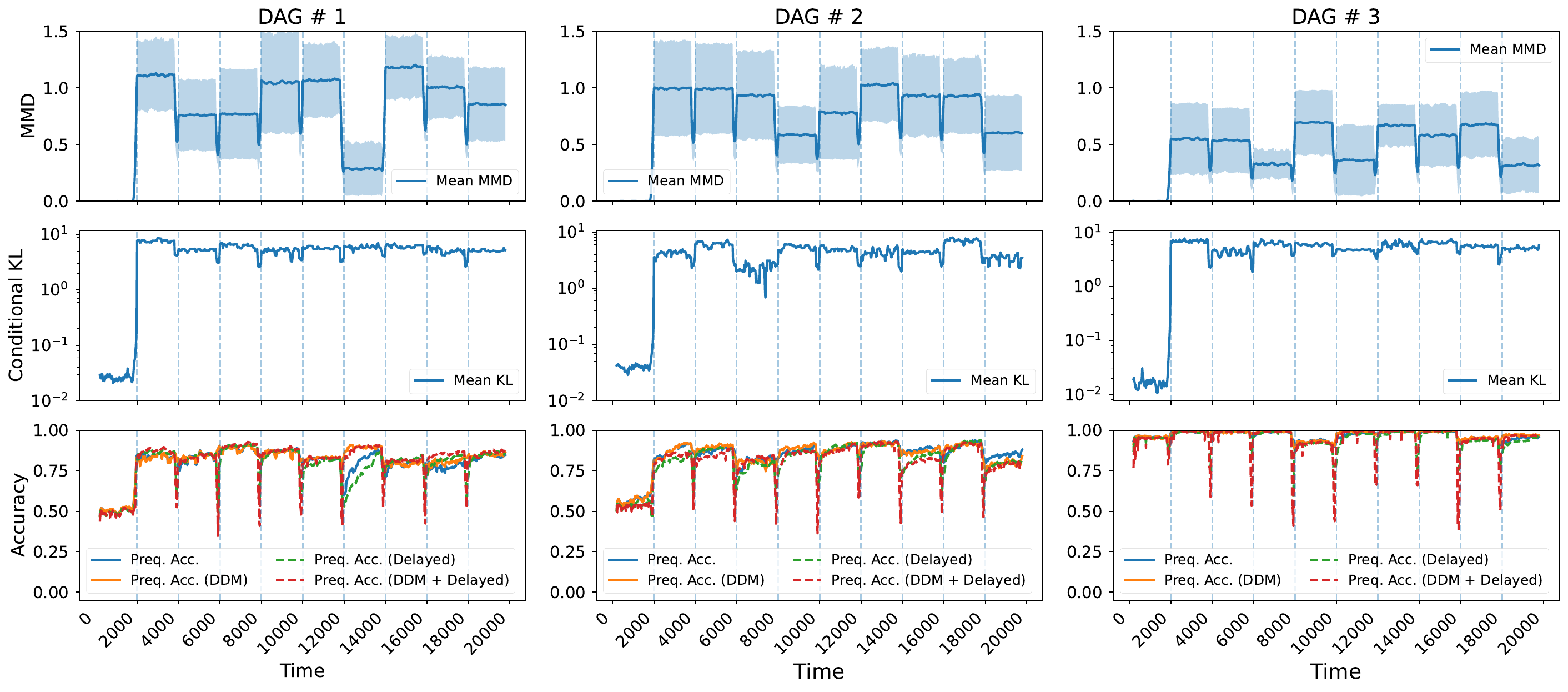}
    }
    
    \caption{MMD, KL divergence, and prequential accuracy over time for compound drift in each DAG on causal drift events for all drift types along with endogenous drift (continued).}
    \label{fig:mmd-kl-acc-compound-endogenous}
\end{figure}

As observed in Section \ref{subsec:exp-dist}, \emph{target drift} has no influence on the MMD over time, whereas drift types such as \emph{exogenous, endogenous,} and \emph{confounder} presented a direct effect on MMD.
We observe the same MMD signatures when \emph{exogenous drift} happens along with \emph{target drift} (Figure~\ref{subfig:mmd-kl-acc-compound-exogenous-target}), but the drops in accuracy are more substantial due to the shift in the target mechanism.

The same happens under \emph{endogenous} and \emph{confounder drifts} (Figures~\ref{subfig:mmd-kl-acc-compound-endogenous-target} and \ref{subfig:mmd-kl-acc-compound-confounder-target}) -- MMD is impacted along with the KL divergence, and performance degrades.
In these drift types, the loss in accuracy performance appears more substantial when compared to when only one drift type occurs -- thus, compound target drift increases post-drift performance drop.

Finally, the behavior is replicated under \emph{structural drift}, in Figure \ref{subfig:mmd-kl-acc-compound-structural-target}.
Before, not all \emph{structural drift} events led to changes in accuracy performance, whereas when combined with \emph{target drift}, fluctuations in accuracy performance are now visible.
We also observe the same pattern noticed previously in DAG \# 3, where the HT achieves near-perfect accuracy with minor fluctuations after most drift events, except for when the edge $y \rightarrow X_7$ is removed.

Although other combinations of compound drift are possible, these experiments already provide an indication of how the signatures of atomic causal drift types behave under composition. 
In general, when different interventions are combined with \emph{target drift}, the distributional signatures associated with the non-target component are preserved under compound drift: \emph{exogenous, endogenous, confounder, and structural drift} continue to produce the characteristic changes in MMD and KL divergence observed under their respective atomic interventions. 
At the same time, the simultaneous occurrence of \emph{target drift} introduces an additional change in the target mechanism, which can amplify the degradation in predictive performance. 
Thus, compound causal drift does not necessarily obscure the signatures of its constituent drift types; rather, the effects of multiple interventions can be stacked, with some components primarily affecting the observed data distribution and others directly altering the predictive relationship. 
These results suggest that causal drift signatures may remain informative under compound drift, although disentangling the individual causal sources of a compound event remains a more challenging problem and requires further investigation.

\subsection{Autocorrelation Analysis}
\label{subsec:autocorr-analysis}

To analyze the temporal dependence induced by the proposed components, we generate synthetic streams from \ac{DAG} \#2 under four configurations: no temporal dependence, autoregressive (AR), EWMA, and seasonality.
For each configuration, we generate 10,000 samples and compute the autocorrelation function (ACF) \citep{box2015time} for the generated features averaged over 10 datasets, shown in Figure~\ref{fig:acf-plots}.

These plots illustrate the presence of serial correlation in the generated streams and how temporal dependence propagates through the causal structure. 
Because the target variable is generated from its parents through structural equations, temporal correlations present in upstream variables naturally propagate to the labels.

As expected, larger values of $\rho$ increase the autocorrelation of both features and the target variable at early lags, reflecting stronger dependence between consecutive samples. 
The autocorrelation then gradually decreases with increasing lag, consistent with autoregressive processes in which the influence of past observations decays over time.

The \ac{EWMA} parameter $\alpha$ also plays an important role in shaping the temporal dynamics. 
Small values of $\alpha$ (e.g., $\alpha=0.05$) produce longer memory in the smoothed series, resulting in persistent autocorrelation across a larger number of lags. 
Conversely, larger $\alpha$ values reduce this persistence by assigning greater weight to recent observations, thereby diminishing the time dependence.


\begin{figure}[htb]
    \centering
    \includegraphics[width=0.3\linewidth]{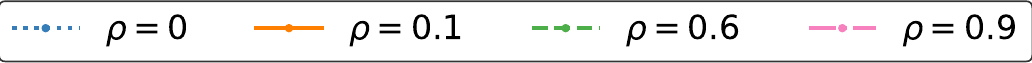}
    \includegraphics[width=.8\linewidth]{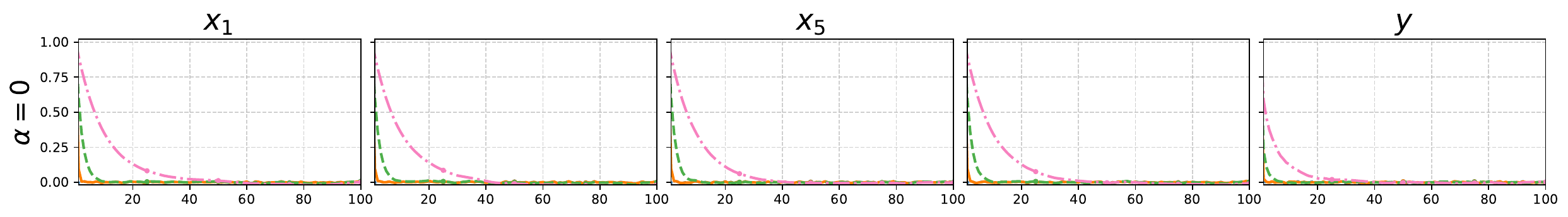}
    \includegraphics[width=.8\linewidth]{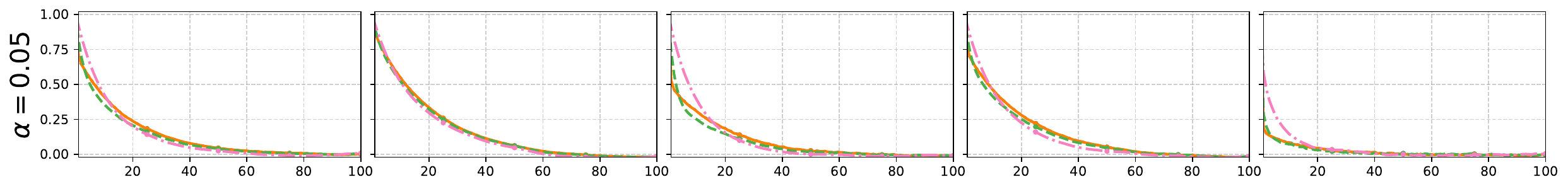}
    \includegraphics[width=.8\linewidth]{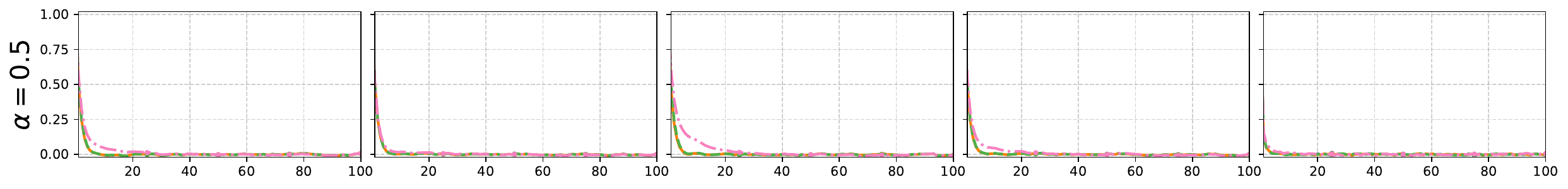}
    \includegraphics[width=.8\linewidth]{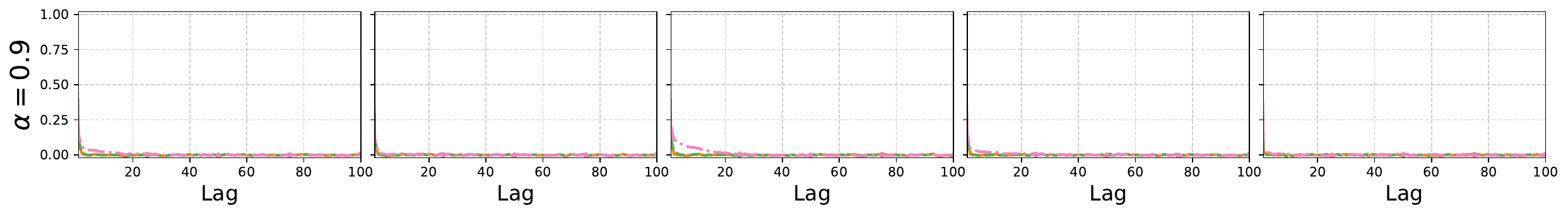}
    
    \caption{The impact of the $\alpha$ and $\rho$ variables on the lagged autocorrelation function in DAG \#2. Each row refers to a different value for $\alpha$, and each column a different feature ($X_1,X_4,X_5,X_8$ and $y$). The y-axis refers to the autocorrelation, and the x-axis to the lag.}
    \label{fig:acf-plots}
\end{figure}

Figure~\ref{fig:acf-seasonality} shows the \ac{ACF} plots when seasonality is introduced in the generation process.
The hyperparameters of the seasonality component (Equation \ref{eq:seasonality}) for these plots are set to $A_i=0.2$, $T_i=50$, and $\phi=0$.
The resulting autocorrelation patterns exhibit periodic peaks characteristic of seasonal time series.

Because \ac{EWMA} smoothing is applied to the root nodes of the \ac{DAG}, the resulting seasonal patterns propagate through the causal structure while being progressively attenuated along the causal chain, due to the combined effect of intermediate transformations and noise terms. 
In practice, larger values of $\alpha$ tend to smooth the seasonal signal more aggressively, reducing the strength of the induced temporal dependence in downstream variables.

Overall, the combination of autoregressive noise, EWMA smoothing, and seasonal components allows the proposed framework to generate a wide variety of temporally dependent data streams, capturing several forms of non-stationarity commonly observed in real-world streaming scenarios.
This makes \ac{CaDrift} a suitable framework for evaluating models under non-stationary streaming data.

\begin{figure}[htb]
    \centering
    \includegraphics[width=0.3\linewidth]{figures/legend_rho.pdf}
    \includegraphics[width=.8\linewidth]{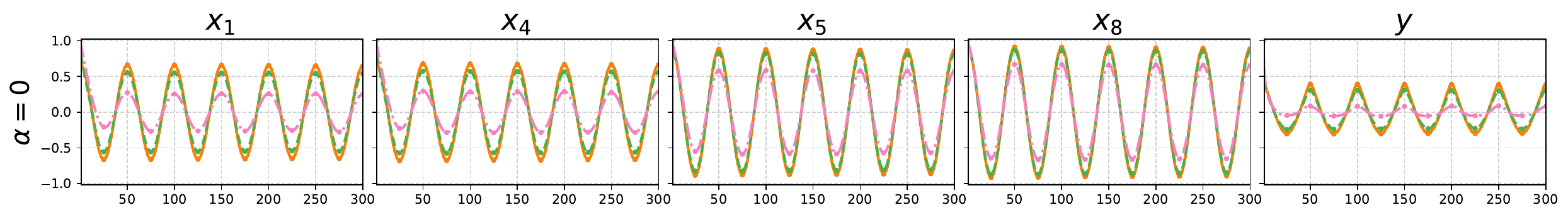}
    \includegraphics[width=.8\linewidth]{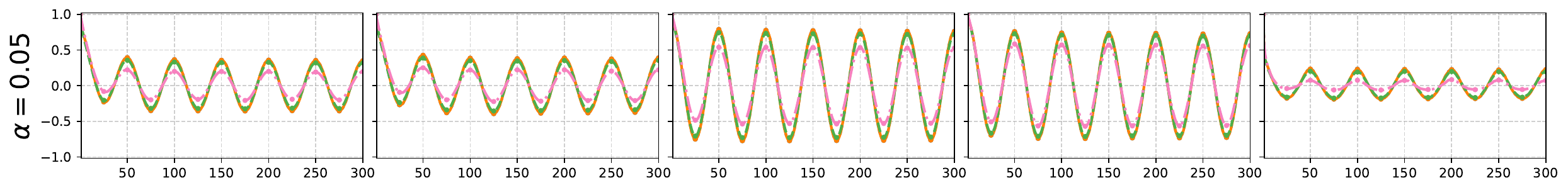}
    \includegraphics[width=.8\linewidth]{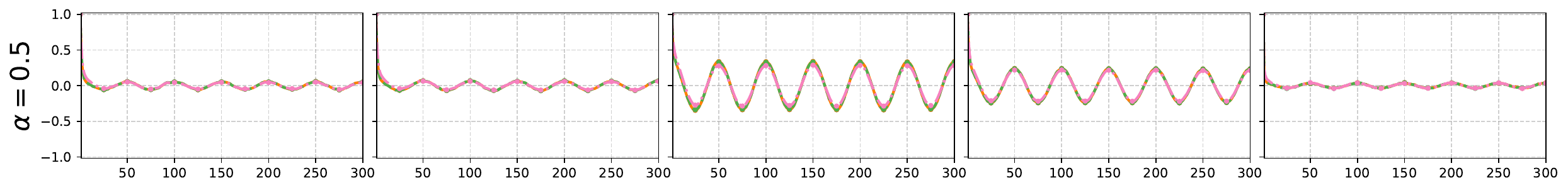}
    \includegraphics[width=.8\linewidth]{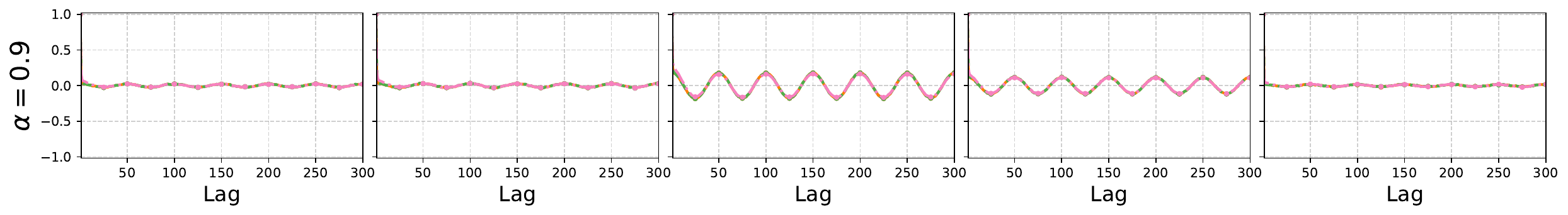}
    
    \caption{ACF plot on features with seasonality in DAG \#2. The y-axis refers to the autocorrelation, and the x-axis to the lag.}
    \label{fig:acf-seasonality}
\end{figure}

To further investigate temporal dependence in data generated by \ac{CaDrift}, we apply the Ljung–Box (LB) test \citep{ljung1978measure}, with detailed results for \ac{DAG} \#2 reported in Appendix~\ref{app:lb-test}.
The LB test rejects the null hypothesis of no autocorrelation when AR, EWMA, or seasonal components are included, confirming the presence of temporal dependence. 
In contrast, when these components are absent, the test does not reject the null, indicating no evidence of autocorrelation and consistency with i.i.d. samples.

\subsection{Case Study: Synthesizing the Electricity Market dataset}
\label{subsec:case-study-elec2}

In this section, we investigate whether causally synthesized data can improve the performance of online learners on the ELEC2 dataset.
The Electricity Market dataset (ELEC2) \citep{harries1999} is widely used to evaluate data stream learning methods, in which the task is to predict whether the price will increase or decrease given demand- and supply-related features. 
The dataset contains time-ordered instances collected at 30-minute intervals, exhibiting temporal dependencies.
Following common practice in data stream mining \citet{losing2016}, we ignore the features $date$ and $nswprice$.

To infer the \ac{DAG} for the ELEC2 dataset, we employ the CD-NOD algorithm \citep{zhang2017cdnod}, a constraint-based causal discovery method.
The DAG inferred from the ELEC2 dataset can be found in Figure \ref{fig:elec-dag-pc}. 
We use the first half of the dataset (22,656 samples) to learn the \ac{DAG} structure via the CD-NOD algorithm.
To detect seasonality in the features, we use the Fast Fourier Transform (FFT) \citep{musbah2019fft}.

\begin{figure}[htb]
    \centering
    \includegraphics[width=0.45\linewidth]{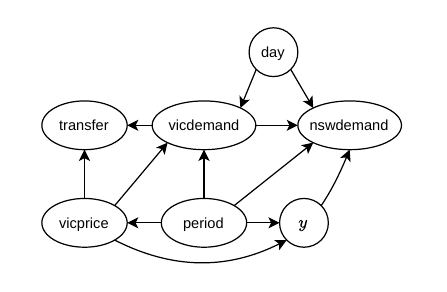}
    
    \caption{DAG inferred from the ELEC2 dataset through the CD-NOD algorithm.}
    \label{fig:elec-dag-pc}
\end{figure}

For each inner node $X_i$ in the \ac{DAG}, we train a feedforward neural network (NN) that maps its parent variables to $X_i$, thereby approximating the causal mechanisms in the data.
When the node is continuous (all endogenous nodes in the ELEC2 \ac{DAG}), we fit an NN regressor. Otherwise (the target node $y$), we fit a classifier.
Further, these learned mappers are used to sample instances in the generated stream. 
Details of the neural network hyperparameters used to map the features are provided in Appendix \ref{app:hyperparameters}.
For the root nodes ($day$ and $period$), we used different functions to simulate the behavior observed in the real-world ELEC2 dataset. 
These details, as well as additional experimental details in this section, are available in Appendix \ref{app:detail-exp-elec2}.
We provide ACF plots for the synthesized data compared to the original ELEC2 dataset in Appendix \ref{app:acf-elec2-synth-real}, where we see that the root nodes closely follow the ACF plots of the real-world ELEC2.
However, for many features, the data synthesized by \ac{CaDrift} do not follow the same seasonal trend -- the FFT only detected seasonality for the $period$ feature, but not for other features that present seasonality.

We use two competitive online learners: \textbf{IncA-DES} \citep{barboza2025}, an online dynamic ensemble selection method, and \textbf{\ac{ARF}} \citep{gomes2017}, an online ensemble. In addition to the online learners, we also include \textbf{TabPFNv2.5} \citep{grinsztajn2026tabpfn25advancingstateart}, a transformer-based tabular classifier.
For TabPFN, we adopt the data-stream setup described by \citet{lourenço2025}, which maintains a sliding context window.
The learners' hyperparameters are reported in Appendix \ref{app:learners-hyperparameters}.
To isolate the effect of causal data augmentation, all experimental conditions are kept identical across both settings, with the only difference being the inclusion of $n$ additional synthetic samples generated by \ac{CaDrift} before starting the evaluation.

In Figure~\ref{fig:preq-acc-elec2-downstream}, we present the windowed prequential accuracy \citep{gama2014survey} without data augmentation and with \ac{CaDrift}-based augmentation for different values of $n$.
IncA-DES (Figure~\ref{subfig:incades-elec}), \ac{ARF} (Figure~\ref{subfig:arf-elec}), and TabPFN (Figure~\ref{subfig:tabpfn-elec}) exhibit improved predictive performance when trained with synthetic data when no delay is applied.
For \ac{ARF}, the prequential accuracy curves eventually converge, as the \ac{HT} base learners are progressively updated with sufficient real data from the ELEC2 stream.
The same happens with TabPFN, as the context window receives more samples from the real ELEC2 dataset.
In contrast, the improvements for IncA-DES persist for longer periods.

However, when we apply delayed feedback to the stream, the performance gain persists for fewer data samples, as we observe in Figure~\ref{fig:preq-acc-elec2-downstream-delay}.
Since the gain in accuracy is more short-term under delayed feedback, we plot a smaller subset of the stream.
For TabPFN, the early gain in performance is not observed -- data augmentation only shows its benefits after more supervised knowledge is observed by the model.
This result suggests that the generator may synthesize samples based on an outdated concept, causing the model to overfit to obsolete cause--effect relationships.
Figure~\ref{fig:preq-acc-elec2-diff} supports this observation: although a large volume of synthetic data boosts early accuracy (within the first 100--200 data points) for ARF and IncA-DES, it eventually hinders adaptation, causing performance to degrade faster than the non-augmented baseline.
Again, TabPFN shows no early accuracy gain after data augmentation under delayed feedback -- benefits come after supervised knowledge.
Consequently, causal augmentation is most beneficial for immediate recovery but requires continuous real-world supervision to prevent long-term reliance on outdated distributions.


\begin{figure}[htb]
    \centering
    \subfloat[IncA-DES (immediate feedback).]{\label{subfig:incades-elec}
    \includegraphics[width=0.45\linewidth]{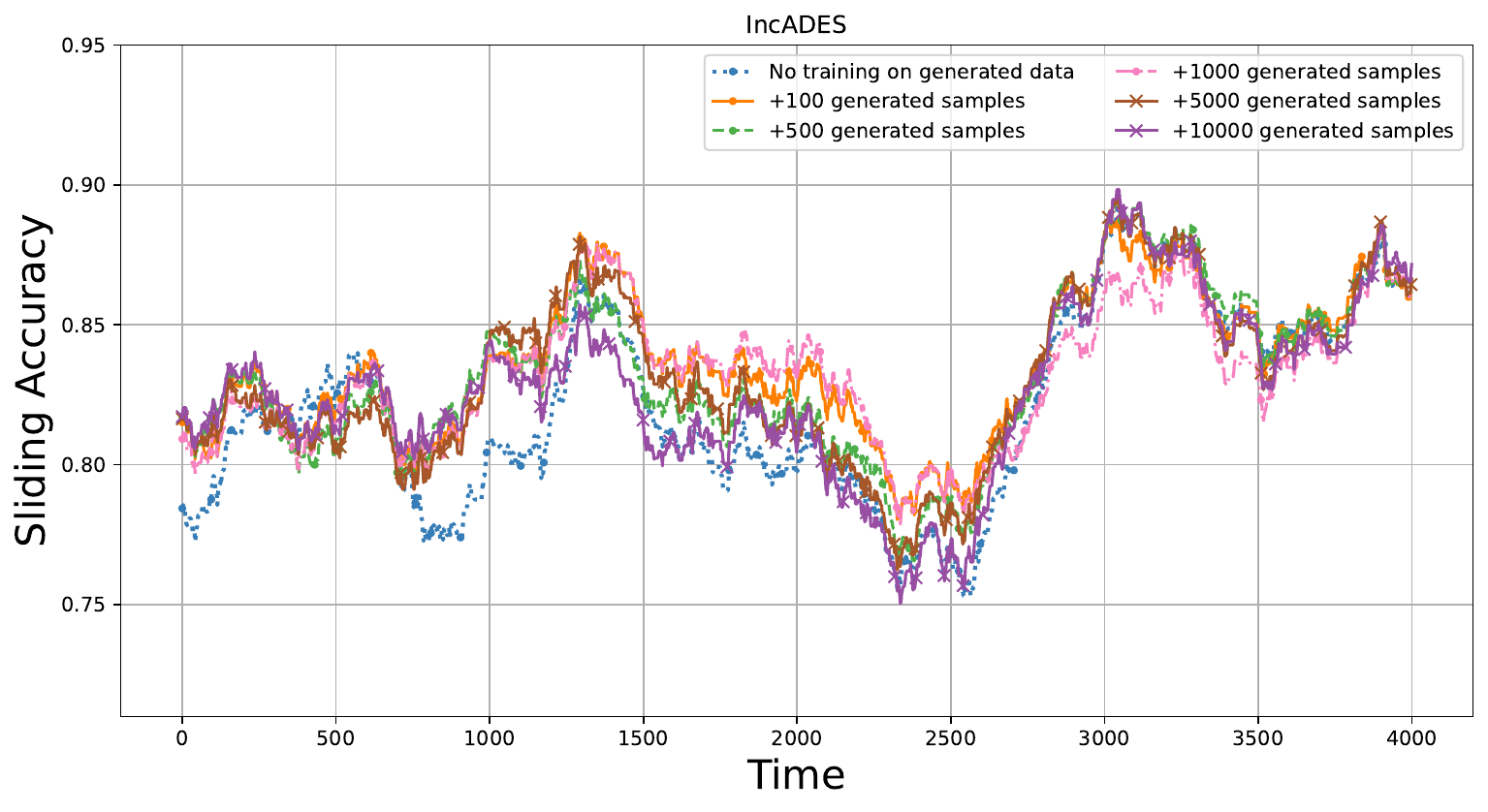}}
    \subfloat[ARF (immediate feedback).]{\label{subfig:arf-elec}
    \includegraphics[width=0.45\linewidth]{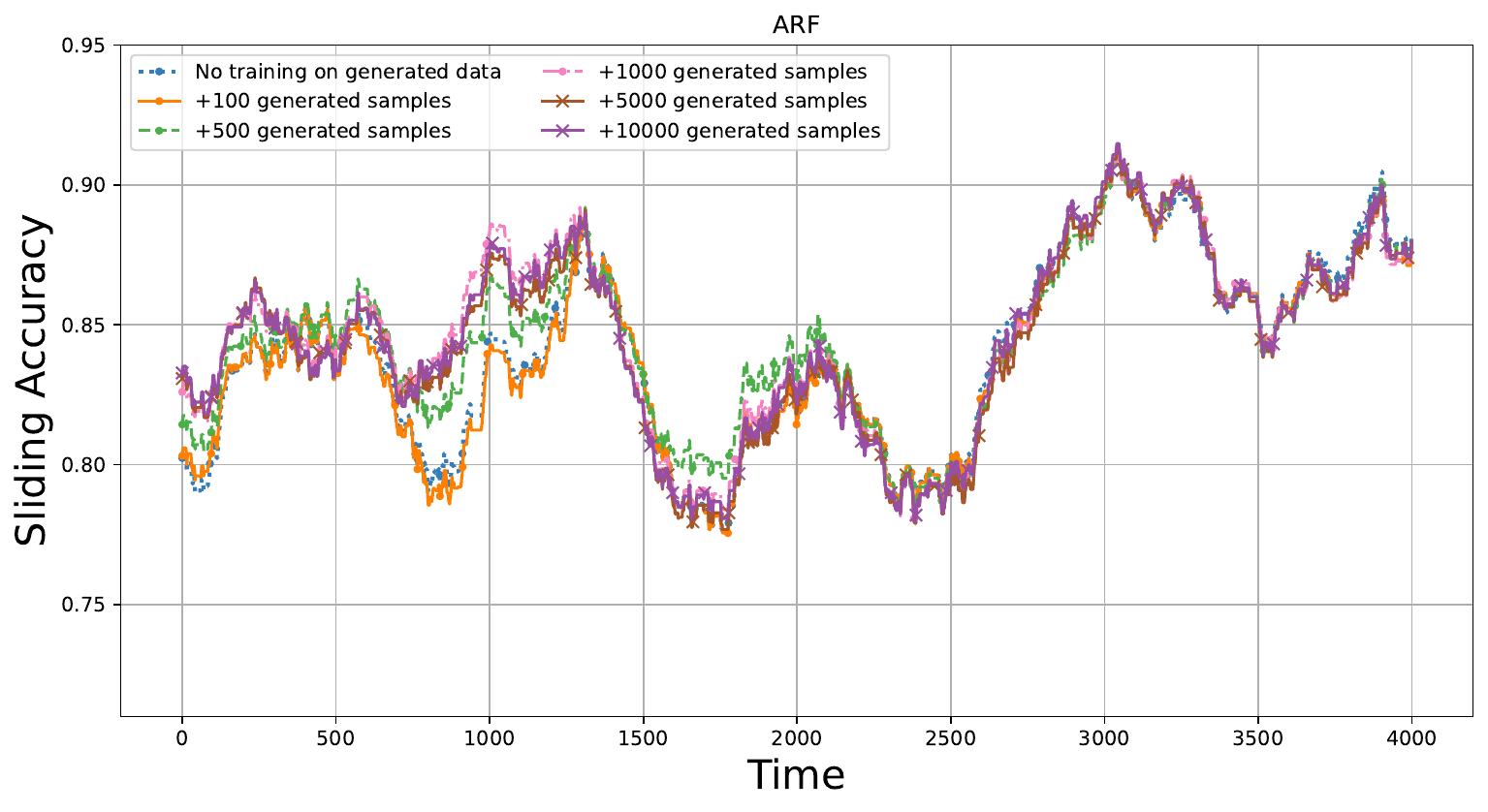}}

    \subfloat[TabPFN (immediate feedback).]{\label{subfig:tabpfn-elec}
    \includegraphics[width=0.45\linewidth]{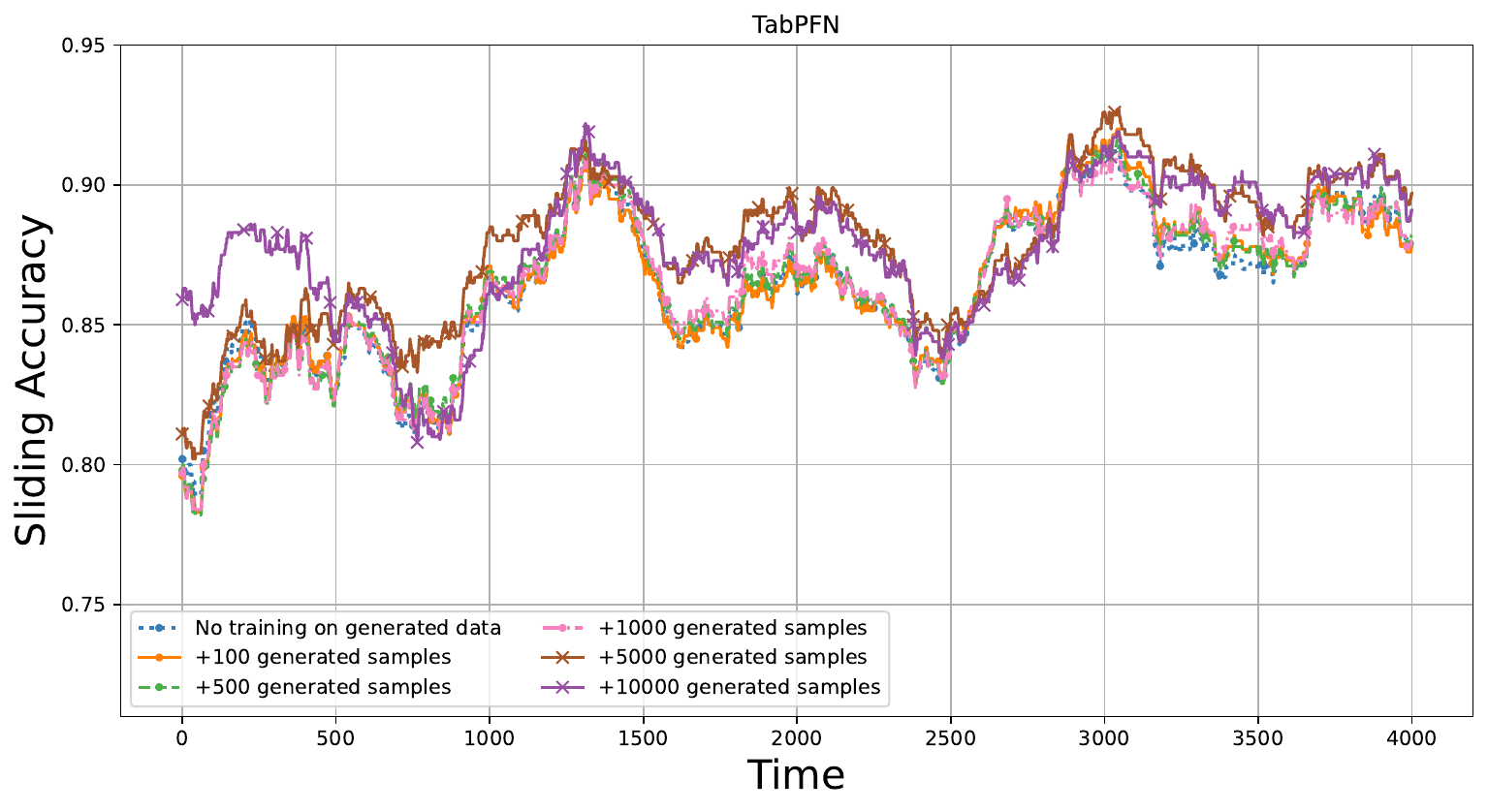}}
    
    \caption{The impact of samples generated by CaDrift on IncA-DES, ARF, and TabPFN (immediate feedback).}
    \label{fig:preq-acc-elec2-downstream}
\end{figure}

\begin{figure}[htb]
    \centering
    \subfloat[IncA-DES (delay).]{\label{subfig:incades-elec-delay}
    \includegraphics[width=0.45\linewidth]{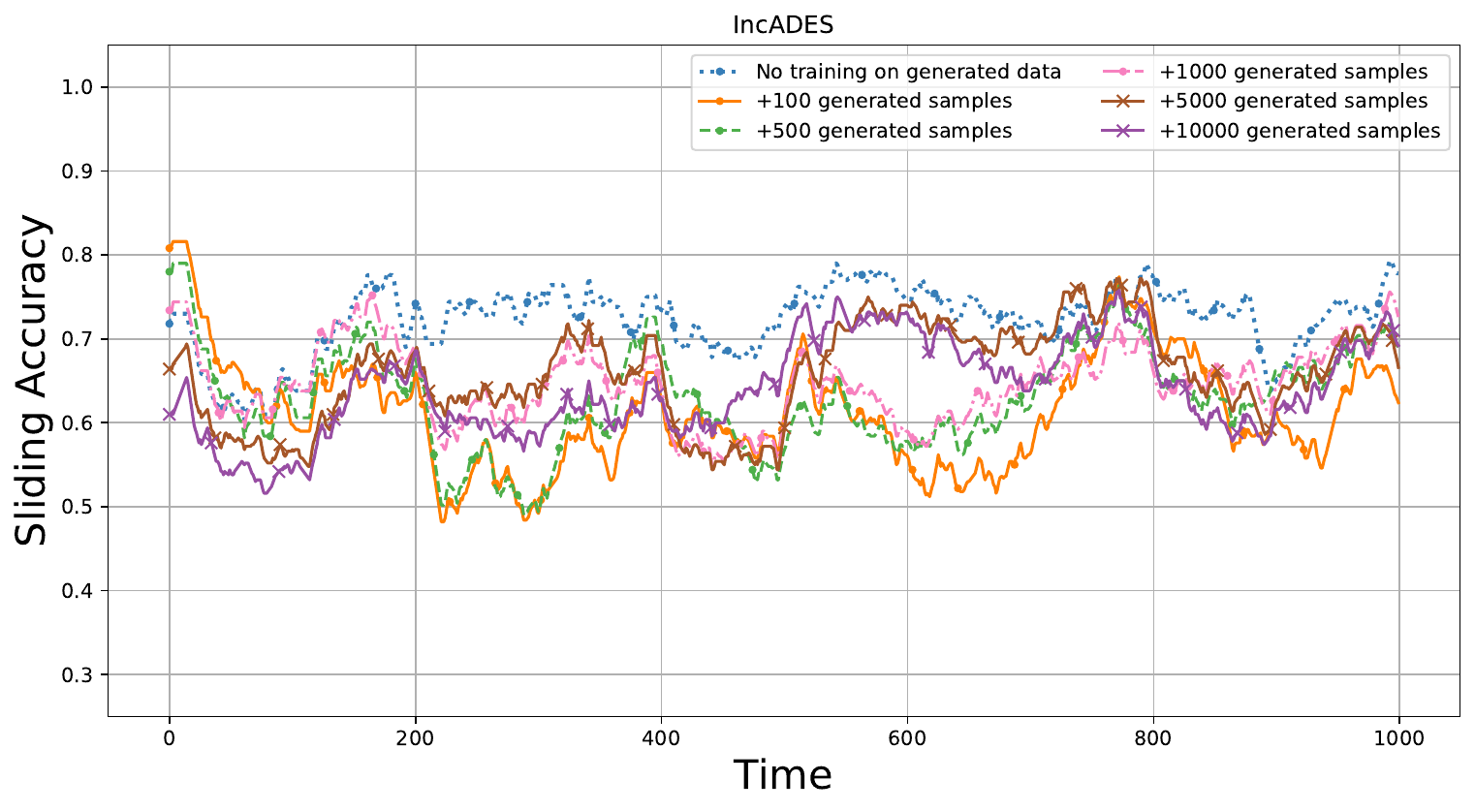}}
    \subfloat[ARF (delay).]{\label{subfig:arf-elec-delay}
    \includegraphics[width=0.45\linewidth]{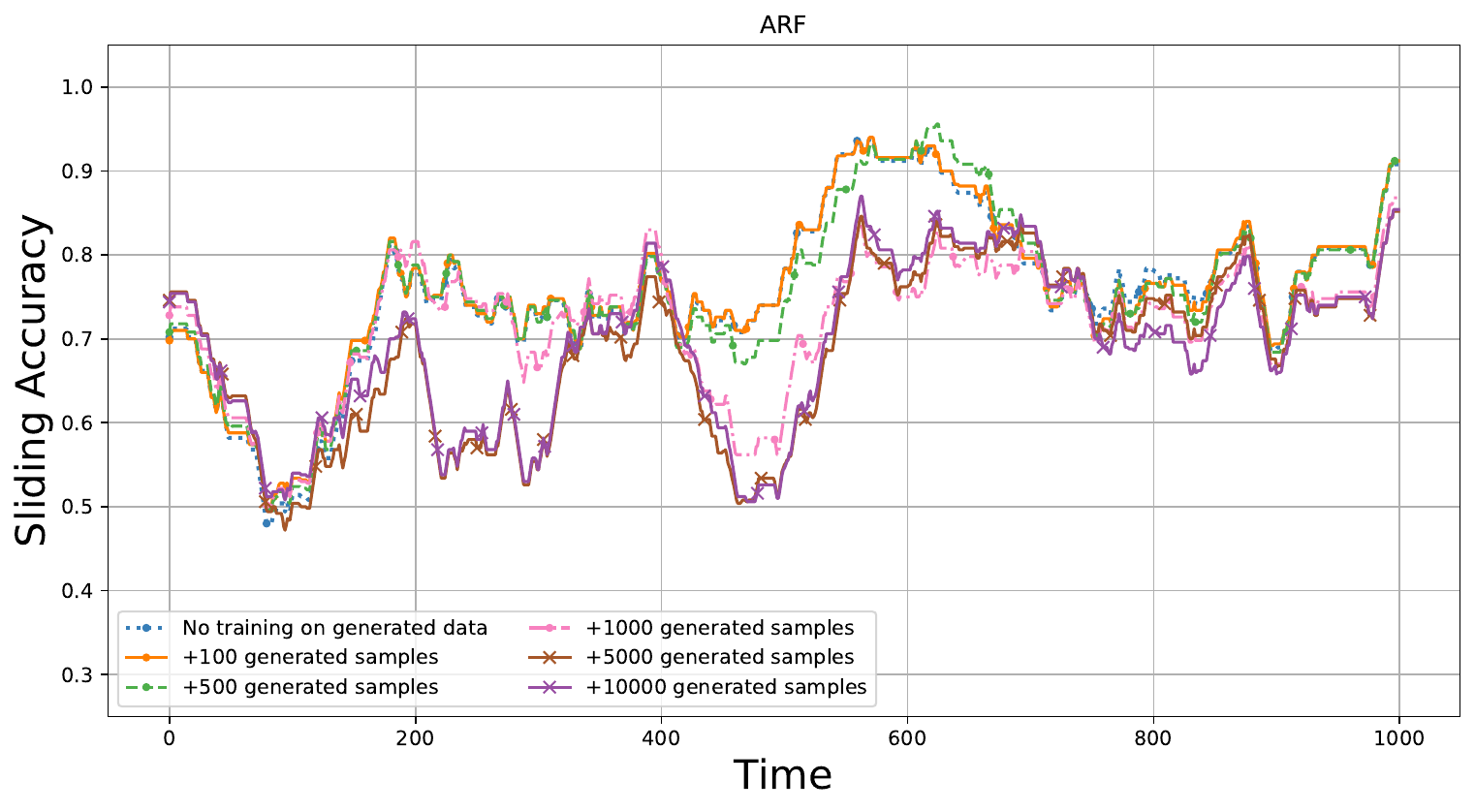}}

    \subfloat[TabPFN (delay).]{\label{subfig:tabpfn-elec-delay}
    \includegraphics[width=0.45\linewidth]{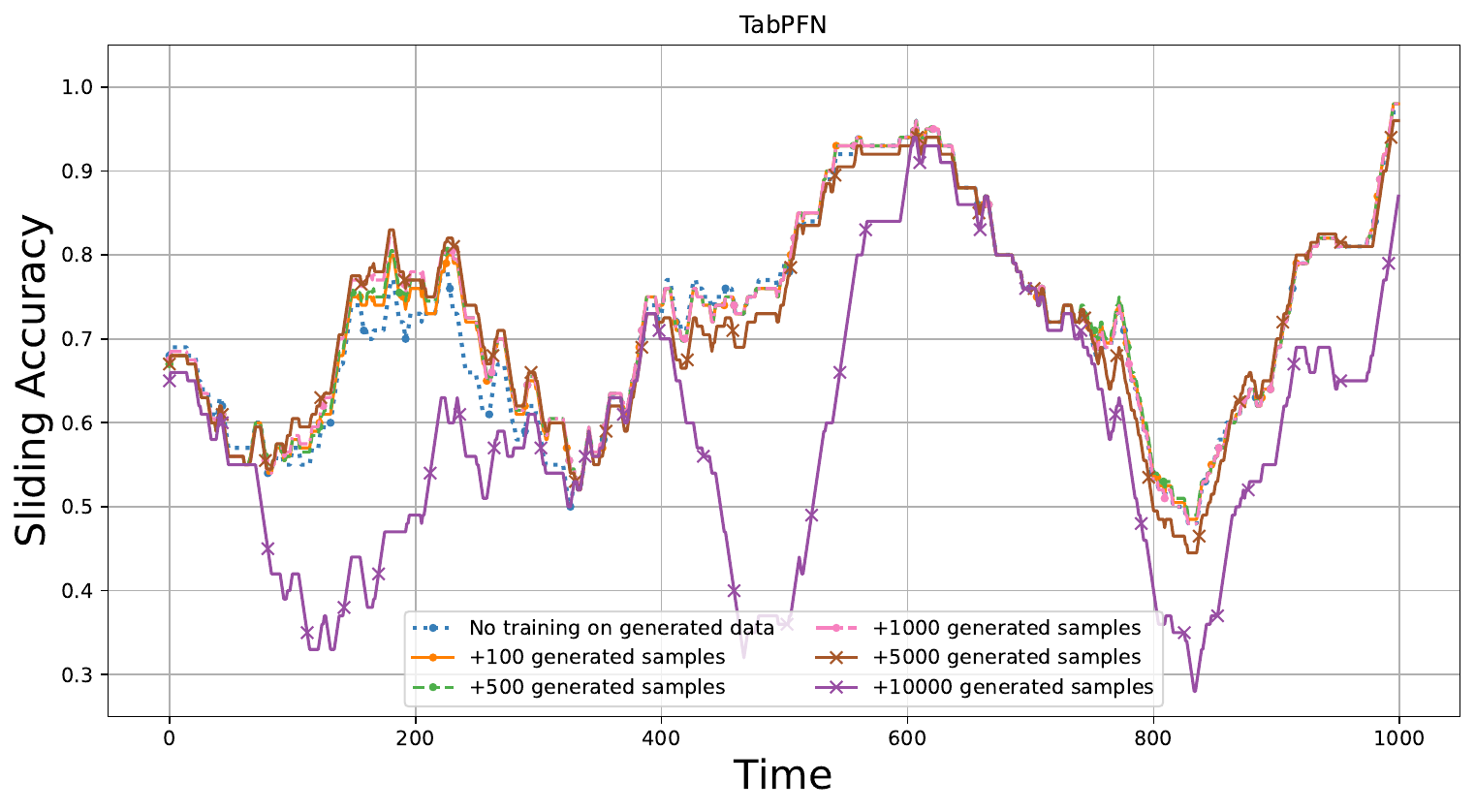}}
    
    \caption{The impact of samples generated by CaDrift on IncA-DES, ARF, and TabPFN (delayed feedback).}
    \label{fig:preq-acc-elec2-downstream-delay}
\end{figure}

\begin{figure}[htb]
    \centering
    \subfloat[IncA-DES.]{\label{subfig:incades-elec-diff}
    \includegraphics[width=0.45\linewidth]{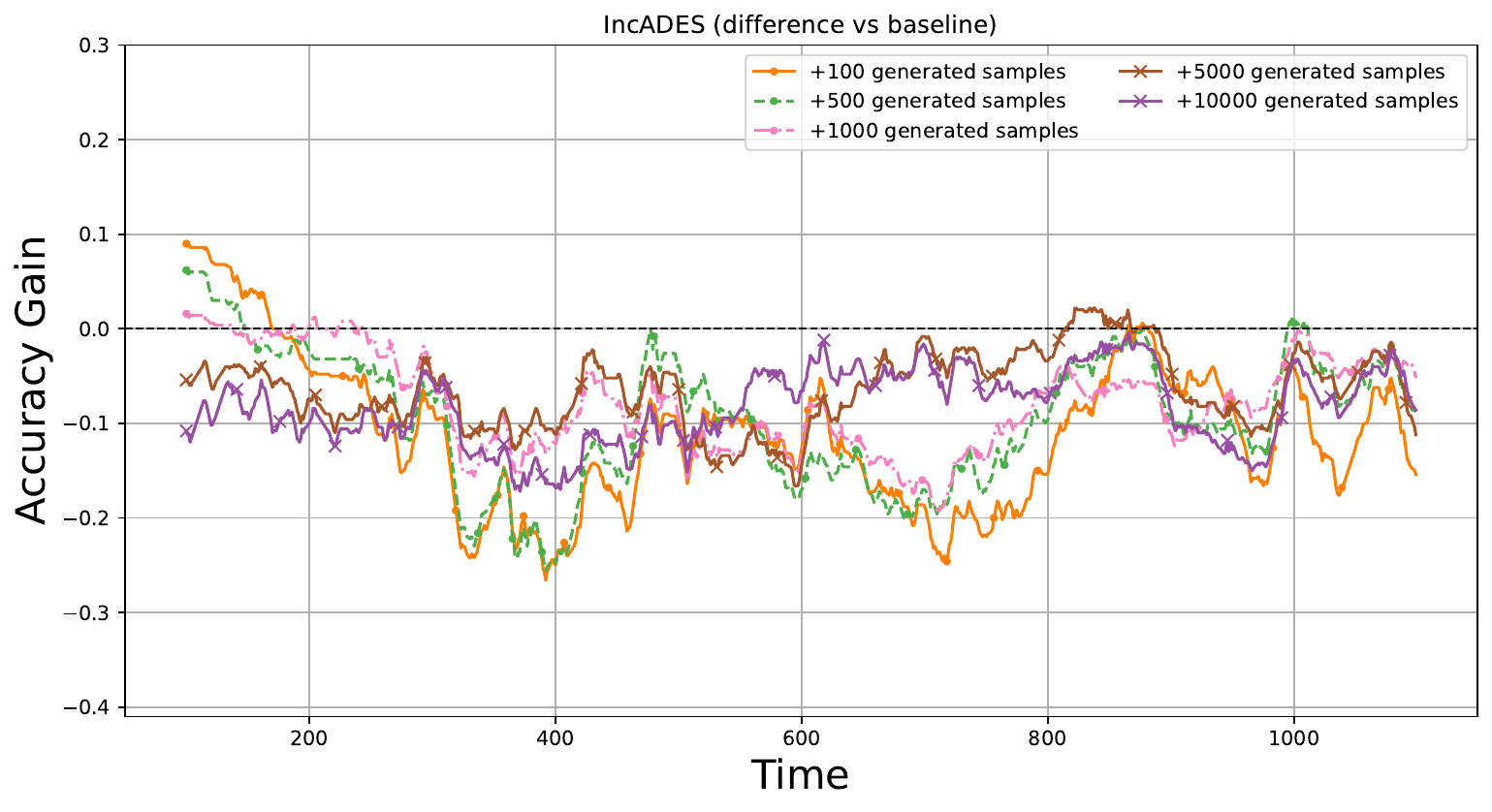}}    
    \subfloat[ARF.]{\label{subfig:arf-elec-diff}
    \includegraphics[width=0.45\linewidth]{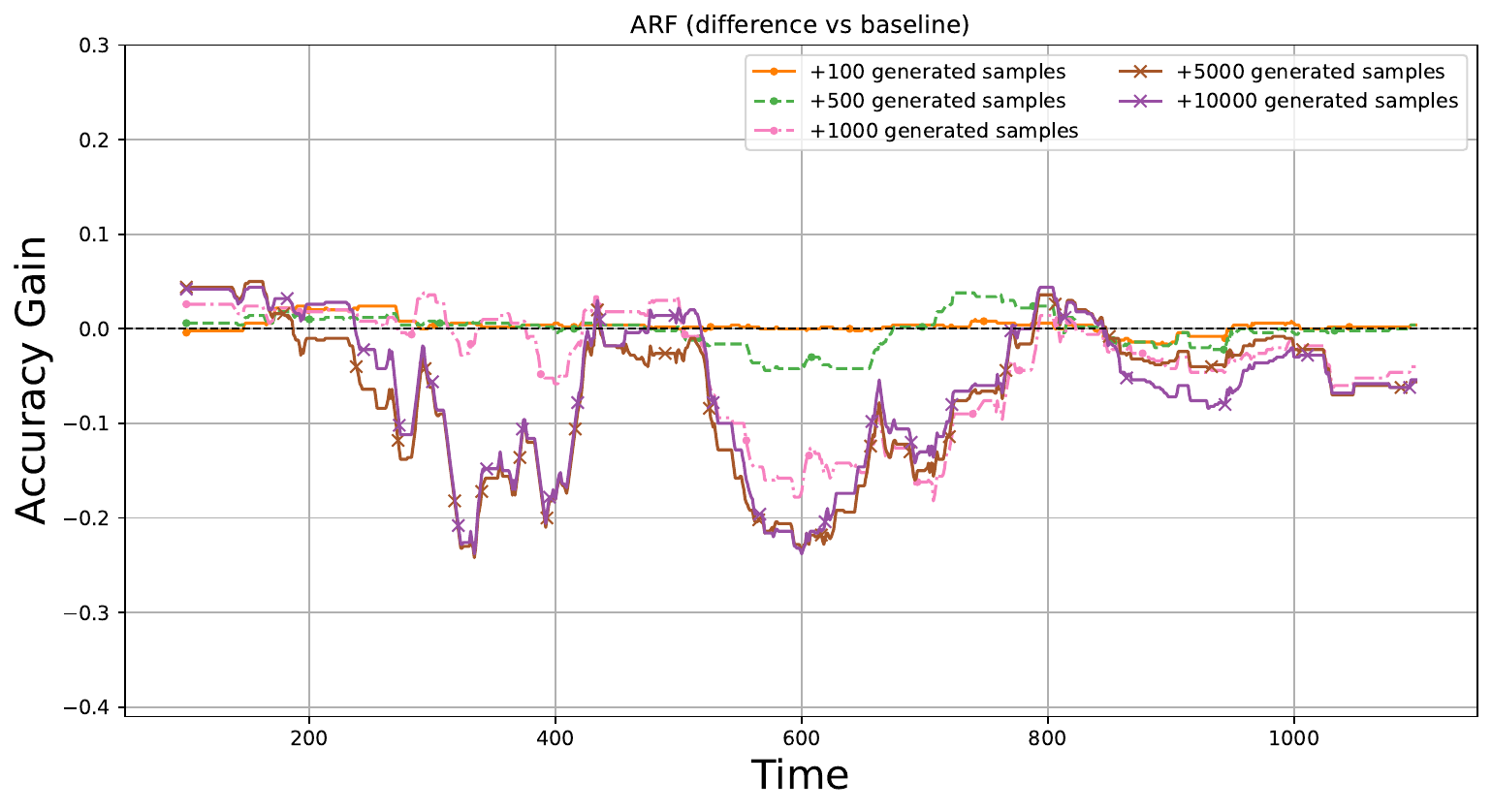}}

    \subfloat[TabPFN.]{\label{subfig:tabpfn-elec-diff}
    \includegraphics[width=0.45\linewidth]{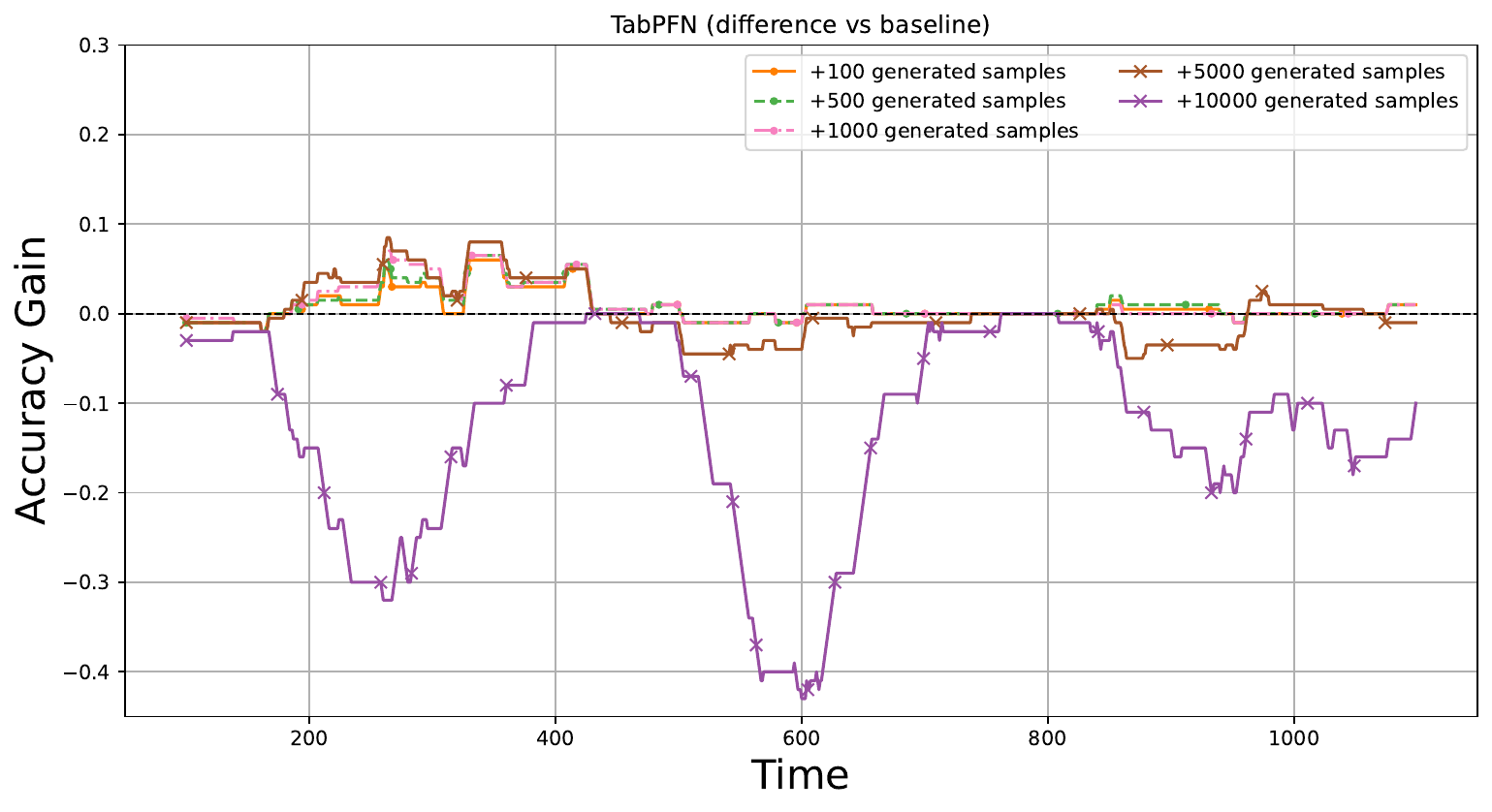}}
    
    \caption{The difference in prequential accuracy of learners when using CaDrift for data augmentation vs. when no data augmentation is used under delayed feedback.}
    \label{fig:preq-acc-elec2-diff}
\end{figure}

In Table~\ref{tab:no-aug-vs-aug}, we present the average accuracy after data augmentation on the 500 subsequent samples (approximately 10 days in the ELEC2 dataset), and 100 subsequent samples on the delayed protocol. 
The improvement in accuracy achieved by data augmentation is evident: under immediate feedback, when $n=10,000$, we observe increases of 3.1 percentage points in accuracy for \ac{ARF}, 3.3 percentage points for IncA-DES when $n=500$ and $n=10,000$, and of 5.5 percentage points for TabPFN when $n=10,000$.
Interestingly, TabPFNv2.5's initial performance when the context window consists only of synthetic samples $n=10,000$ is greater than when using only the original ELEC2 ($n=0$), and the TabPFN trained with augmented synthetic samples seems to surpass the baseline in most of the time stamp (Figure~\ref{subfig:tabpfn-elec}).
When we apply delayed feedback, the average accuracy on the 100 subsequent samples (2 days in the ELEC2 dataset) increases by 4.4 percentage points for \ac{ARF} when $n=5,000$, and 9.0 percentage points for IncA-DES when $n=100$.
TabPFN, however, presents no immediate gain in accuracy under this setup.
If the supervision of recent data samples is not provided, overfitting to a previous concept may degrade the model's performance more rapidly.
Our analysis suggests that online learners can use synthetic data as an initial knowledge base under delayed feedback, and subsequently adapt through incremental updates.
In-context learners rely more directly on the composition of their context window, making context relevance particularly important.

\begin{table}[htb]
    \centering
    \caption{Average accuracy (\%) with and without data augmentation. We report results on the next 500 samples for immediate supervision and on the next 100 samples for delayed supervision. The baseline corresponds to no data augmentation ($n=0$). Values in parentheses correspond to the difference in accuracy compared to the baseline.}
    \begin{tabular}{l c c c c c c}
    \toprule
         Model & \multicolumn{6}{c}{$n$} \\
         \cmidrule(lr){2-7}
         & 0 & 100 & 500 & 1{,}000 & 5{,}000 & 10{,}000 \\
         \midrule
         \textit{test-then-train} \\
         ARF & 80.2 & 80.3\textcolor{green}{\small (+0.1)} & 81.4\textcolor{green}{\small (+1.2)} & 82.6\textcolor{green}{\small (+2.4)} & 83.1\textcolor{green}{\small (+2.9)} & \textbf{83.3}\textcolor{green}{\small (+3.1)} \\
         IncA-DES & 78.4 & 81.5\textcolor{green}{\small (+3.1)} & \textbf{81.7}\textcolor{green}{\small (+3.3)} & 80.9\textcolor{green}{\small (+1.5)} & 81.6\textcolor{green}{\small (+3.2)} & \textbf{81.7}\textcolor{green}{\small (+3.3)} \\
         TabPFNv2.5 & 80.1 & 79.7\textcolor{red}{\small (-0.4)} & 79.8\textcolor{red}{\small (-0.3)} & 79.8\textcolor{red}{\small (-0.3)} & 81.4\textcolor{green}{\small (+1.3)} & \textbf{85.8}\textcolor{green}{\small (+5.5)} \\
         \midrule
         \textit{delay} \\
         ARF & 70.2 & 69.8\textcolor{red}{\small (-0.4)} & 70.8\textcolor{green}{\small (+0.6)} & 72.8\textcolor{green}{\small (+2.6)} & \textbf{74.6}\textcolor{green}{\small (+4.4)} & 74.4\textcolor{green}{\small (+3.6)} \\
         IncA-DES & 71.8 & 80.8\textcolor{green}{\small (+9.0)} & 78.0\textcolor{green}{\small (+6.2)} & 73.4\textcolor{green}{\small (+2.6)} & \textbf{66.4}\textcolor{red}{\small (-5.4)} & 61.0\textcolor{red}{\small (-10.8)} \\
         TabPFNv2.5 & \textbf{68.3} & 67.3\textcolor{red}{\small (-1.0)} & 67.3\textcolor{red}{\small (-1.0)} & 67.6\textcolor{red}{\small (-0.7)} & 67.3\textcolor{red}{\small (-1.3)} & 65.3\textcolor{red}{\small (-3.0)} \\
         \bottomrule
    \end{tabular}
    \label{tab:no-aug-vs-aug}
\end{table}

It is important to note that causal discovery methods such as the CD-NOD algorithm \citep{zhang2017cdnod} used in our experiments rely on assumptions (e.g., causal sufficiency and model specification) and may not recover the true underlying causal structure in all cases. 
As a result, the learned graph should be interpreted as an approximation of the data-generating process. 
Nevertheless, this approach allows us to construct structurally grounded generators that preserve the main dependencies present in the data.
In some applications, human interventions can be done to mitigate potential inaccuracies in the recovered \ac{DAG} structure -- domain expertise is used to validate edges and orient directions that remain ambiguous.

\subsection{Discussion}

\paragraph{Interventions on different nodes produce distinct effects.}
Our results show that causal drift events induce distinct, interpretable effects on both the data distributions and model performance.
Moreover, the combination of marginal and conditional measures proved essential for disentangling different drift types, as several scenarios were indistinguishable when considered from a single perspective.
\emph{Confounder drift} demonstrates that accuracy degradation does not necessarily arise from changes in $P(y\mid \mathbf{x})$, but can instead be driven by shifts in spurious associations between the target and observed variables.

\paragraph{Causal descendants create predictive shortcuts for models.} Furthermore, the presence of direct causal descendants can mask the impact of \emph{target drift} on accuracy. 
In such cases, models exploit these descendants as predictive shortcuts rather than learning the underlying causal mechanism. 
While this can yield strong performance under stationary conditions, it leads to performance degradation when the relationship between the target and its descendants changes (as in Figure \ref{subfig:mmd-kl-acc-structural}).
As the presence of such causal descendants strongly separates the target variable, the \ac{HT} tends to rely on them when constructing its splits.
Consequently, it does not recover the underlying causal relationship that generated the data, but instead exploits more predictive spurious dependencies that are easier to capture.

\paragraph{Causal data augmentation enhances subsequent performance.} In the case study in Section \ref{subsec:case-study-elec2}, models trained on samples generated by \ac{CaDrift} using a graph inferred from the ELEC2 dataset improved the accuracy of subsequent data samples in the stream.
This improvement is particularly evident in the early stages of learning, especially a short-term gain under delayed feedback.
This suggests that \ac{CaDrift} can be used for post-drift data augmentation.
By generating samples that respect the underlying causal structure, \ac{CaDrift} provides informative training instances that accelerate adaptation to new concepts.
However, this augmentation carries the risk of overfitting the model to a previous concept.
We leave further exploration of post-drift data augmentation through \ac{SCM}-based generation for future work.

\paragraph{On the identifiability of causal drift events.} An important consideration comprises the identifiability of causal drift events from observational data.
Although our experiments demonstrate that different causal interventions can produce distinct distributional and predictive signatures, these do not necessarily imply that the underlying causal mechanism that has changed can be identified from observations alone.
For instance, causal sufficiency, faithfulness, and the independent causal mechanism (ICM) assumptions may be required to identify relevant drifting causal structures.
In real-world environments, however, these assumptions are hardly satisfied, especially when confounders are unobserved.
Additional sources of information, such as instrumental variables \cite{angrist1996identification} or interventional data, are promising tools when pursuing causal explanations of distributional changes.
Developing strategies that leverage such information to localize drift in data streams is an interesting prospect for future work.

Overall, our results highlight that the causal structure of the data plays a central role in determining both the detectability of drift and its impact on stream learners. 
In particular, confounding and causal descendants can obscure or amplify the observable effects of drift, influencing both statistical measures and predictive behavior. 
This underlying complexity underscores the necessity of the proposed causal taxonomy. 
Without a principled framework to classify drift based on distinct structural interventions, the community lacks the vocabulary to distinguish between superficial symptoms -- such as shifts in spurious correlations -- and actual changes in the data-generating mechanisms. 
Ultimately, these findings reinforce that robust stream learning requires not merely detecting \emph{when} distributions change, but systematically diagnosing \emph{why} they change -- a paradigm shift for which our taxonomy provides the essential foundation.

\section{Conclusion}

In this work, we introduced a causal taxonomy of concept drift, moving beyond purely probabilistic characterizations of distribution shift. 
By grounding our framework in \acp{SCM}, we categorize drift events according to their causal origins, including exogenous, endogenous, target, confounder, and structural drift.

To operationalize this perspective, we proposed \ac{CaDrift}, an \ac{SCM}-based data stream generator that simulates controlled, mechanism-level drift events while incorporating realistic temporal dynamics, such as autoregressive noise and seasonality.

Our empirical analysis shows that drift types with different causal origins induce distinct patterns of change in marginal and conditional distributions, leading to qualitatively different impacts on predictive performance.
For instance, target drift can degrade accuracy without altering covariate distributions, whereas confounder drift changes observed associations while preserving underlying causal effects.
Distinguishing between these causal signatures provides a rigorous foundation for assessing model reliability beyond simple performance metrics. 
This insight enables researchers to move beyond ``black-box'' drift detection and develop more interpretable diagnostic tools that explain why a model is failing, a critical requirement for deploying machine learning systems in evolving domains.


By integrating causal discovery algorithms to synthesize real-world data streams, \ac{CaDrift} bridges the gap between causal theory and practical evaluation in non-stationary environments, even under limited or delayed feedback. 
This work provides a foundation for the development of causally-aware approaches that can more effectively reason about and adapt to the underlying mechanisms of change in data streams.


\begin{thebibliography}{60}
\providecommand{\natexlab}[1]{#1}
\providecommand{\url}[1]{\texttt{#1}}
\expandafter\ifx\csname urlstyle\endcsname\relax
  \providecommand{\doi}[1]{doi: #1}\else
  \providecommand{\doi}{doi: \begingroup \urlstyle{rm}\Url}\fi

\bibitem[Agrahari \& Singh(2022)Agrahari and Singh]{agrahari2022}
Supriya Agrahari and Anil~Kumar Singh.
\newblock Concept drift detection in data stream mining : A literature review.
\newblock \emph{Journal of King Saud University - Computer and Information Sciences}, 34\penalty0 (10, Part B):\penalty0 9523--9540, 2022.
\newblock ISSN 1319-1578.

\bibitem[Agrahari \& Singh(2024)Agrahari and Singh]{Agrahari2024}
Supriya Agrahari and Anil~Kumar Singh.
\newblock Comparison based analysis of window approach for concept drift detection and adaptation.
\newblock \emph{Applied Intelligence}, 55\penalty0 (1):\penalty0 39, Nov 2024.
\newblock ISSN 1573-7497.

\bibitem[Angrist et~al.(1996)Angrist, Imbens, and Rubin]{angrist1996identification}
Joshua~D Angrist, Guido~W Imbens, and Donald~B Rubin.
\newblock Identification of causal effects using instrumental variables.
\newblock \emph{Journal of the American statistical Association}, 91\penalty0 (434):\penalty0 444--455, 1996.

\bibitem[Arjovsky et~al.(2020)Arjovsky, Bottou, Gulrajani, and Lopez-Paz]{arjovsky2020invariantriskminimization}
Martin Arjovsky, Léon Bottou, Ishaan Gulrajani, and David Lopez-Paz.
\newblock Invariant risk minimization, 2020.

\bibitem[Baier et~al.(2022)Baier, Schlör, Schöffer, and Kühl]{baier2021}
Lucas Baier, Tim Schlör, Jakob Schöffer, and Niklas Kühl.
\newblock Detecting concept drift with neural network model uncertainty, 2022.

\bibitem[Barboza et~al.(2025)Barboza, de~Almeida, de~Souza~Britto, Sabourin, and Cruz]{barboza2025}
Eduardo~V.L. Barboza, Paulo R.~Lisboa de~Almeida, Alceu de~Souza~Britto, Robert Sabourin, and Rafael~M.O. Cruz.
\newblock Inca-des: An incremental and adaptive dynamic ensemble selection approach using online k-d tree neighborhood search for data streams with concept drift.
\newblock \emph{Information Fusion}, 123:\penalty0 103272, 2025.
\newblock ISSN 1566-2535.

\bibitem[Bengio et~al.(2019)Bengio, Deleu, Rahaman, Ke, Lachapelle, Bilaniuk, Goyal, and Pal]{bengio2019metatransfer}
Yoshua Bengio, Tristan Deleu, Nasim Rahaman, Rosemary Ke, Sébastien Lachapelle, Olexa Bilaniuk, Anirudh Goyal, and Christopher Pal.
\newblock A meta-transfer objective for learning to disentangle causal mechanisms, 2019.

\bibitem[Bifet \& Gavald{\`a}(2007)Bifet and Gavald{\`a}]{Bifet2007}
Albert Bifet and Ricard Gavald{\`a}.
\newblock Learning from time-changing data with adaptive windowing.
\newblock In \emph{SDM}, pp.\  443--448, 2007.

\bibitem[Bifet et~al.(2009)Bifet, Holmes, Pfahringer, Kirkby, and Gavaldà]{bifet2009a}
Albert Bifet, Geoffrey Holmes, Bernhard Pfahringer, Richard Kirkby, and Ricard Gavaldà.
\newblock New ensemble methods for evolving data streams.
\newblock \emph{Proceedings of the ACM SIGKDD International Conference on Knowledge Discovery and Data Mining}, pp.\  139–148, 06 2009.

\bibitem[Box et~al.(2015)Box, Jenkins, Reinsel, and Ljung]{box2015time}
George~EP Box, Gwilym~M Jenkins, Gregory~C Reinsel, and Greta~M Ljung.
\newblock \emph{Time series analysis: forecasting and control}.
\newblock John Wiley \& Sons, 2015.

\bibitem[Chakrabarty \& Biswas(2018)Chakrabarty and Biswas]{Chakrabarty2018}
Navoneel Chakrabarty and Sanket Biswas.
\newblock A statistical approach to adult census income level prediction.
\newblock In \emph{2018 International Conference on Advances in Computing, Communication Control and Networking (ICACCCN)}, pp.\  207--212, 2018.

\bibitem[Chen et~al.(2024)Chen, Bello, Locatello, Aragam, and Ravikumar]{Chen_Neurips2024}
Tianyu Chen, Kevin Bello, Francesco Locatello, Bryon Aragam, and Pradeep Ravikumar.
\newblock Identifying general mechanism shifts in linear causal representations.
\newblock In A.~Globerson, L.~Mackey, D.~Belgrave, A.~Fan, U.~Paquet, J.~Tomczak, and C.~Zhang (eds.), \emph{Advances in Neural Information Processing Systems}, volume~37, pp.\  42405--42429. Curran Associates, Inc., 2024.

\bibitem[Domingos \& Hulten(2000)Domingos and Hulten]{domingos2000}
Pedro Domingos and Geoff Hulten.
\newblock Mining high-speed data streams.
\newblock \emph{Association for Computing Machinery}, pp.\  71–80, 2000.

\bibitem[Eastwood et~al.(2022)Eastwood, Robey, Singh, von K\"{u}gelgen, Hassani, Pappas, and Sch\"{o}lkopf]{eastwood2022QRM}
Cian Eastwood, Alexander Robey, Shashank Singh, Julius von K\"{u}gelgen, Hamed Hassani, George~J. Pappas, and Bernhard Sch\"{o}lkopf.
\newblock Probable domain generalization via quantile risk minimization.
\newblock In S.~Koyejo, S.~Mohamed, A.~Agarwal, D.~Belgrave, K.~Cho, and A.~Oh (eds.), \emph{Advances in Neural Information Processing Systems}, volume~35, pp.\  17340--17358. Curran Associates, Inc., 2022.

\bibitem[English \& Gaur(2010)English and Gaur]{English2010}
B.~Keith English and Aditya~H. Gaur.
\newblock \emph{The Use and Abuse of Antibiotics and the Development of Antibiotic Resistance}, pp.\  73--82.
\newblock Springer New York, New York, NY, 2010.
\newblock ISBN 978-1-4419-0981-7.

\bibitem[Fujiwara et~al.(2023)Fujiwara, Koyama, Kiritoshi, Okawachi, Izumitani, and Shimizu]{fujiwara23ajitlingam}
Daigo Fujiwara, Kazuki Koyama, Keisuke Kiritoshi, Tomomi Okawachi, Tomonori Izumitani, and Shohei Shimizu.
\newblock Causal discovery for non-stationary non-linear time series data using just-in-time modeling.
\newblock In Mihaela van~der Schaar, Cheng Zhang, and Dominik Janzing (eds.), \emph{Proceedings of the Second Conference on Causal Learning and Reasoning}, volume 213 of \emph{Proceedings of Machine Learning Research}, pp.\  880--894. PMLR, 11--14 Apr 2023.

\bibitem[Gama et~al.(2004)Gama, Medas, Castillo, and Rodrigues]{gama2004}
Jo{\~a}o Gama, Pedro Medas, Gladys Castillo, and Pedro Rodrigues.
\newblock Learning with drift detection.
\newblock In Ana L.~C. Bazzan and Sofiane Labidi (eds.), \emph{Advances in Artificial Intelligence -- SBIA 2004}, pp.\  286--295, Berlin, Heidelberg, 2004. Springer Berlin Heidelberg.
\newblock ISBN 978-3-540-28645-5.

\bibitem[Gama et~al.(2014)Gama, {\v{Z}}liobait{\.e}, Bifet, Pechenizkiy, and Bouchachia]{gama2014survey}
Jo{\~a}o Gama, Indr{\.e} {\v{Z}}liobait{\.e}, Albert Bifet, Mykola Pechenizkiy, and Abdelhamid Bouchachia.
\newblock A survey on concept drift adaptation.
\newblock \emph{ACM computing surveys (CSUR)}, 46\penalty0 (4):\penalty0 1--37, 2014.

\bibitem[Gao et~al.(2023)Gao, Addanki, Yu, Rossi, and Kocaoglu]{gao2023pcmciomega}
Shanyun Gao, Raghavendra Addanki, Tong Yu, Ryan Rossi, and Murat Kocaoglu.
\newblock Causal discovery in semi-stationary time series.
\newblock In A.~Oh, T.~Naumann, A.~Globerson, K.~Saenko, M.~Hardt, and S.~Levine (eds.), \emph{Advances in Neural Information Processing Systems}, volume~36, pp.\  46624--46657. Curran Associates, Inc., 2023.
\newblock URL \url{https://proceedings.neurips.cc/paper_files/paper/2023/file/91f9fb16b5679115a777ade51af87e48-Paper-Conference.pdf}.

\bibitem[Gao et~al.(2025)Gao, Addanki, Yu, Rossi, and Kocaoglu]{gao2025causaldiscoverydrivenchangepoint}
Shanyun Gao, Raghavendra Addanki, Tong Yu, Ryan~A. Rossi, and Murat Kocaoglu.
\newblock Causal discovery-driven change point detection in time series, 2025.

\bibitem[Gomes et~al.(2017)Gomes, Bifet, Read, Barddal, Enembreck, Pfahringer, Holmes, and Abdessalem]{gomes2017}
Heitor~Murilo Gomes, Albert Bifet, Jesse Read, Jean~Paul Barddal, Fabrício Enembreck, Bernhard Pfahringer, Geoff Holmes, and Talel Abdessalem.
\newblock Adaptive random forests for evolving data stream classification.
\newblock \emph{Machine Learning}, 106:\penalty0 1--27, 10 2017.

\bibitem[Gower-Winter et~al.(2026)Gower-Winter, Groen, and Krempl]{gowerwinter2026windowdilemmaconceptdrift}
Brandon Gower-Winter, Misja Groen, and Georg Krempl.
\newblock The window dilemma: Why concept drift detection is ill-posed, 2026.

\bibitem[Gretton et~al.(2012)Gretton, Borgwardt, Rasch, Sch{{\"o}}lkopf, and Smola]{gretton12ammd}
Arthur Gretton, Karsten~M. Borgwardt, Malte~J. Rasch, Bernhard Sch{{\"o}}lkopf, and Alexander Smola.
\newblock A kernel two-sample test.
\newblock \emph{Journal of Machine Learning Research}, 13\penalty0 (25):\penalty0 723--773, 2012.

\bibitem[Grinsztajn et~al.(2026)Grinsztajn, Flöge, Key, Birkel, Jund, Roof, Jäger, Safaric, Alessi, Hayler, Manium, Yu, Jablonski, Hoo, Garg, Robertson, Bühler, Moroshan, Purucker, Cornu, Wehrhahn, Bonetto, Schölkopf, Gambhir, Hollmann, and Hutter]{grinsztajn2026tabpfn25advancingstateart}
Léo Grinsztajn, Klemens Flöge, Oscar Key, Felix Birkel, Philipp Jund, Brendan Roof, Benjamin Jäger, Dominik Safaric, Simone Alessi, Adrian Hayler, Mihir Manium, Rosen Yu, Felix Jablonski, Shi~Bin Hoo, Anurag Garg, Jake Robertson, Magnus Bühler, Vladyslav Moroshan, Lennart Purucker, Clara Cornu, Lilly~Charlotte Wehrhahn, Alessandro Bonetto, Bernhard Schölkopf, Sauraj Gambhir, Noah Hollmann, and Frank Hutter.
\newblock Tabpfn-2.5: Advancing the state of the art in tabular foundation models, 2026.

\bibitem[G\"{u}nther et~al.(2023)G\"{u}nther, Ninad, and Runge]{gunther23ajpcmci}
Wiebke G\"{u}nther, Urmi Ninad, and Jakob Runge.
\newblock Causal discovery for time series from multiple datasets with latent contexts.
\newblock In Robin~J. Evans and Ilya Shpitser (eds.), \emph{Proceedings of the Thirty-Ninth Conference on Uncertainty in Artificial Intelligence}, volume 216 of \emph{Proceedings of Machine Learning Research}, pp.\  766--776. PMLR, 31 Jul--04 Aug 2023.

\bibitem[Harries et~al.(1999)Harries, Wales, et~al.]{harries1999}
Michael Harries, New~South Wales, et~al.
\newblock Splice-2 comparative evaluation: Electricity pricing.
\newblock \emph{University of New South Wales, School of Computer Science and Engineering}, 1999.

\bibitem[Hernandez~Aros et~al.(2024)Hernandez~Aros, Bustamante~Molano, Gutierrez-Portela, Moreno~Hernandez, and Rodr{\'i}guez~Barrero]{Hernandez2024}
Ludivia Hernandez~Aros, Luisa~Ximena Bustamante~Molano, Fernando Gutierrez-Portela, John~Johver Moreno~Hernandez, and Mario~Samuel Rodr{\'i}guez~Barrero.
\newblock Financial fraud detection through the application of machine learning techniques: a literature review.
\newblock \emph{Humanities and Social Sciences Communications}, 11\penalty0 (1):\penalty0 1130, Sep 2024.
\newblock ISSN 2662-9992.

\bibitem[Hinder et~al.(2024)Hinder, Vaquet, and Hammer]{hinder2024a}
Fabian Hinder, Valerie Vaquet, and Barbara Hammer.
\newblock One or two things we know about concept drift—a survey on monitoring in evolving environments. part a: detecting concept drift.
\newblock \emph{Frontiers in Artificial Intelligence}, 7:\penalty0 1330257, 2024.
\newblock ISSN 2624-8212.

\bibitem[Hollmann et~al.(2025)Hollmann, M{\"u}ller, Purucker, Krishnakumar, K{\"o}rfer, Hoo, Schirrmeister, and Hutter]{Hollmann2025}
Noah Hollmann, Samuel M{\"u}ller, Lennart Purucker, Arjun Krishnakumar, Max K{\"o}rfer, Shi~Bin Hoo, Robin~Tibor Schirrmeister, and Frank Hutter.
\newblock Accurate predictions on small data with a tabular foundation model.
\newblock \emph{Nature}, 637\penalty0 (8045):\penalty0 319--326, Jan 2025.
\newblock ISSN 1476-4687.

\bibitem[Hulten et~al.(2001)Hulten, Spencer, and Domingos]{hulten2001}
Geoff Hulten, Laurie Spencer, and Pedro Domingos.
\newblock Mining time-changing data streams.
\newblock In \emph{Proceedings of the Seventh ACM SIGKDD International Conference on Knowledge Discovery and Data Mining}, KDD '01, pp.\  97–106, New York, NY, USA, 2001. Association for Computing Machinery.
\newblock ISBN 158113391X.

\bibitem[Hyndman \& Athanasopoulos(2018)Hyndman and Athanasopoulos]{hyndman2018forecasting}
Rob~J Hyndman and George Athanasopoulos.
\newblock \emph{Forecasting: principles and practice}.
\newblock OTexts, 2018.

\bibitem[Komnick et~al.(2025)Komnick, Lammers, Hammer, Vaquet, and Hinder]{komnick2025causalexplanationconceptdrift}
David Komnick, Kathrin Lammers, Barbara Hammer, Valerie Vaquet, and Fabian Hinder.
\newblock Causal explanation of concept drift -- a truly actionable approach, 2025.

\bibitem[Komorniczak(2025)]{komorniczak2025}
Joanna Komorniczak.
\newblock Synthetic non-stationary data streams for recognition of the unknown, 2025.

\bibitem[Krueger et~al.(2021)Krueger, Caballero, Jacobsen, Zhang, Binas, Zhang, Priol, and Courville]{krueger21a_rex}
David Krueger, Ethan Caballero, Joern-Henrik Jacobsen, Amy Zhang, Jonathan Binas, Dinghuai Zhang, Remi~Le Priol, and Aaron Courville.
\newblock Out-of-distribution generalization via risk extrapolation (rex).
\newblock In Marina Meila and Tong Zhang (eds.), \emph{Proceedings of the 38th International Conference on Machine Learning}, volume 139 of \emph{Proceedings of Machine Learning Research}, pp.\  5815--5826. PMLR, 18--24 Jul 2021.

\bibitem[Kurian \& Allali(2024)Kurian and Allali]{Kurian2024}
Jeomoan~Francis Kurian and Mohamed Allali.
\newblock Detecting drifts in data streams using kullback-leibler (kl) divergence measure for data engineering applications.
\newblock \emph{Journal of Data, Information and Management}, 6\penalty0 (3):\penalty0 207--216, Sep 2024.
\newblock ISSN 2524-6364.

\bibitem[Lee \& Lee(2024)Lee and Lee]{lee2024}
Sanghyuk Lee and Eunmi Lee.
\newblock Score function design for decision making using conditional kullback-leibler divergence.
\newblock In \emph{2024 IEEE International Conference on Artificial Intelligence in Engineering and Technology (IICAIET)}, pp.\  216--221, 2024.

\bibitem[Liu \& Kuang(2023)Liu and Kuang]{liu23lin}
Chenxi Liu and Kun Kuang.
\newblock Causal structure learning for latent intervened non-stationary data.
\newblock In Andreas Krause, Emma Brunskill, Kyunghyun Cho, Barbara Engelhardt, Sivan Sabato, and Jonathan Scarlett (eds.), \emph{Proceedings of the 40th International Conference on Machine Learning}, volume 202 of \emph{Proceedings of Machine Learning Research}, pp.\  21756--21777. PMLR, 23--29 Jul 2023.

\bibitem[Ljung \& Box(1978)Ljung and Box]{ljung1978measure}
Greta~M Ljung and George~EP Box.
\newblock On a measure of lack of fit in time series models.
\newblock \emph{Biometrika}, 65\penalty0 (2):\penalty0 297--303, 1978.

\bibitem[Losing et~al.(2016)Losing, Hammer, and Wersing]{losing2016}
Viktor Losing, Barbara Hammer, and Heiko Wersing.
\newblock Knn classifier with self adjusting memory for heterogeneous concept drift.
\newblock In \emph{2016 IEEE 16th International Conference on Data Mining (ICDM)}, pp.\  291--300, 2016.

\bibitem[Lourenço et~al.(2025)Lourenço, Gama, Xing, and Marreiros]{lourenço2025}
Afonso Lourenço, João Gama, Eric~P. Xing, and Goreti Marreiros.
\newblock In-context learning of evolving data streams with tabular foundational models, 2025.

\bibitem[Lu et~al.(2019)Lu, Liu, Dong, Gu, Gama, and Zhang]{lu2019}
Jie Lu, Anjin Liu, Fan Dong, Feng Gu, João Gama, and Guangquan Zhang.
\newblock Learning under concept drift: A review.
\newblock \emph{IEEE Transactions on Knowledge and Data Engineering}, 31\penalty0 (12):\penalty0 2346--2363, 2019.

\bibitem[Lu et~al.(2025)Lu, Lu, Liu, and Zhang]{Lu_Lu_Liu_Zhang_2025}
Pengqian Lu, Jie Lu, Anjin Liu, and Guangquan Zhang.
\newblock Early concept drift detection via prediction uncertainty.
\newblock \emph{Proceedings of the AAAI Conference on Artificial Intelligence}, 39\penalty0 (18):\penalty0 19124--19132, Apr. 2025.

\bibitem[Musbah et~al.(2019)Musbah, El-Hawary, and Aly]{musbah2019fft}
Hmeda Musbah, Mo~El-Hawary, and Hamed Aly.
\newblock Identifying seasonality in time series by applying fast fourier transform.
\newblock In \emph{2019 IEEE Electrical Power and Energy Conference (EPEC)}, pp.\  1--4, 2019.

\bibitem[Paim \& Enembreck(2025)Paim and Enembreck]{paim2025}
Aldo~M. Paim and Fabrício Enembreck.
\newblock Adaptive random tree ensemble for evolving data stream classification.
\newblock \emph{Knowledge-Based Systems}, 309:\penalty0 112830, 2025.
\newblock ISSN 0950-7051.

\bibitem[Pearl(2009)]{pearl2009causality}
Judea Pearl.
\newblock \emph{Causality}.
\newblock Cambridge university press, 2009.

\bibitem[Pesaranghader et~al.(2018)Pesaranghader, Viktor, and Paquet]{pesaranghader2018mcdiarmid}
Ali Pesaranghader, Herna Viktor, and Eric Paquet.
\newblock Mcdiarmid drift detection methods for evolving data streams, 2018.

\bibitem[Peters et~al.(2016)Peters, Bühlmann, and Meinshausen]{peters2016icp}
Jonas Peters, Peter Bühlmann, and Nicolai Meinshausen.
\newblock Causal inference by using invariant prediction: Identification and confidence intervals.
\newblock \emph{Journal of the Royal Statistical Society Series B: Statistical Methodology}, 78\penalty0 (5):\penalty0 947--1012, 11 2016.
\newblock ISSN 1369-7412.

\bibitem[Peters et~al.(2017)Peters, Janzing, and Sch{\"o}lkopf]{peters2017elements}
Jonas Peters, Dominik Janzing, and Bernhard Sch{\"o}lkopf.
\newblock \emph{Elements of causal inference: foundations and learning algorithms}.
\newblock The MIT Press, 2017.

\bibitem[Roberts(1959)]{roberts1959}
S.~W. Roberts.
\newblock Control chart tests based on geometric moving averages.
\newblock \emph{Technometrics}, 1\penalty0 (3):\penalty0 239--250, 1959.
\newblock ISSN 00401706.

\bibitem[Rosenfeld et~al.(2021)Rosenfeld, Ravikumar, and Risteski]{rosenfeld2021risksinvariantriskminimization}
Elan Rosenfeld, Pradeep Ravikumar, and Andrej Risteski.
\newblock The risks of invariant risk minimization, 2021.

\bibitem[Runge et~al.(2019{\natexlab{a}})Runge, Bathiany, Bollt, Camps-Valls, Coumou, Deyle, Glymour, Kretschmer, Mahecha, Mu{\~{n}}oz-Mar{\'i}, van Nes, Peters, Quax, Reichstein, Scheffer, Sch{\"o}lkopf, Spirtes, Sugihara, Sun, Zhang, and Zscheischler]{runge2019causeme}
Jakob Runge, Sebastian Bathiany, Erik Bollt, Gustau Camps-Valls, Dim Coumou, Ethan Deyle, Clark Glymour, Marlene Kretschmer, Miguel~D. Mahecha, Jordi Mu{\~{n}}oz-Mar{\'i}, Egbert~H. van Nes, Jonas Peters, Rick Quax, Markus Reichstein, Marten Scheffer, Bernhard Sch{\"o}lkopf, Peter Spirtes, George Sugihara, Jie Sun, Kun Zhang, and Jakob Zscheischler.
\newblock Inferring causation from time series in earth system sciences.
\newblock \emph{Nature Communications}, 10\penalty0 (1):\penalty0 2553, Jun 2019{\natexlab{a}}.
\newblock ISSN 2041-1723.

\bibitem[Runge et~al.(2019{\natexlab{b}})Runge, Nowack, Kretschmer, Flaxman, and Sejdinovic]{runge2019pcmci}
Jakob Runge, Peer Nowack, Marlene Kretschmer, Seth Flaxman, and Dino Sejdinovic.
\newblock Detecting and quantifying causal associations in large nonlinear time series datasets.
\newblock \emph{Science Advances}, 5\penalty0 (11):\penalty0 eaau4996, 2019{\natexlab{b}}.

\bibitem[Schölkopf et~al.(2021)Schölkopf, Locatello, Bauer, Ke, Kalchbrenner, Goyal, and Bengio]{scholkopf2021towardCRL}
Bernhard Schölkopf, Francesco Locatello, Stefan Bauer, Nan~Rosemary Ke, Nal Kalchbrenner, Anirudh Goyal, and Yoshua Bengio.
\newblock Toward causal representation learning.
\newblock \emph{Proceedings of the IEEE}, 109\penalty0 (5):\penalty0 612--634, 2021.

\bibitem[Sharma \& Kiciman(2020)Sharma and Kiciman]{sharma2020dowhy}
Amit Sharma and Emre Kiciman.
\newblock Dowhy: An end-to-end library for causal inference.
\newblock \emph{arXiv preprint arXiv:2011.04216}, 2020.

\bibitem[Street \& Kim(2001)Street and Kim]{street2001}
W.~Nick Street and YongSeog Kim.
\newblock A streaming ensemble algorithm (sea) for large-scale classification.
\newblock In \emph{Proceedings of the Seventh ACM SIGKDD International Conference on Knowledge Discovery and Data Mining}, KDD '01, pp.\  377–382, New York, NY, USA, 2001. Association for Computing Machinery.
\newblock ISBN 158113391X.

\bibitem[Xie et~al.(2026)Xie, Feofanov, Zhang, Palpanas, and Redko]{xie2026cauker}
Shifeng Xie, Vasilii Feofanov, Jianfeng Zhang, Themis Palpanas, and Ievgen Redko.
\newblock Cauker: Classification time series foundation models can be pretrained on synthetic data.
\newblock In \emph{The Fourteenth International Conference on Learning Representations}, 2026.

\bibitem[Yang et~al.(2025)Yang, Cheng, Luo, Zhou, and Zhang]{yang2025detecting}
Lingkai Yang, Jian Cheng, Yi~Luo, Tianbai Zhou, and Xiaoyu Zhang.
\newblock Detecting and rationalizing concept drift: A feature-level approach for understanding cause--effect relationships in dynamic environments.
\newblock \emph{Expert Systems with Applications}, 260:\penalty0 125365, 2025.

\bibitem[Yao et~al.(2025)Yao, Rancati, Cadei, Fumero, and Locatello]{yao2025unifyingcrl}
Dingling Yao, Dario Rancati, Riccardo Cadei, Marco Fumero, and Francesco Locatello.
\newblock Unifying causal representation learning with the invariance principle, 2025.

\bibitem[Yogi et~al.(2024)Yogi, V, K~M, Sujithra, Prasad, and Midhun]{yogi2024}
Kottala~Sri Yogi, Dankan~Gowda V, Mouna K~M, L.R. Sujithra, KDV Prasad, and P~Midhun.
\newblock Scalability and performance evaluation of machine learning techniques in high-volume social media data analysis.
\newblock In \emph{2024 11th International Conference on Reliability, Infocom Technologies and Optimization (Trends and Future Directions) (ICRITO)}, pp.\  1--6, 2024.

\bibitem[Zhang et~al.(2017)Zhang, Huang, Zhang, Glymour, and Sch\"{o}lkopf]{zhang2017cdnod}
Kun Zhang, Biwei Huang, Jiji Zhang, Clark Glymour, and Bernhard Sch\"{o}lkopf.
\newblock Causal discovery from nonstationary/heterogeneous data: skeleton estimation and orientation determination.
\newblock In \emph{Proceedings of the 26th International Joint Conference on Artificial Intelligence}, IJCAI'17, pp.\  1347–1353. AAAI Press, 2017.
\newblock ISBN 9780999241103.

\end{thebibliography}

\bibliographystyle{tmlr}

\appendix
\appendix
\onecolumn

\section{Deferred Proofs from Section 3}
\label{app:proofs}

\begin{proposition}[Propagation of Endogenous Drift]\label{prop:endogenous-app}
    Consider an SCM $\mathcal{M}$ and suppose an \emph{endogenous drift} at node $X_i$, such that the structural mechanism $f_i$ changes while all other mechanisms remain invariant. Assume in particular that the target mechanism $f_y$ is unchanged. Then: 
    \begin{enumerate}
        \item The structural conditional distribution $P(y \mid pa_y)$ remains invariant.
        \item For any ancestor $a \in \text{ancestors}(X_i)$, the conditional distribution $P(y \mid a)$ may change across concepts due to the altered mediation through $X_i$.
        \item Let $X_j$ be a descendant of $X_i$ such that $X_j$ \emph{d-separates} $X_i$ from $y$ -- that is, every path between $X_i$ and $y$ is blocked by conditioning on $X_j$. 
        Then $P(y \mid X_j) = P'(y \mid X_j)$. 
        Conversely, if $X_j$ does not d-separate $X_i$ from $y$, then $P(y\mid X_j)$ may change across concepts.
    \end{enumerate}
\end{proposition}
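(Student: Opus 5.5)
The plan is to treat all three items in a single augmented-graph framework. Following the standard ``regime indicator'' (or mechanism-change node) device, I would augment the causal graph $\mathcal{G}$ of $\mathcal{M}$ with an exogenous binary node $F$ whose only child is $X_i$. Then $X_i := f_i^{(F)}(pa_i, U_i)$, with $F=0$ giving $f_i$ and $F=1$ giving $f_i'$. Every other assignment, including $f_y$ and all noise distributions, does not depend on $F$. In this augmented SCM $\mathcal{M}^F$ we have $P(\cdot)=P^F(\cdot\mid F=0)$ and $P'(\cdot)=P^F(\cdot\mid F=1)$. Every invariance claim therefore reduces to a conditional independence $F \independent \cdot \mid \cdot$, and I would read these off via d-separation using the global Markov property of acyclic SCMs. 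Throughout I would assume positivity, meaning the conditioning values have positive probability under both concepts, so that the conditionals are defined in both regimes.

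\textbf{Item 1.} This follows directly from the structural assignment $y := f_y(pa_y, U_y)$. Since $U_y$ is jointly independent of all other noises, and $pa_y$ is a measurable function of the noises of ancestors of $y$ (which excludes $U_y$), we get $U_y \independent pa_y$ under both concepts. Hence
\begin{equation*}
P(y \mid pa_y = v) = P\bigl(f_y(v, U_y)\bigr),
\end{equation*}
which involves only $f_y$ and $P(U_y)$. Both are unchanged by hypothesis. Equivalently, $F \independent y \mid pa_y$ in $\mathcal{M}^F$: the only edges into $y$ come from $pa_y$, so every path from $F$ to $y$ enters $y$ through a conditioned non-collider parent.

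\textbf{Item 2.} This is a ``may change'' statement, so it suffices to exhibit one instance. I would take the linear chain $A \to X_i \to y$ with
\begin{equation*}
X_i := \beta A + U_i, \qquad y := \gamma X_i + U_y,
\end{equation*}
and Gaussian noises. Changing $\beta$ to $\beta'$ changes $\mathbb{E}[y\mid A=a]=\gamma\beta a$ whenever $\gamma \neq 0$, so $P(y\mid a)$ changes. In d-separation terms, $F \to X_i \to y$ is an open path given $A$.

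\textbf{Item 3.} This is the substantive part. The goal is to show that $X_j$ d-separating $X_i$ from $y$ in $\mathcal{G}$ implies that $X_j$ d-separates $F$ from $y$ in the augmented graph. Any path from $F$ to $y$ has the form $F \to X_i \,\pi$, where $\pi$ is a path from $X_i$ to $y$. I would split into cases on the first edge of $\pi$.
\begin{itemize}
\item If $\pi$ leaves $X_i$ by an outgoing edge, then $X_i$ is a non-collider on the full path. It is unconditioned, so it neither blocks nor opens anything, and blocking is decided by the nodes of $\pi$ beyond $X_i$.
\item If $\pi$ begins $X_i \leftarrow Z$, then $X_i$ is a collider on the full path. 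Because $X_j$ is a descendant of $X_i$, conditioning on $X_j$ \emph{opens} this collider. This is the step I expect to be the main obstacle.
\end{itemize}
To handle the collider case, I would observe that every node of $\pi$ other than $X_i$ has the same collider or non-collider status on the full path as on $\pi$, and the conditioning set $\{X_j\}$ is the same. By hypothesis $\pi$ is blocked by $X_j$. The blocking node cannot be the endpoint $X_i$, so it must be some interior node of $\pi$, and that node still blocks the full path. I would verify this carefully, including the degenerate situation where $X_j$ lies on $\pi$. The conclusion is $F \independent y \mid X_j$, which gives $P(y\mid X_j)=P'(y\mid X_j)$ even though $P(X_j)$ itself changes.

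For the converse, again only an example is needed. I would take $X_i \to X_j$ and $X_i \to y$ in a linear Gaussian model. Changing the mechanism of $X_i$, for instance its noise variance, alters the regression coefficient of $y$ on $X_j$ and hence alters $P(y\mid X_j)$. In d-separation terms, $F \to X_i \to y$ is an open path given $X_j$.
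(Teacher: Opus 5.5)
Your proposal is correct, and for item 3 it takes a genuinely different route from the paper. The paper argues inside each concept separately. The d-separation gives $y \independent X_i \mid X_j$ under $P$ and under $P'$. It then marginalizes over $X_i$, and once the weights $P(X_i\mid X_j)$ sum to one it is left with the identity $P(y\mid X_j)=P(y\mid X_j)$. From this it concludes that $P(y\mid X_j)$ does not depend on $f_i$. That computation never actually compares $P$ with $P'$. A conditional independence holding separately in each regime does not by itself force $P(y\mid X_j)=P'(y\mid X_j)$: $P(y\mid X_j)$ is determined by the joint law of $(y,X_j)$, which generally moves with $f_i$ because $X_j$ descends from $X_i$.

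Your regime node $F$ supplies exactly the missing ingredient. It turns the cross-concept claim into the single statement $F \independent y \mid X_j$ in $\mathcal{G}^F$. Your path argument proves that statement whatever happens at $X_i$: every $F$--$y$ path is $F\to X_i$ followed by an $X_i$--$y$ path $\pi$ of $\mathcal{G}$, and the interior node that blocks $\pi$ keeps both its collider status and its descendant set in $\mathcal{G}^F$. Two by-products follow. The forward direction does not actually need $X_j\in\mathrm{de}(X_i)$. The device also yields a uniform criterion: $P(y\mid S)$ is invariant whenever $S$ d-separates $F$ from $y$. For example, this shows the change allowed in item 2 can only occur when $y$ is a descendant of $X_i$.

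For item 2 and the converse in item 3, your explicit linear-Gaussian witnesses are the right form of proof for ``may change'' claims; the paper gives only a generic marginalization heuristic. The paper's write-up is shorter, but yours is the one that actually establishes the invariance in item 3.

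Two small things to tidy. In item 1, your ``equivalently'' sentence overlooks $F$--$y$ paths that reach $y$ through a child of $y$. These are blocked too: walking from $y$, the first collider is a descendant of $y$, so neither it nor any of its descendants lies in $pa_y$. In any case, your primary argument via $U_y \independent pa_y$ is already complete. In the converse example, realize the change through $f_i$ itself rather than through $P(U_i)$ of a root, which the paper's taxonomy would count as exogenous drift. For instance, take $X_i := \kappa U_i$ and alter $\kappa$, or give $X_i$ a parent and change its coefficient.
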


\textbf{Proof.} Let $P$ and $P'$ denote the joint distributions induced by the SCM before and after an endogenous drift at node $X_i$. 
Since the structural mechanism of the target node remains unchanged, $f_y = f'_y$.
Therefore, for any fixed value of the complete parent set $pa_y$, the conditional distribution induced by the target mechanism remains unchanged when conditioned to the full parent set:

\begin{equation}
    P(y\mid pa_y) = P'(y\mid pa_y).
\end{equation}

Hence, the conditional distribution of the target given all of its direct causes remains invariant (statement 1).

\textbf{Statement 2.} Let $a\in \text{ancestors}(X_i)$. 
By marginalizing over $X_i$,

\begin{equation}
\label{eq:marginalization-xi}
    P(y\mid a) = \sum_{X_i}P(y\mid X_i,a)P(X_i\mid a)
\end{equation}

Since $a$ is an ancestor of $X_i$, the changed mechanism $f_i$ can alter $P(X_i\mid a)$.
Moreover, the term $P(y\mid X_i,a)$ may not be invariant either: if $X_i$ has another ancestor $b\neq a$ that is not conditioned on and has a directed path to $y$, then conditioning on $X_i$ opens a dependency between them, and this dependency can propagate to $y$ through $b$.
In such cases, $P(y\mid X_i,a)$ may shift across concepts.
Substituting into Equation \eqref{eq:marginalization-xi}, the product -- and hence $P(y\mid a)$ may change across concepts, proving statement 2.

\textbf{Statement 3.} Let $X_j$ be a descendant of $X_i$ that d-separates $X_i$ from $y$. 
Since the graph structure is unchanged by the drift, this separation holds in both $\mathcal{M}$ and $\mathcal{M}'$. 
By the global Markov property (Pearl, 2009),

\begin{equation*}
    y\independent X_i\mid X_j,
\end{equation*}

\noindent in both $\mathcal{M}$ and $\mathcal{M}'$.
Marginalizing over $X_i$,

\begin{equation}
\label{eq:marg-xj}
    P(y\mid X_j) = \sum_{X_i} P(y\mid X_i, X_j)\, P(X_i\mid X_j). 
\end{equation}


By definition of conditional independence, conditioning on $X_j$ renders $X_i$ uninformative about $y$, i.e. $P(y\mid X_i,X_j) = P(y\mid X_j)$ for every value of $X_i$:



\begin{equation}
    P(y\mid X_j) = P(y\mid X_j)\sum_{X_i} P(X_i\mid X_j).
\end{equation}


We are left with $\sum_{X_i} P(X_i\mid X_j)$, which equals $1$: for a fixed value of $X_j$, $P(\cdot\mid X_j)$ is a valid probability distribution over $X_i$, and summing (or integrating, in the continuous case) any distribution over its full support gives $1$. 
Hence,

\begin{equation}
    P(y\mid X_j) = P(y\mid X_j)\cdot 1 = P(y\mid X_j),
\end{equation}

confirming the equality holds trivially once $X_i$ has dropped out: the result does not depend on $P(X_i\mid X_j)$, and therefore not on $f_i$, so $P(y\mid X_j) = P'(y\mid X_j)$.

Conversely, suppose some path between $X_i$ and $y$ remains active given $X_j$ -- either a directed path bypassing $X_j$, or a backdoor path through a common cause of $X_i$ and $y$. Then
\begin{equation*}
    y \notindependent X_i \mid X_j,
\end{equation*}
so $P(y\mid X_i, X_j)$ generically varies with $X_i$.

Separately, since $X_j$ is a descendant of $X_i$, $X_i$ influences $X_j$ along a directed path through mechanisms that remain unchanged by the drift.
A change in $f_i$ therefore generically propagates through this path and alters the joint distribution of $(X_i, X_j)$, and hence $P(X_i\mid X_j)$.

Substituting into Equation~\eqref{eq:marg-xj}, the altered weights $P(X_i\mid X_j)$ now reweight a summand that is not constant in $X_i$, so $P(y\mid X_j)$ may change across concepts.
This concludes the pr



\begin{proposition}[Effects of Confounder Drift]\label{prop:confounder-drift-general}
    Consider an SCM where an exogenous confounder $C$ and a variable $X$ are direct parents of the target $y$, forming a backdoor path $X \leftarrow C \rightarrow y$. 
    Let $Z = pa_y \setminus \{X, C\}$ denote the set of all other parents of $y$. 
    Suppose a drift occurs such that the marginal distribution of the confounder changes ($P(C)\neq P'(C)$), while all other structural mechanisms in the SCM remain invariant. 
    Hence:
    \begin{enumerate}
        \item The observable conditional distribution $P(y \mid X)$ may change across concepts due to the altered spurious association.
        \item The conditional interventional distribution, denoted by $P(y\mid \text{do}(X), c)$, remains strictly invariant.
    \end{enumerate}
\end{proposition}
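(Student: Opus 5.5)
We argue the two statements separately, writing $P$ and $P'$ for the joint distributions induced by the SCM before and after the drift. Since $C$ is exogenous, the structural assignment of $y$ is $y := f_y(X, C, Z, U_y)$, and $f_y$ together with the law of $U_y$ is unchanged. The only component that differs between $\mathcal{M}$ and $\mathcal{M}'$ is the marginal $P(C)$. All other mechanisms, including $f_X$ and $f_Z$, keep their forms.

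\textbf{Statement 1.} We marginalize over the confounder and the remaining parents:
\begin{equation*}
P(y\mid X) = \sum_{c,z} P(y\mid X, c, z)\, P(c, z \mid X).
\end{equation*}
Because $f_y$ is invariant and $U_y$ is independent of $(X, C, Z)$, the factor $P(y\mid X,c,z)$ is the same under $P$ and $P'$. The weights $P(c,z\mid X)$ are a different matter.

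By Bayes' rule,
\begin{equation*}
P(c\mid X) \propto P(X\mid c)\,P(c),
\end{equation*}
where $P(X\mid c)$ is fixed by the invariant mechanism $f_X$. Replacing $P(C)$ with $P'(C)$ therefore generically changes the posterior weights on $c$. This is the backdoor path $X\leftarrow C\rightarrow y$ being reweighted, and the same holds for any part of $Z$ that depends on $C$.

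The averaged quantity $P(y\mid X)$ then changes whenever $P(y\mid X,c,z)$ genuinely depends on $c$. That holds generically, since $C\in pa_y$. Because the claim is only that the distribution \emph{may} change, it suffices to exhibit a concrete witness. We take the linear Gaussian example $C\sim\mathcal N(\mu,1)$, $X=C+U_X$, $y=X+C+U_y$. Here $E[y\mid X]$ depends on $\mu$, so shifting $\mu$ changes $P(y\mid X)$.

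\textbf{Statement 2.} We compute $P(y\mid \mathrm{do}(X=x), c)$ directly in the intervened model $\mathcal{M}_{\mathrm{do}(X=x)}$. In that model the edge $C\to X$ is removed and $X$ is set to $x$.

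Conditioning on $C=c$, the target is given by
\begin{equation*}
y = f_y(x, c, Z, U_y).
\end{equation*}
It follows that
\begin{equation*}
P(y\mid \mathrm{do}(x), c) = \sum_z P(y\mid x,c,z)\, P_{\mathrm{do}(x)}(z\mid c).
\end{equation*}
The kernel $P(y\mid x,c,z)$ is invariant, as argued above. The factor $P_{\mathrm{do}(x)}(z\mid c)$ is produced by the structural mechanisms of $Z$ and its ancestors, evaluated with $C=c$ fixed and $X=x$ fixed. None of these mechanisms involve the marginal law of $C$, because that law enters the SCM only through the sampling of the root $C$, and conditioning on $c$ removes it. Hence every term in the sum is identical under $\mathcal{M}$ and $\mathcal{M}'$, and the conditional interventional distribution is strictly invariant.

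\textbf{Main obstacle.} The delicate step is the treatment of $Z$ in Statement 2. If some $Z$ has a parent that is not among the non-descendants of $C$, or depends on $C$ only through other drifting quantities, the invariance argument needs care. The route is to use the modularity (autonomy) of structural assignments: a soft change at the root $C$ alters only the factor $P(C)$ in the truncated factorization. Every other factor $P(v\mid pa_v)$ is unchanged, so any quantity conditional on $C=c$ that is computed from those factors is unchanged.
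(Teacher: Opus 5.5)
Your proof is correct and follows the paper's route. For Statement 1 you marginalize over $(c,z)$, using the invariant kernel $P(y\mid X,c,z)$ and weights reweighted via Bayes' rule; for Statement 2 you use the same decomposition $\sum_z P(y\mid x,c,z)\,P_{\mathrm{do}(x)}(z\mid c)$. Your linear-Gaussian witness and your truncated-factorization remark (since $C$ is a root, the factor $P(C)$ cancels once you condition on $C=c$) make explicit two points the paper only asserts: that $P(y\mid X)$ actually can change, and why $P(z\mid \mathrm{do}(X),c)$ is invariant.
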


\textbf{Proof.} Let $P$ and $P'$ denote the joint distributions before and after the drift.
By the law of total probability, the observable conditional distribution of $y$ given $X$ requires marginalizing over both the confounder $C$ and the other parents $Z$:

\begin{equation}\label{eq}
P(y \mid X) = \sum_{c, z} P(y \mid X, c, z) P(c, z \mid X).
\end{equation}

By assumption, the structural mechanism of the target ($f_y$) remains invariant.
Therefore, the conditional distribution of $y$ given its complete parent set remains unchanged:

\begin{equation}
P(y \mid X, c, z) = P'(y \mid X, c, z).
\end{equation}

However, the joint distribution of the parents conditioned on $X$ may change across concepts. 
By Bayes' theorem,

\begin{equation}
P(C,Z\mid X) = \frac{P(X\mid C,Z)P(C,Z)}{P(X)}.
\end{equation}

Since the marginal distribution of the confounder changes, $P(C)\neq P'(C)$, the joint distribution $P(C,Z)$, and consequently the posterior distribution $P(C,Z\mid X)$, may also change. 
Thus, although the conditional distribution of $y$ given its complete parent set remains invariant, the weights assigned to the different values of $(C,Z)$ when marginalizing over the unobserved variables may change.
Therefore, $P(y\mid X)$ need not be invariant, proving the first statement.





\textbf{Statement 2}. For the second statement, we apply the rules of do-calculus (Pearl, 2009). 
Since $\{X,C,Z\}=pa_y$ and the intervention $\text{do}(X)$ replaces only $X$'s own equation while leaving $f_y$ untouched, we have that
\begin{equation}
    P(y\mid \text{do}(X), c, z) = P(y \mid x, c, z).
\end{equation}

Substituting into the law of total probability then yields:

\begin{equation}\label{eq:do-calc-general}
    P(y\mid \text{do}(X),c) = \sum_{z} P(y\mid x, c, z) P(z \mid \text{do}(X), c).
\end{equation}

As established, the structural mechanism $P(y\mid x, c, z)$ (left term in the summation) is invariant.
Since all structural mechanisms are assumed invariant across concepts, it follows that:

\begin{equation}
    P(z \mid \text{do}(X), c) = P'(z \mid \text{do}(X), c).
\end{equation}

The distribution of $Z$ is therefore entirely determined by the remaining structural mechanisms of the SCM, all of which are invariant by assumption. 
Since both terms inside the summation in Equation~\eqref{eq:do-calc-general} are invariant, it follows that the conditional interventional distribution $P(y \mid \text{do}(X), c)$ is also invariant. 
Once $C=c$ is fixed, the change in the marginal distribution $P(C)$ no longer affects the conditional interventional distribution, assuming the relevant mechanisms remain unchanged.
\qed

\section{CaDrift Pseudocode}
\label{app:pseudocode}

\ac{CaDrift}'s pseudocode can be found in Algorithm~\ref{alg:cadrift}.

\begin{algorithm}[htbp]
\caption{CaDrift: Time-Dependent SCM Data Stream Generation}
\label{alg:cadrift}
\begin{algorithmic}[1]
\State \textbf{Input:} Causal graph $\mathcal{G} = (\mathbf{V}, \mathbf{E})$, structural functions $\{f_i\}_{i=1}^d$, number of time steps $N$, AR parameter $\rho$, EWMA parameter $\alpha$, seasonal parameters $\{(A_i, T_i, \phi_i)\}$, drift schedule $\mathcal{D}$.
\State \textbf{Output:} Non-stationary data stream $\mathcal{S} = \{(x^{(t)}, y^{(t)})\}_{t=1}^N$
\State Initialize $z_i^{(0)} = 0$ and $U_i^{(0)} = 0$ for all $i \in \{1, \dots, d\}$
\State $\mathcal{S} \gets \emptyset$
\For{$t = 1$ \textbf{to} $N$}
    \If{$t \in \mathcal{D}$}
        \State \text{Apply soft intervention to SCM} \Comment{Induce causal drift}
    \EndIf
    \For{\text{each node } $X_i \in \mathbf{V}$ \text{ in topological order}}
        \State Sample Gaussian noise $\epsilon_i^{(t)} \sim \mathcal{N}(0, \sigma^2)$
        \State $U_i^{(t)} \gets \rho U_i^{(t-1)} + \epsilon_i^{(t)}$ \Comment{Autoregressive noise}
        \State $s_i^{(t)} \gets A_i \sin\left(\frac{2\pi t}{T_i}+\phi_i\right)$ \Comment{Seasonal component}
        
        \State $X_i^{(t)} \gets f_i(pa_i^{(t)}, U_i^{(t)}) + s_i^{(t)}$ \Comment{Base structural assignment}
        
        \If{$pa_i^{(t)} = \emptyset$} \Comment{If $X_i$ is a root node}
            \State $z_i^{(t)} \gets (1-\alpha)z_i^{(t-1)} + \alpha X_i^{(t)}$ \Comment{EWMA smoothing}
            \State $X_i^{(t)} \gets z_i^{(t)}$
        \EndIf
    \EndFor
    \State Extract feature vector $\mathbf{x}^{(t)}$ and target variable $y^{(t)}$ from $\mathbf{V}^{(t)}$
    \State $\mathcal{S} \gets \mathcal{S} \cup \{(\mathbf{x}^{(t)}, y^{(t)})\}$
\EndFor
\State \textbf{Return} $\mathcal{S}$
\end{algorithmic}
\end{algorithm}

\section{Additional Details on Sampling DAGs}
\label{app:detail-dag}

The datasets in Section \ref{subsec:exp-dist} are sampled using the \acp{DAG} in Figure \ref{fig:graph-mmd}.
In these \acp{DAG}, the mapping functions of inner nodes are linear functions.
Here, we present the linear mapping equations for each variable on those \acp{DAG}.
We omit the exogenous noise terms $U_i^{(t)}$ in the equations.
For \ac{DAG} \#1, these are the mapping functions:

\begin{multicols}{2}
\begin{itemize}
    \item $X_1 \sim \mathcal{N}(\mu_1,\sigma_1^2)$
    \item $X_2 \sim \mathcal{U}(a_2,b_2)$
    \item $X_3 :=w_{31} X_1 + w_{32} X_2 + w_{33} X_6$
    \item $X_4:=w_{41}X_3$
    \item $X_5 := w_{51}X_3$
    \item $X_6 \sim \mathcal{N}(\mu_6,\sigma_6^2)$
    \item $y:=f_y(X_4,X_5,X_6)$
\end{itemize}
\end{multicols}

For \ac{DAG} \#2, we have:

\begin{multicols}{2}
\begin{itemize}
    \item $X_1 \sim \mathcal{N}(\mu_1,\sigma_1^2)$
    \item $X_2 \sim \mathcal{U}(a_2,b_2)$
    \item $X_3 \sim \mathcal{N}(\mu_3,\sigma_3^2)$
    \item $X_4 :=w_{41} X_1 + w_{42} X_2 + w_{43} X_6$
    \item $X_5:=w_{51}X_1 + w_{52}X_4$
    \item $X_6 := w_{61}X_3 + w_{62}X_4$
    \item $X_7 := w_{71}X_5 + w_{72}X_6$
    \item $X_8 := w_{81}X_5 + w_{82}X_6$
    \item $X_9 \sim \mathcal{N}(\mu_9,\sigma_9^2)$
    \item $X_{10} \sim \mathcal{N}(\mu_{10},\sigma_{10}^2)$
    \item $y := f_y(X_7,X_8,X_9,X_{10})$
\end{itemize}
\end{multicols}

And, finally, for \ac{DAG} \#3:

\begin{multicols}{2}
\begin{itemize}
    \item $X_1 \sim \mathcal{N}(\mu,\sigma^2)$
    \item $X_2 \sim \mathcal{U}(a_2,b_2)$
    \item $X_3 :=w_{31} X_1 + w_{32} X_2$
    \item $X_4 :=w_{41} X_1 + w_{42} X_3 + w_{43} X_6$
    \item $X_5:=w_{51}X_3+ w_{52}X_2$
    \item $X_6 \sim \mathcal{N}(\mu,\sigma^2)$
    \item $X_7 := w_{71}y + w_{72}X_{10}$
    \item $X_8 := w_{81}X_6 + w_{82}y$
    \item $X_9 := w_{91}X_7 + w_{92}X_8$
    \item $X_{10} \sim \mathcal{N}(\mu,\sigma^2)$
    \item $y := f_y(X_4,X_5,X_6,X_{10})$
\end{itemize}
\end{multicols}

All of the linear function weights $w_{ij}$ are initialized randomly by following a uniform distribution $\mathcal{U}(-1,1)$. The parameter $\mu_i$ of the Gaussian distributions sampling exogenous variables is randomly set to a value in the range $[-1,1]$, while the $\sigma$ parameter is set to a value in the range $[0.5,1.5]$.
The hyperparameters $a$ and $b$ in the Uniform distributions, also sampling exogenous variables, are sampled in the ranges $[-2,0]$ and $[0, 2]$, respectively.

\emph{Endogenous drift} is simulated by randomly modifying the weights of the linear functions, while \emph{exogenous drift} and \emph{confounder drift} are simulated by changing the parameters of the Gaussian distributions $\mathcal{N}(\mu,\sigma^2)$. 

The target variable $y$ is defined by a prototype-based mapping $f_y$, where each class is associated with a prototype vector in the space of the parent variables of $y$. 
Given an input sample $x$, the class is assigned according to the nearest prototype under the Euclidean distance:
\[
f_y(\mathbf{x}) = \arg\min_{k \in \{1, \dots, K\}} \lVert \mathbf{x} - \mathbf{c}_k \rVert_2.
\]

To initialize the prototypes, we compute the empirical mean $\mu_{pa_y}$ and standard deviation $\sigma_{pa_y}$ of the parent variables of $y$. 
Each prototype $\mathbf{c}_k$ is then sampled independently from a Gaussian distribution:
\[
\mathbf{c}_k \sim \mathcal{N}\left(
\mu_{pa_y}, \;
\beta^2 \, \mathrm{diag}\big(\sigma_{pa_y}^2\big)
\right),
\]
where $\beta$ is a scaling factor controlling the dispersion of the prototypes, which has been set to $\beta=1$. 
This results in class centers that are aligned with the distribution of the causal parents while allowing variability across classes.

\emph{Target drift} is simulated by moving the prototypes $\{\mathbf{c}_k\}_{k=1}^K$, thereby modifying the decision boundaries induced by $f_y$.

Lastly, \emph{structural drift} is simulated by changing the parents of a node, which might include either removing or adding nodes to the mapping function $f_i$.
When adding a new node as a parent, its weight on the linear function is also randomly initialized through a uniform distribution.

\subsection{Nonlinear functions}

One of our experiments in Section \ref{subsec:exp-dist} comprises datasets generated using SCMs with nonlinear mapping functions.
We report in Table \ref{tab:nonlinear-funcs} which functions were used on each node and DAG.

\begin{table}[htb]
    \centering
    \caption{Nonlinear functions used in each DAG and node.}
    \begin{tabular}{l c c c}
    \toprule
        Node & DAG \#1 & DAG \#2 & DAG \# 3 \\
        \midrule
        $X_1$ & Normal & Normal & Normal \\
        $X_2$ & Uniform & Uniform & Uniform \\
        $X_3$ & Polynomial & Normal & Polynomial \\
        $X_4$ & Sigmoid & Polynomial & Sigmoid \\
        $X_5$ & Sigmoid & Sigmoid & Polynomial \\
        $X_6$ & Normal & Polynomial & Normal \\
        $X_7$ & -- & Sigmoid & Sigmoid \\
        $X_8$ & -- & Polynomial & Polynomial \\
        $X_9$ & -- & Normal & Sigmoid \\
        $X_{10}$ & -- & Normal & Normal \\
        \bottomrule
    \end{tabular}
    \label{tab:nonlinear-funcs}
\end{table}

\subsection{Drift events -- Atomic Drift}
\label{app:drift-events}
 
In this section, we describe the drift events in each \ac{DAG}, divided by
drift type, for the experiments performed in Section~\ref{subsec:exp-dist}.
On all \acp{DAG} there are 9 drift events, one every 2{,}000 samples,
totaling 20{,}000 data samples and 10 concepts per dataset.
 
\subsubsection{DAG \#1}
 
The exogenous nodes are $X_1$ and $X_2$; since $X_1$ is a confounder,
only $X_2$ is a non-confounder exogenous node, so \emph{exogenous drift}
events are simulated exclusively by modifying the Gaussian distribution
sampling $X_2$ (changing its parameter $\mu$).
 
\paragraph{Endogenous drift.} The drifted node(s) at each event:
 
\begin{center}
\begin{tabular}{@{}lccccccccc@{}}
\toprule
Event & 1 & 2 & 3 & 4 & 5 & 6 & 7 & 8 & 9 \\
\midrule
Node(s) & $X_3$ & $X_3$ & $X_4$ & $X_5$ & $X_3,X_4$ & $X_4,X_5$ & $X_5$ & $X_3$ & $X_4$ \\
\bottomrule
\end{tabular}
\end{center}
 
\paragraph{Confounder drift.} DAG~\#1 has two confounder nodes: $X_1$
(observable) and $X_6$ (latent).
 
\begin{center}
\begin{tabular}{@{}lccccccccc@{}}
\toprule
Event & 1 & 2 & 3 & 4 & 5 & 6 & 7 & 8 & 9 \\
\midrule
Node(s) & $X_1$ & $X_6$ & $X_1$ & $X_6$ & $X_1$ & $X_1,X_6$ & $X_1,X_6$ & $X_6$ & $X_1$ \\
\bottomrule
\end{tabular}
\end{center}
 
\paragraph{Structural drift.} Simulated by changing DAG edges; for the
linear function this means adding a node with a random weight. The
mechanism after each event:
 
\begin{center}
\begin{tabular}{@{}cl@{}}
\toprule
Event & Updated equation \\
\midrule
1 & $X_5 := w_{51}X_3 + w_{52}X_2$ \\
2 & $X_5 := w_{51}X_3 + w_{52}X_1$ \\
3 & $X_4 := w_{41}X_3 + w_{42}X_2$ \\
4 & $X_3 := 0 X_1 + w_{32}X_2 + w_{33}X_6$ \quad (removes edge $X_1 \to X_3$) \\
5 & $X_5 := w_{51}X_3 + w_{52}X_6$ \\
6 & $X_4 := w_{41}X_3 + w_{42}X_6$ \\
7 & $y := f_y(X_4, X_5, X_6, X_3)$ \\
8 & $y := f_y(X_4, X_5, X_6, X_2)$ \\
9 & $X_4 := w_{41}X_3 + w_{42}X_2$ \quad and \quad $X_3 := w_{31}X_1 + w_{32}X_2 + \tilde{w}_{33}X_6$ \\
\bottomrule
\end{tabular}
\end{center}
 
\emph{Target drift} is simulated by randomly shifting the centroids
(applies uniformly across all three \acp{DAG}, so is not tabulated
per-DAG). For the nonlinear functions, \emph{structural drift} affects
the same nodes as in the corresponding linear-mapper SCMs above.
 
\subsubsection{DAG \#2}
 
The exogenous (non-confounder) nodes are $X_1$, $X_2$, $X_3$.
 
\paragraph{Exogenous drift.}
\begin{center}
\begin{tabular}{@{}lccccccccc@{}}
\toprule
Event & 1 & 2 & 3 & 4 & 5 & 6 & 7 & 8 & 9 \\
\midrule
Node(s) & $X_1$ & $X_2$ & $X_3$ & $X_1$ & $X_2$ & $X_3$ & $X_1$ & $X_2$ & $X_3$ \\
\bottomrule
\end{tabular}
\end{center}
 
\paragraph{Endogenous drift.}
\begin{center}
\begin{tabular}{@{}lccccccccc@{}}
\toprule
Event & 1 & 2 & 3 & 4 & 5 & 6 & 7 & 8 & 9 \\
\midrule
Node(s) & $X_4$ & $X_6$ & $X_8$ & $X_7$ & $X_4$ & $X_5$ & $X_6$ & $X_8$ & $X_7$ \\
\bottomrule
\end{tabular}
\end{center}
 
\paragraph{Confounder drift.} Both confounders, $X_9$ and $X_{10}$, are
observable.
\begin{center}
\begin{tabular}{@{}lccccccccc@{}}
\toprule
Event & 1 & 2 & 3 & 4 & 5 & 6 & 7 & 8 & 9 \\
\midrule
Node(s) & $X_9$ & $X_{10}$ & $X_9$ & $X_{10}$ & $X_9$ & $X_{10}$ & $X_9$ & $X_{10}$ & $X_9$ \\
\bottomrule
\end{tabular}
\end{center}
 
\paragraph{Structural drift.}
\begin{center}
\begin{tabular}{@{}cl@{}}
\toprule
Event & Updated equation \\
\midrule
1 & $X_8 := w_{81}X_5 + w_{82}X_6 + w_{83}X_4$ \\
2 & $X_8 := w_{81}X_5 + w_{82}X_6 + w_{83}X_1$ \\
3 & $X_6 := w_{61}X_3 + w_{62}X_4 + w_{63}X_1$ \\
4 & $y := f_y(X_7, X_8, X_9, X_{10}, X_6)$ \\
5 & $X_7 := w_{71}X_5 + w_{72}X_6 + w_{73}X_8$ \\
6 & $X_8 := w_{81}X_5 + w_{82}X_6 + w_{83}X_4$ \\
7 & $X_8 := w_{81}X_5 + w_{82}X_6 + w_{83}X_3$ \\
8 & $X_5 := w_{51}X_1 + w_{52}X_4 + w_{53}X_2$ \\
9 & $y := f_y(X_7, X_8, X_9, X_{10}, X_1)$ \\
\bottomrule
\end{tabular}
\end{center}
 
\subsubsection{DAG \#3}
 
The exogenous non-confounder nodes are $X_1$ and $X_2$.
 
\paragraph{Exogenous drift.}
\begin{center}
\begin{tabular}{@{}lccccccccc@{}}
\toprule
Event & 1 & 2 & 3 & 4 & 5 & 6 & 7 & 8 & 9 \\
\midrule
Node(s) & $X_1$ & $X_2$ & $X_1$ & $X_2$ & $X_1$ & $X_2$ & $X_1,X_2$ & $X_1$ & $X_1,X_2$ \\
\bottomrule
\end{tabular}
\end{center}
 
\paragraph{Endogenous drift.}
\begin{center}
\begin{tabular}{@{}lccccccccc@{}}
\toprule
Event & 1 & 2 & 3 & 4 & 5 & 6 & 7 & 8 & 9 \\
\midrule
Node(s) & $X_3$ & $X_5$ & $X_4$ & $X_8$ & $X_3$ & $X_4$ & $X_5$ & $X_7$ & $X_9$ \\
\bottomrule
\end{tabular}
\end{center}
 
\paragraph{Confounder drift.} $X_6$ is an observable confounder; $X_{10}$
is latent.
\begin{center}
\begin{tabular}{@{}lccccccccc@{}}
\toprule
Event & 1 & 2 & 3 & 4 & 5 & 6 & 7 & 8 & 9 \\
\midrule
Node(s) & $X_6$ & $X_{10}$ & $X_6$ & $X_{10}$ & $X_6$ & $X_{10}$ & $X_6,X_{10}$ & $X_6$ & $X_6,X_{10}$ \\
\bottomrule
\end{tabular}
\end{center}
 
\paragraph{Structural drift.}
\begin{center}
\begin{tabular}{@{}cl@{}}
\toprule
Event & Updated equation \\
\midrule
1 & $X_7 := w_{71}y + w_{72}X_{10} + w_{73}X_5$ \\
2 & $X_5 := w_{51}X_3 + w_{52}X_2 + w_{53}X_1$ \\
3 & $X_8 := w_{81}X_6 + w_{82}y + w_{83}X_4$ \\
4 & $X_7 := 0y + w_{72}X_{10}$ \quad (removes edge $y \to X_7$) \\
5 & $X_9 := 0X_7 + w_{92}X_8$ \quad (removes edge $X_7 \to X_9$) \\
6 & $y := f_y(X_4, X_5, X_6, X_{10}, X_3)$ \\
7 & $y := f_y(X_4, X_5, X_6, X_{10}, X_1)$ \\
8 & $X_7 := w_{71}y + w_{72}X_{10} + w_{73}X_6$ \\
9 & $X_9 := w_{91}X_7 + w_{92}X_8 + w_{93}X_{10}$ \\
\bottomrule
\end{tabular}
\end{center}
 
\subsection{Drift Events -- Compound Drift}
\label{app:drift-events-compound}
 
In the compound drift setting, soft interventions were applied to two
node types simultaneously (e.g., exogenous and target). Each table below
lists which node(s) were intervened on, per event, for one compound
configuration. \emph{Structural+target drift} always reuses the
structural-drift equations from Section~\ref{app:drift-events}, adding
an intervention on the target node at each event, so it is not
retabulated here.
 
\subsubsection{DAG \#1}
 
\begin{center}
\resizebox{\textwidth}{!}{%
\begin{tabular}{@{}llccccccccc@{}}
\toprule
Configuration & & 1 & 2 & 3 & 4 & 5 & 6 & 7 & 8 & 9 \\
\midrule
Exogenous+target       & & \multicolumn{9}{c}{every event: $X_2, y$} \\
Endogenous+target      & & $X_3,y$ & $X_4,y$ & $X_5,y$ & $X_3,y$ & $X_4,y$ & $X_5,y$ & $X_3,y$ & $X_4,y$ & $X_5,y$ \\
Confounder+target      & & $X_1,y$ & $X_6,y$ & $X_1,y$ & $X_6,y$ & $X_1,y$ & $X_6,y$ & $X_1,y$ & $X_6,y$ & $X_1,y$ \\
Exogenous+endogenous   & & $X_2,X_3$ & $X_2,X_4$ & $X_2,X_5$ & $X_2,X_3$ & $X_2,X_4$ & $X_2,X_5$ & $X_2,X_3$ & $X_2,X_4$ & $X_2,X_5$ \\
Confounder+endogenous  & & $X_1,X_3$ & $X_6,X_4$ & $X_1,X_5$ & $X_6,X_3$ & $X_1,X_4$ & $X_6,X_5$ & $X_1,X_3$ & $X_6,X_4$ & $X_1,X_5$ \\
\bottomrule
\end{tabular}
}
\end{center}
 
\subsubsection{DAG \#2}
 
\begin{center}
\resizebox{\textwidth}{!}{%
\begin{tabular}{@{}lccccccccc@{}}
\toprule
Configuration & 1 & 2 & 3 & 4 & 5 & 6 & 7 & 8 & 9 \\
\midrule
Exogenous+target      & $X_1,y$ & $X_2,y$ & $X_3,y$ & $X_1,y$ & $X_2,y$ & $X_3,y$ & $X_1,y$ & $X_2,y$ & $X_3,y$ \\
Endogenous+target     & $X_4,y$ & $X_5,y$ & $X_6,y$ & $X_8,y$ & $X_7,y$ & $X_4,y$ & $X_5,y$ & $X_6,y$ & $X_8,y$ \\
Confounder+target     & $X_9,y$ & $X_{10},y$ & $X_9,y$ & $X_{10},y$ & $X_9,y$ & $X_{10},y$ & $X_9,y$ & $X_{10},y$ & $X_9,y$ \\
Exogenous+endogenous  & $X_4,X_1$ & $X_5,X_2$ & $X_6,X_3$ & $X_8,X_1$ & $X_7,X_2$ & $X_4,X_3$ & $X_5,X_1$ & $X_6,X_2$ & $X_8,X_3$ \\
Confounder+endogenous & $X_4,X_9$ & $X_5,X_{10}$ & $X_6,X_9$ & $X_8,X_{10}$ & $X_7,X_9$ & $X_4,X_{10}$ & $X_5,X_9$ & $X_6,X_{10}$ & $X_8,X_9$ \\
\bottomrule
\end{tabular}
}
\end{center}
 
\subsubsection{DAG \#3}
 
\begin{center}
\resizebox{\textwidth}{!}{%
\begin{tabular}{@{}lccccccccc@{}}
\toprule
Configuration & 1 & 2 & 3 & 4 & 5 & 6 & 7 & 8 & 9 \\
\midrule
Exogenous+target      & $X_1,y$ & $X_2,y$ & $X_1,y$ & $X_2,y$ & $X_1,y$ & $X_2,y$ & $X_2,y$ & $X_1,y$ & $X_2,y$ \\
Endogenous+target     & $X_3,y$ & $X_4,y$ & $X_5,y$ & $X_7,y$ & $X_8,y$ & $X_3,y$ & $X_4,X_5,y$ & $X_7,y$ & $X_8,y$ \\
Confounder+target     & $X_{10},y$ & $X_6,y$ & $X_{10},y$ & $X_6,y$ & $X_{10},y$ & $X_6,y$ & $X_{10},y$ & $X_6,y$ & $X_{10},y$ \\
Exogenous+endogenous  & $X_1,X_3$ & $X_2,X_4$ & $X_1,X_5$ & $X_2,X_7$ & $X_1,X_8$ & $X_2,X_3$ & $X_2,X_4$ & $X_1,X_5$ & $X_2,X_7$ \\
Confounder+endogenous & $X_3,X_{10}$ & $X_4,X_6$ & $X_5,X_{10}$ & $X_7,X_6$ & $X_8,X_{10}$ & $X_3,X_6$ & $X_4,X_5,X_{10}$ & $X_7,X_6$ & $X_8,X_{10}$ \\
\bottomrule
\end{tabular}
}
\end{center}

\section{Generator hyperparameters}
\label{app:hyperparameters}

Table~\ref{tab:hyperparam-sec-61} reports the hyperparameter configuration of \ac{CaDrift} used to generate the datasets for the experiments in Section~\ref{subsec:exp-dist}.
All non-stationarity parameters were set to zero, as these experiments aim to isolate the effects of causal drift events on the data distribution and predictive performance.

We employ a number of prototypes equal to ten times the number of classes to induce a non-linear mapping between the parents of $y$ and the class labels.
This choice increases the expressiveness of the data-generating process and allows different concepts to occupy distinct regions of the feature space.

\begin{table}[htb]
    \centering
    \caption{CaDrift hyperparameters to generate data samples for the experiments in Section 6.1.}
    \begin{tabular}{l c}
    \toprule
        Hyperparameter & Value \\
        \midrule
        \# classes & 5 \\
        \# prototypes & 50 \\
        $\rho$ & 0 \\
        $\alpha$ & 0 \\
        $A_i$ & 0 \\
        $T_i$ & 0 \\
        $\phi$ & 0 \\
        \bottomrule
    \end{tabular}
    \label{tab:hyperparam-sec-61}
\end{table}

Table \ref{tab:mlp-hyperparam} reports the hyperparameters of the neural network utilized to approximate cause--effect relationships on the dataset synthesized from the ELEC2 dataset.

\begin{table}[htb]
    \centering
    \caption{Neural network hyperparameters to map cause--effect relationships on the ELEC2 dataset.}
    \begin{tabular}{l c}
    \toprule
        Hyperparameter & Value \\
        \midrule
        Learning Rate & 0.001 \\
        Hidden layers & 1 \\
        \# neurons hidden layer & 10 \\
        Optimizer & $adam$ \\
        $max\_iter$ & 100 \\
        hidden layer $activation\_function$ & ReLU \\
        \bottomrule
    \end{tabular}
    \label{tab:mlp-hyperparam}
\end{table}

\section{The impact of causal drift -- Time-dependent mappers}
\label{app:causal-drift-additional-exp}

The plots of MMD, KL divergence, and prequential accuracy performance when the components to induce time dependence (autoregressive noise, EWMA, and seasonality) are enabled are shown in Figure \ref{fig:mmd-kl-acc-full-nonstationary}.
We notice some noise in MMD and KL divergence measured across windows belonging to the same concept when compared to the stationary version, but there is still a noticeable discrepancy when measuring different concepts (i.e., SCMs before and after interventions).

\begin{figure}[hp]
    \centering
    \subfloat[Exogenous drift.]{\label{subfig:mmd-kl-acc-exogenous-nonstationary}
        \includegraphics[width=0.8\linewidth]{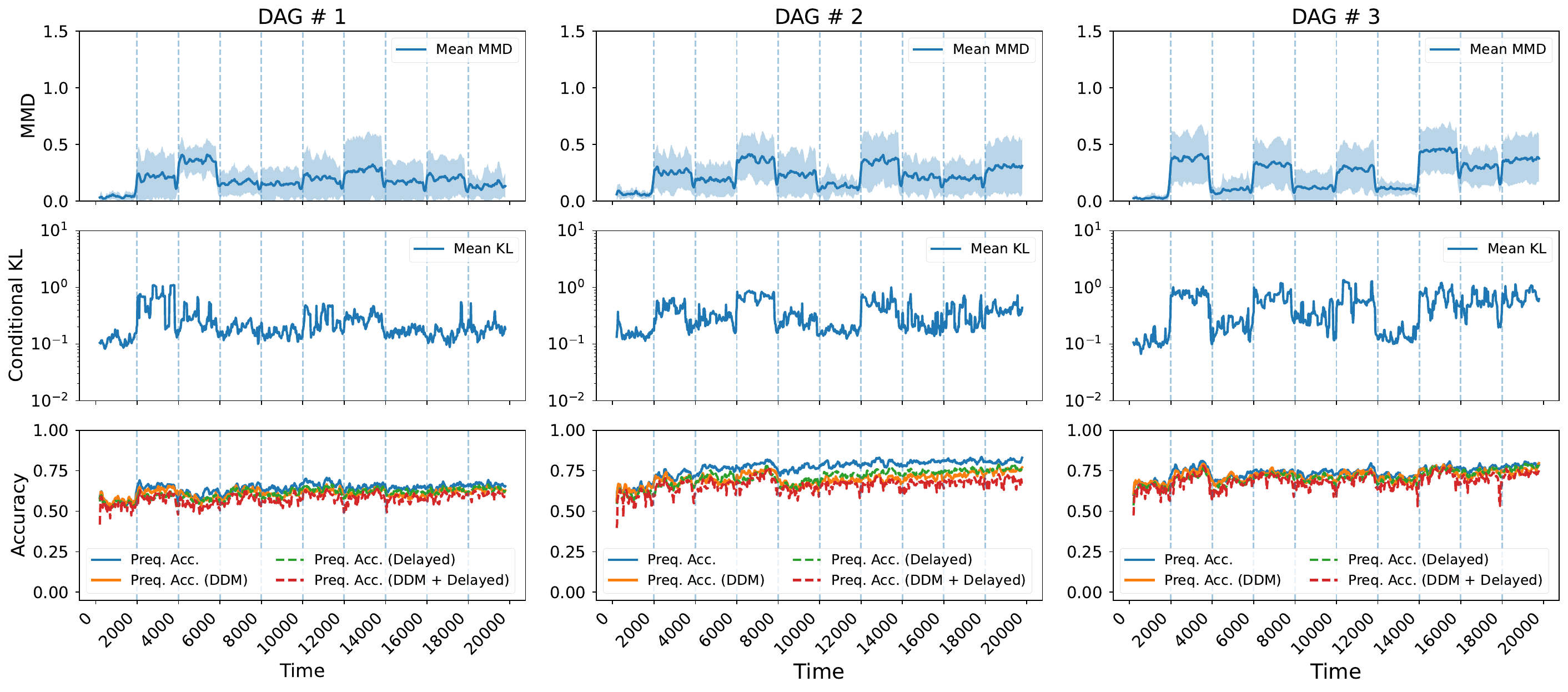}
    }    
    
    \subfloat[Endogenous drift.]{\label{subfig:mmd-kl-acc-endogenous-nonstationary}
        \includegraphics[width=0.8\linewidth]{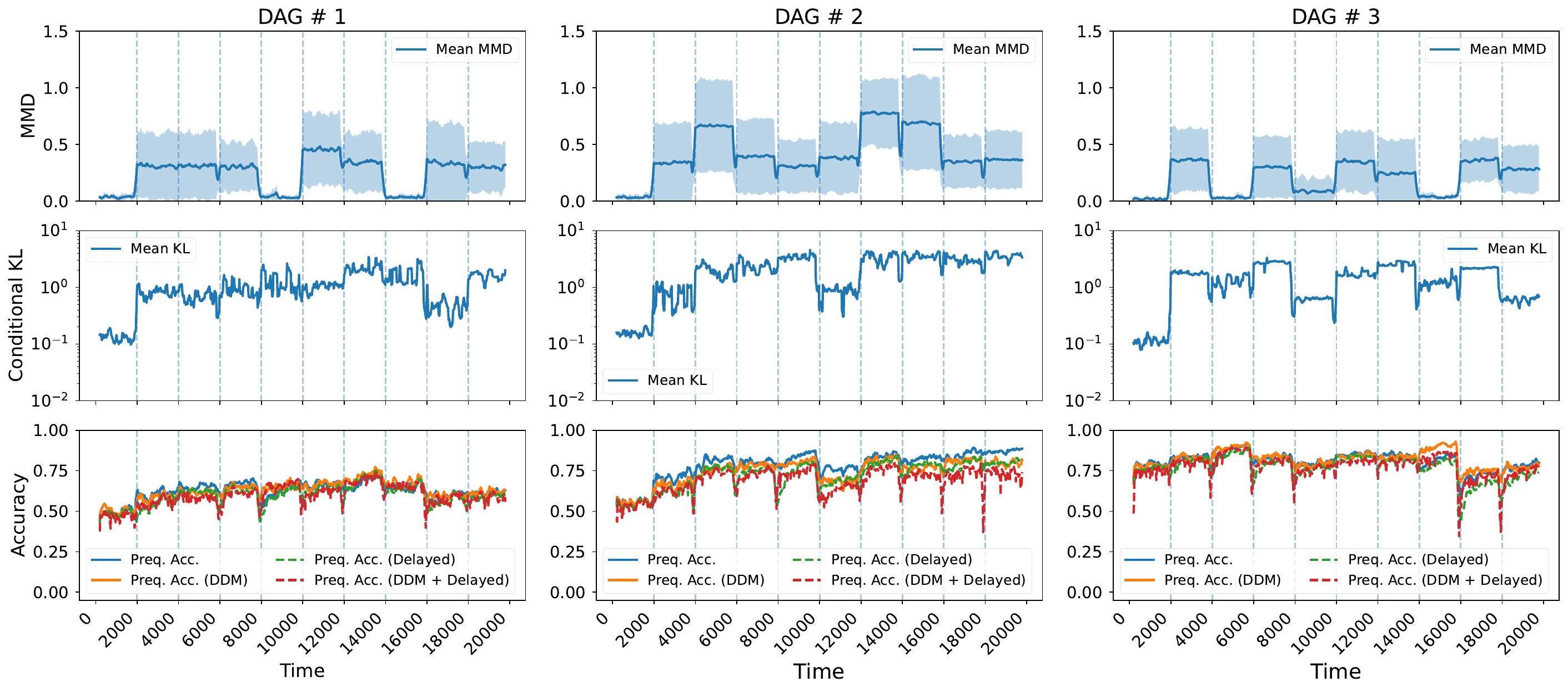}
    }
    
    \subfloat[Confounder drift.]{\label{subfig:mmd-kl-acc-confounder-nonstationary}
        \includegraphics[width=0.8\linewidth]{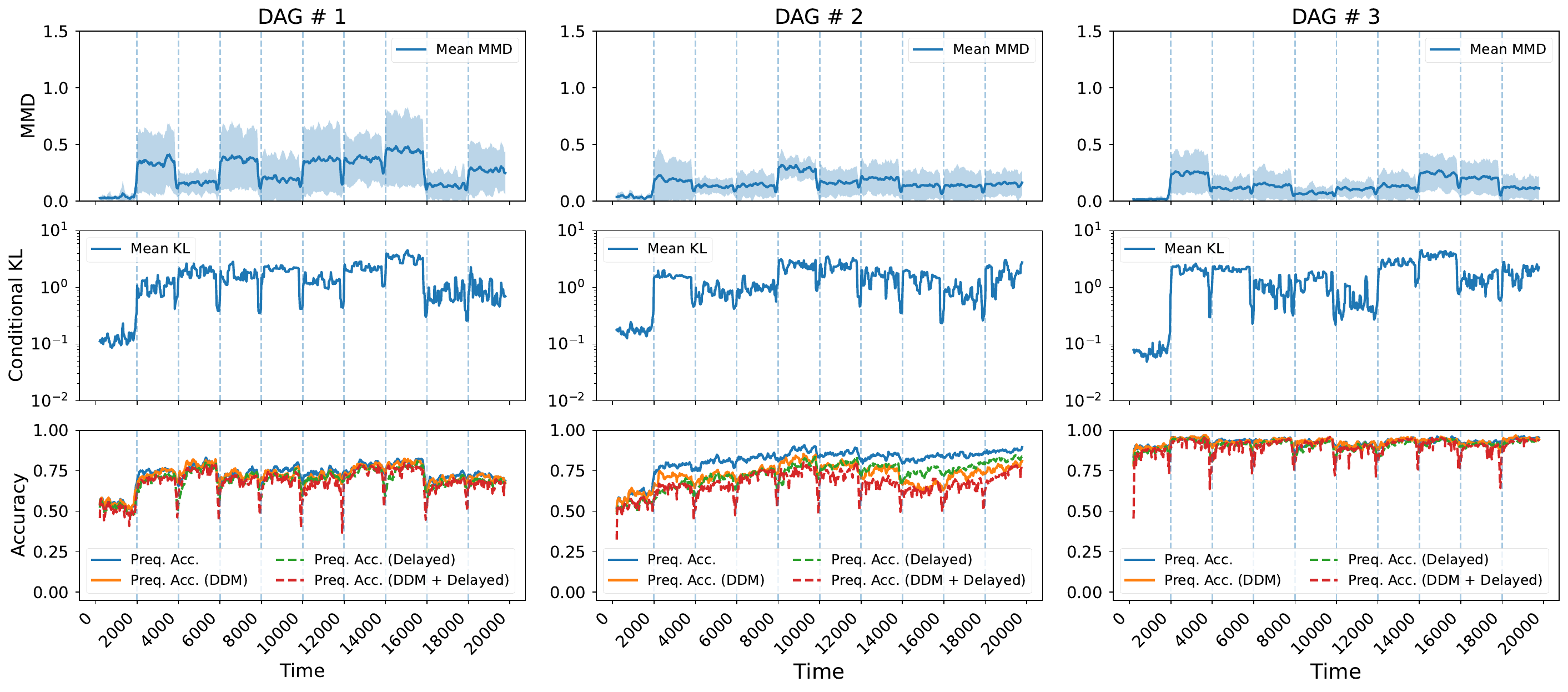}
    }
    
    \caption{MMD, KL divergence, and prequential accuracy over time in each DAG on causal drift types -- Time-dependent.}
    \label{fig:mmd-kl-acc-full-nonstationary}
\end{figure}

\begin{figure}[ht]
\ContinuedFloat
    \centering
    
    \subfloat[Target drift.]{\label{subfig:mmd-kl-acc-target-nonstationary}
        \includegraphics[width=0.8\linewidth]{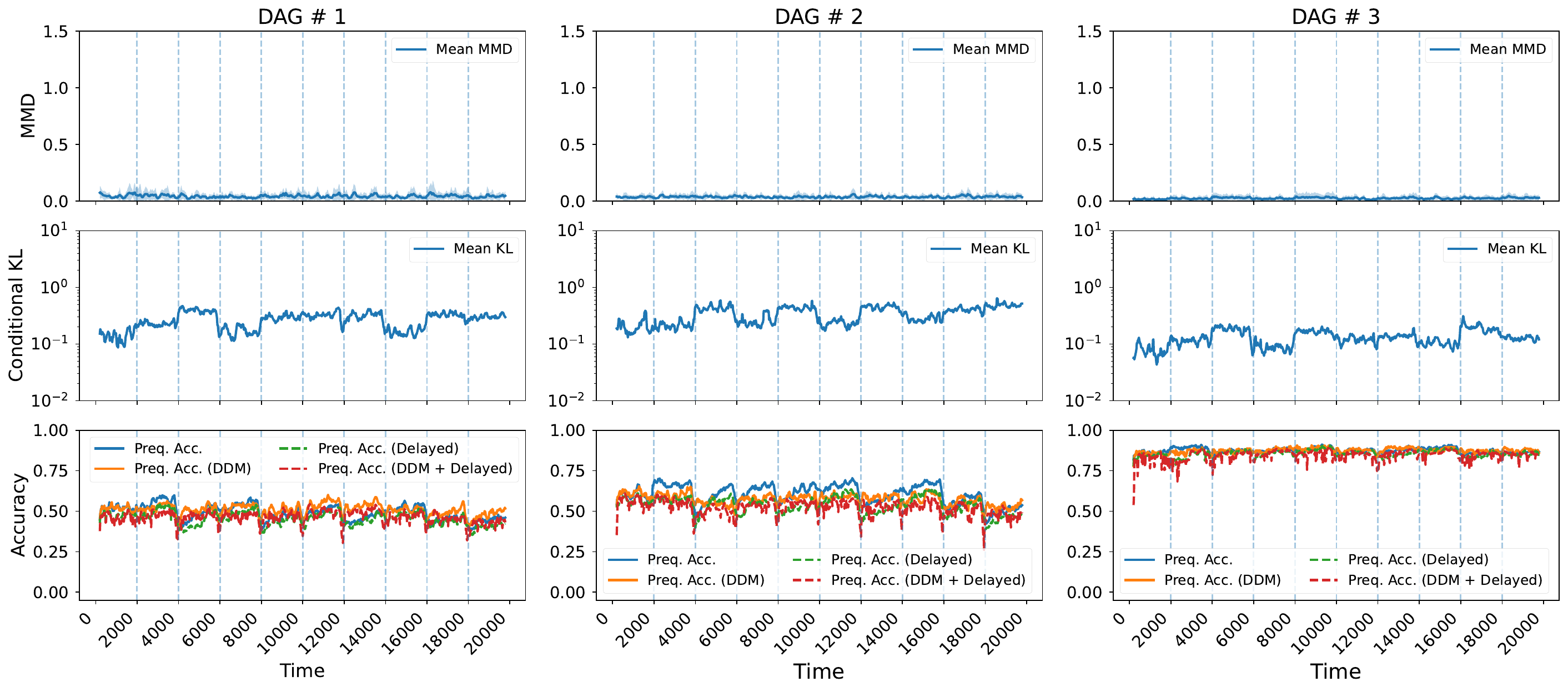}
    }

    \subfloat[Structural drift.]{\label{subfig:mmd-kl-acc-structural-nonstationary}
        \includegraphics[width=0.8\linewidth]{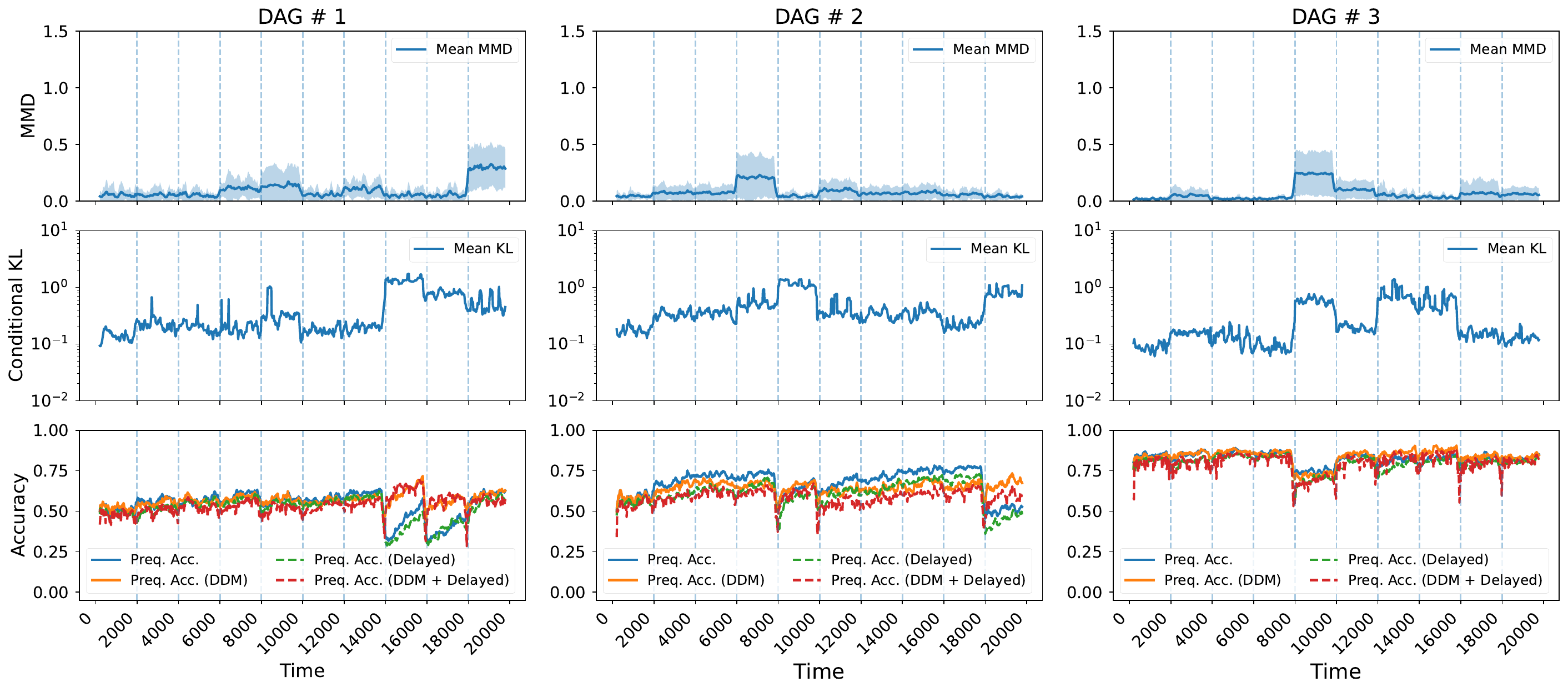}
    }
    
    \caption{MMD, KL divergence, and prequential accuracy over time in each DAG on causal drift -- Time-dependent  (continued).}
    \label{fig:mmd-kl-acc-full-nonstationary-2}
\end{figure}

\section{Class distributions and MMD across causal drift events}
\label{app:class-dist-mmd}

In Figure~\ref{fig:drift-class-dist}, we show the class distributions across batches for datasets generated by the \ac{SCM} framework with induced causal drift events, in which each color represents a different class. These are sampled from \ac{DAG} \#1, and the drift events are the same as described in Appendix \ref{app:drift-events}. We plot the features $X_1$ and $X_4$, two direct causes of the target $y$. Each color in the plots represents a class.

Under \emph{exogenous drift} affecting the node $X_2$ (Figure \ref{subfig:exogenous-drift}), we can notice the region in which $X_4$ is being sampled, moving, as the node $X_2$, which is being drifted, is one of its ancestors. The class distributions do not change in this type of drift.

In contrast, \emph{endogenous drift}, in Figure \ref{subfig:endogenous-drift}, leads to clear changes in the class distribution between concepts, reflecting modifications in the underlying mechanisms that map features to the target. 
As a result, decision rules learned under previous concepts generally become invalid and must be adapted.

To induce \emph{confounder drift}, we observe both the emergence of new regions in the feature space, as observed on \emph{exogenous drift}, as well as changes in the class distribution across concepts, as observed in Figure \ref{subfig:confounder-drift}. 
The impact on the class distribution becomes more pronounced under \emph{target drift} (Figure~\ref{subfig:target-drift}), where changes in the target-generating mechanism directly alter the class boundaries, making differences between concepts apparent. To simulate these, we randomly reposition the prototypes that map the causal parents of $y$.

Finally, under \emph{structural drift} in Figure \ref{subfig:structural-drift}, it is observed that class distribution also changes.
The impact of \emph{structural drift} is closely related to the size of the causal graph. 
These plots use a \ac{DAG} with six nodes, excluding the target node; consequently, even small structural modifications can produce large effects throughout the causal chain, leading to significant distributional changes.

\begin{figure*}
    \centering

    \subfloat[Exogenous drift.]{\label{subfig:exogenous-drift}
        \includegraphics[width=0.9\linewidth]{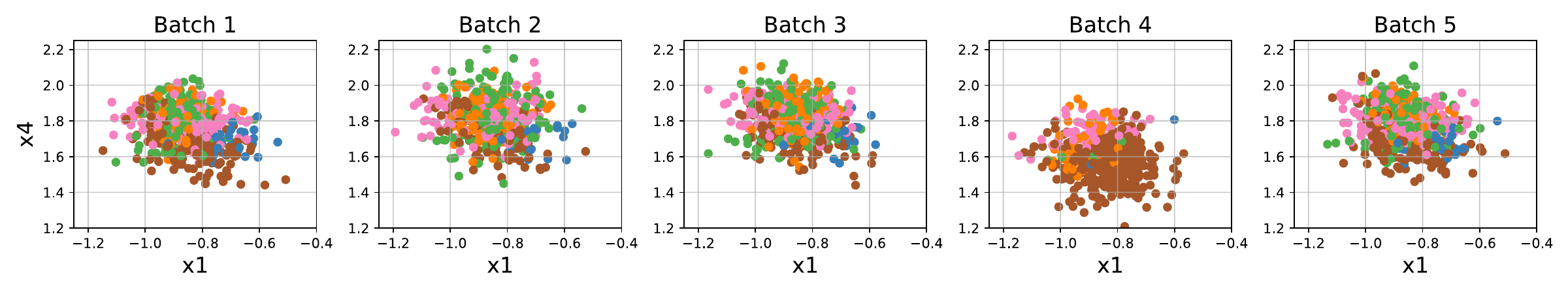}
    }
    
    \subfloat[Endogenous drift.]{\label{subfig:endogenous-drift}
        \includegraphics[width=0.9\linewidth]{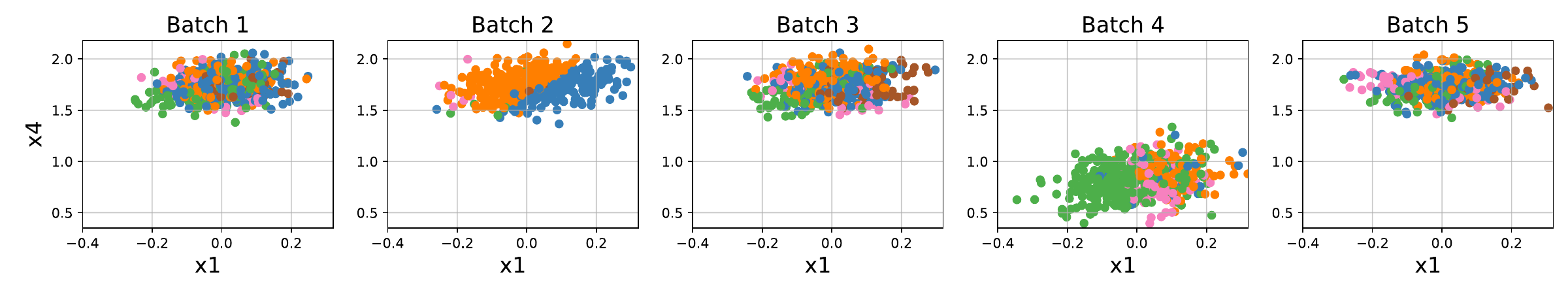}
    }

    \subfloat[Confounder drift.]{\label{subfig:confounder-drift}
        \includegraphics[width=0.9\linewidth]{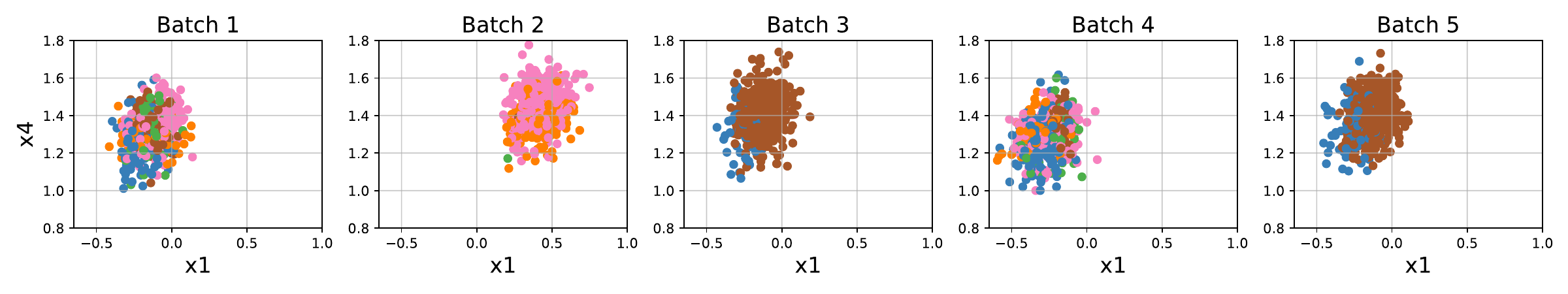}
    }

    \subfloat[Target drift.]{\label{subfig:target-drift}
        \includegraphics[width=0.9\linewidth]{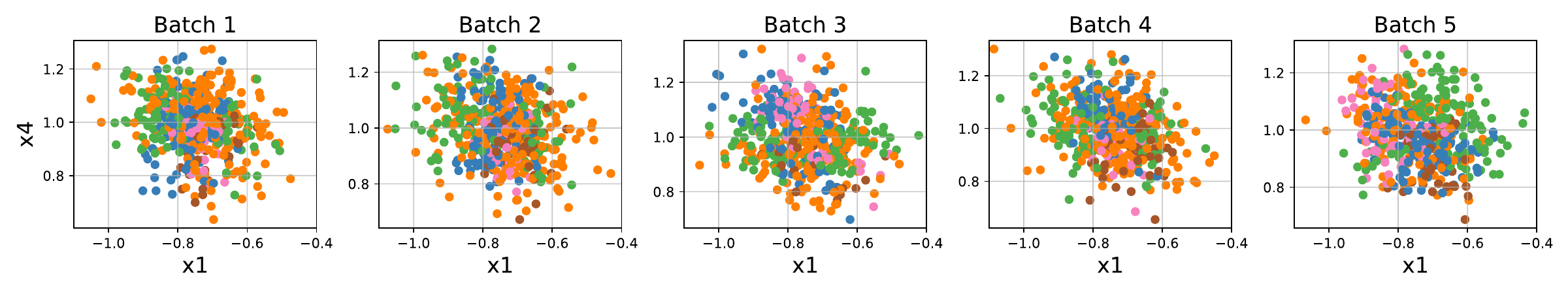}
    }

    \subfloat[Structural drift.]{\label{subfig:structural-drift}
        \includegraphics[width=0.9\linewidth]{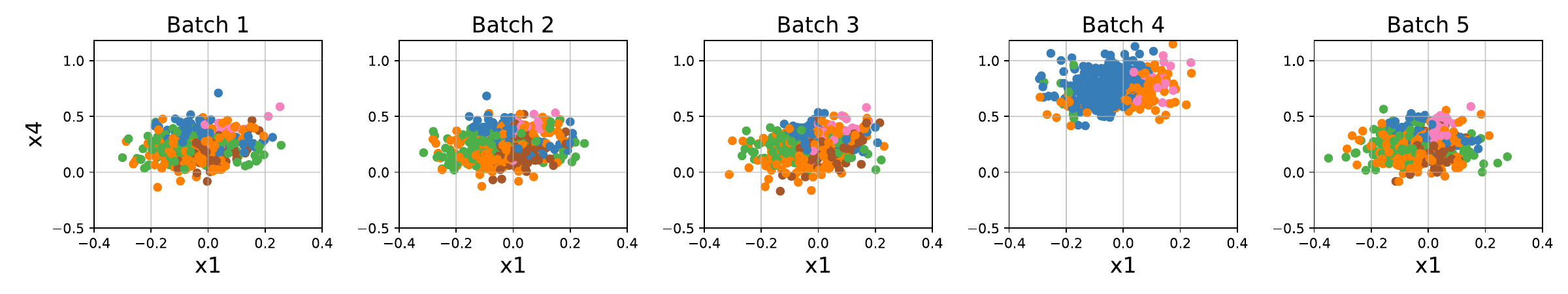}
    }
    
    \caption{Class distribution across concepts under causal concept drift events. Each color represents a different class.}
    \label{fig:drift-class-dist}
\end{figure*}

\section{Impact of incremental endogenous drift on specific nodes} \label{app:incremental-endogenous-analysis}

In Figure~\ref{fig:mmd-over-time-incremental-nodes}, we show the plots when we induce incremental updates on endogenous nodes of \ac{DAG} \#2 over time.
The results are averaged over 10 datasets, with randomly initialized weights for the linear mappers. 

We observe that the observable magnitude of drift depends not only on the structural location of the intervened node but also on the distributional regime of its ancestors. 
In particular, upstream variables with small means and small linear coefficients tend to attenuate perturbations as they propagate through successive structural equations. 
As a consequence, incremental changes introduced at early nodes in the graph may result in mild observable divergence at the level of $P(\mathbf{x})$ and $P(y \mid \mathbf{x})$. 

Interventions on all nodes produce measurable marginal \ac{MMD} and conditional KL divergence. 
However, the magnitude and qualitative impact on predictive performance vary with the causal position of the drifted node.

Interestingly, long-term incremental drift on nodes leads to increases in predictive accuracy, occasionally approaching near-perfect classification when intervening on deeper nodes. 
This behavior arises when incremental parameter updates push the target's parents into regions of the feature space dominated by a single-class prototype, thereby reducing class overlap. 
Because the drift is applied smoothly at each time step, the classifier adapts to a progressively simplified decision surface before more substantial overlap is reintroduced. 
Thus, one should be careful when applying interventions to structural mechanisms in an \ac{SCM}.

These results highlight that the effect of incremental endogenous drift is mediated by both causal distance to the target and by the interaction between structural coefficients and class-prototype geometry. 
Consequently, drift impact cannot be characterized solely by topological position in the graph. 
It also depends on how structural perturbations reshape the effective class-conditional distributions.

\begin{figure}[htb]
    \centering
    \subfloat[Incremental Endogenous drift -- Node $X_4$.]{\label{subfig:incremental-endogenous-x4-mmd-time}
        \includegraphics[width=0.3\linewidth]{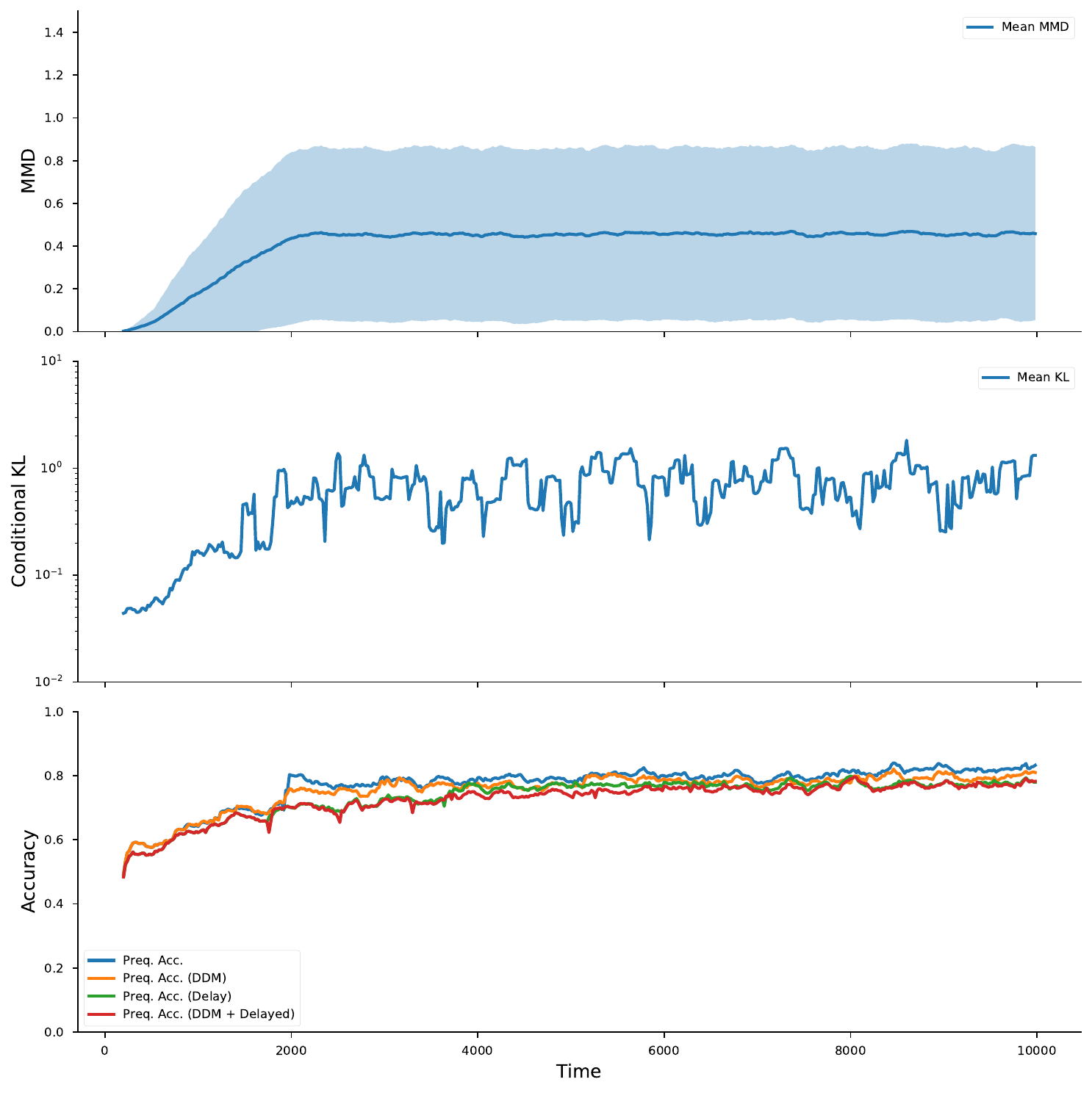}
    } \hspace{3mm}
    \subfloat[Incremental Endogenous drift -- Node $X_5$.]{\label{subfig:incremental-endogenous-x5-mmd-time}
        \includegraphics[width=0.3\linewidth]{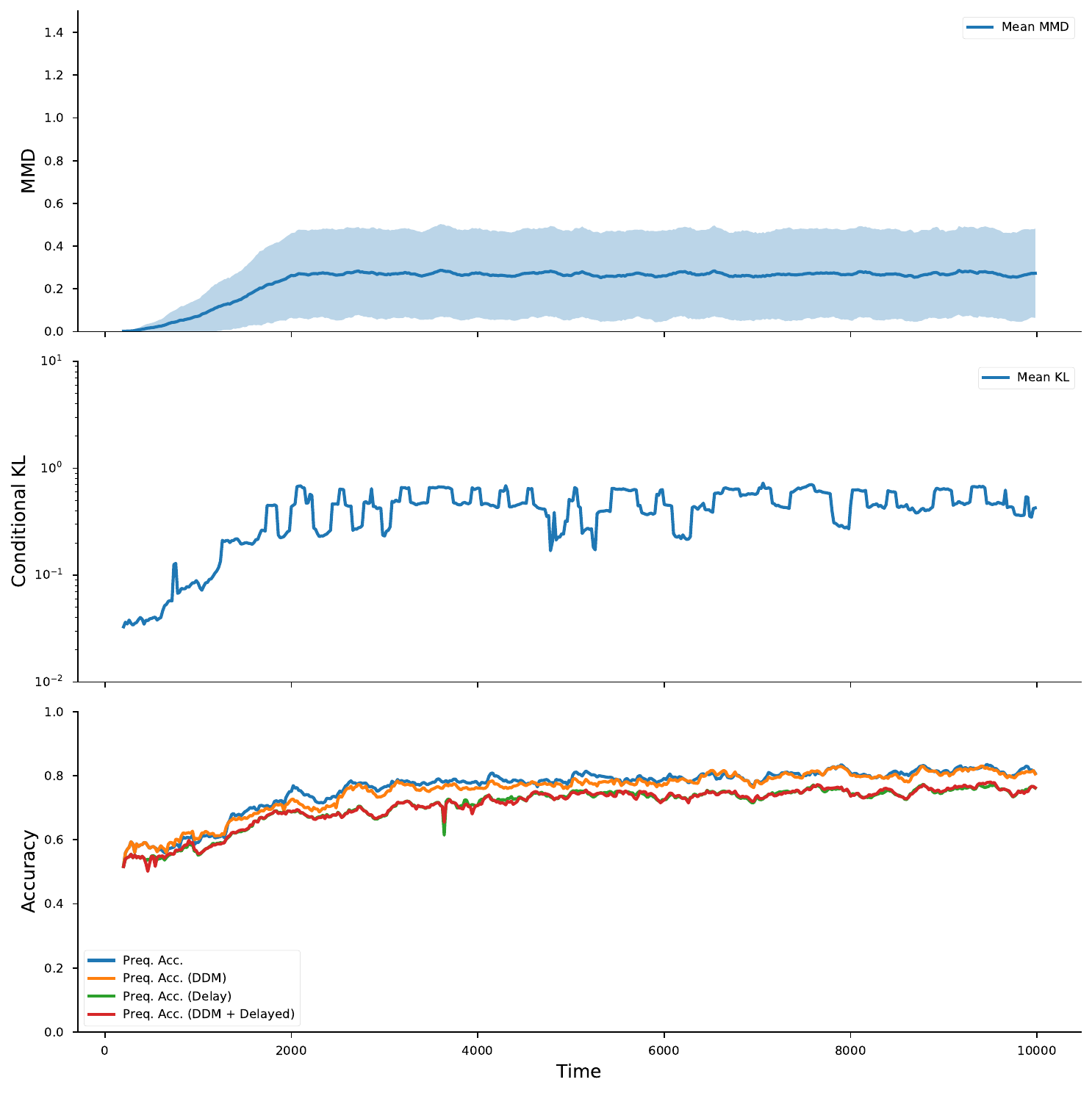}
    } \hspace{3mm}
    \subfloat[Incremental Endogenous drift -- Node $X_6$.]{\label{subfig:incremental-endogenous-x6-mmd-time}
        \includegraphics[width=0.3\linewidth]{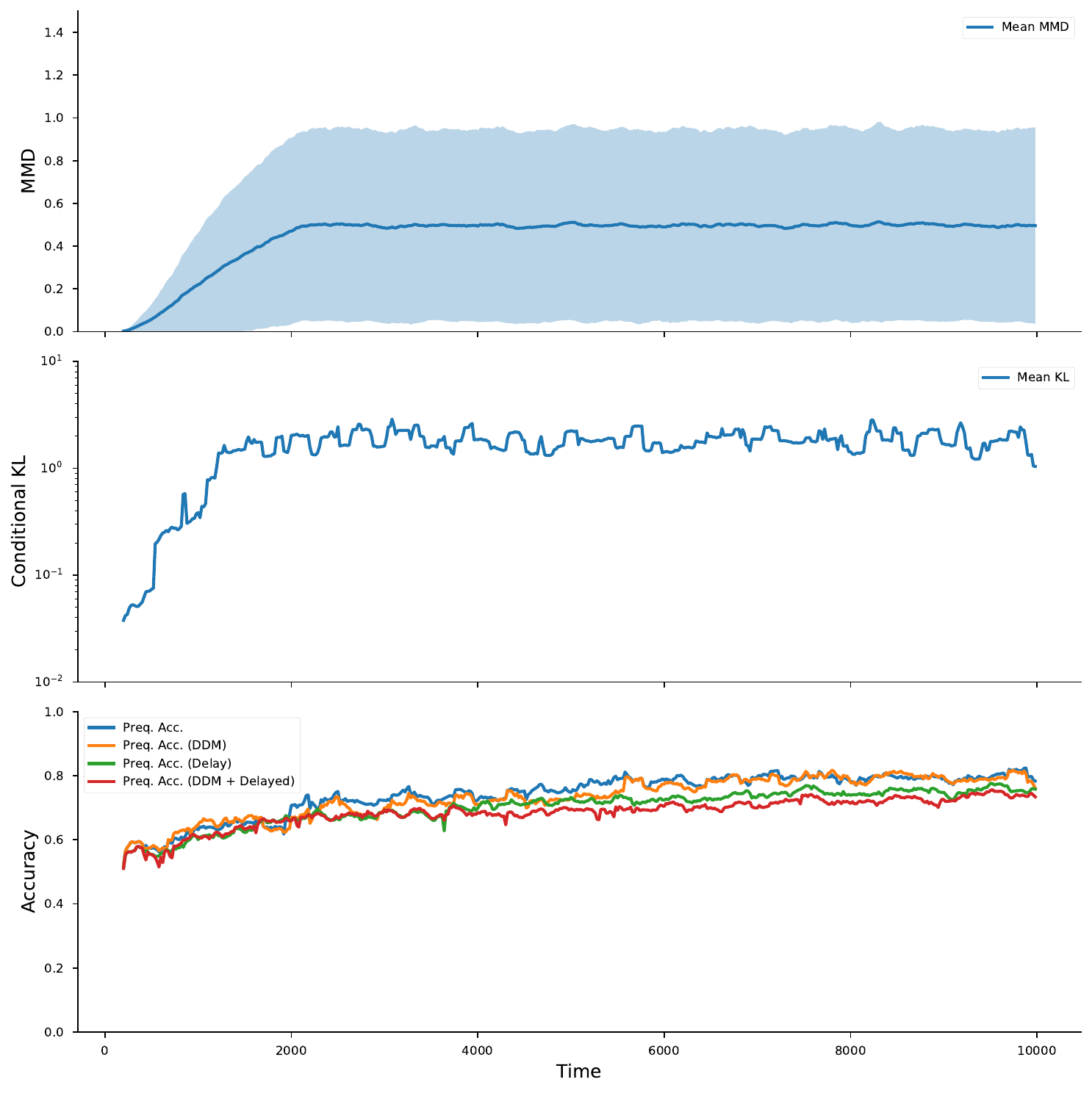}
    }

    \subfloat[Incremental Endogenous drift -- Node $X_7$.]{\label{subfig:incremental-endogenous-x7-mmd-time}
        \includegraphics[width=0.3\linewidth]{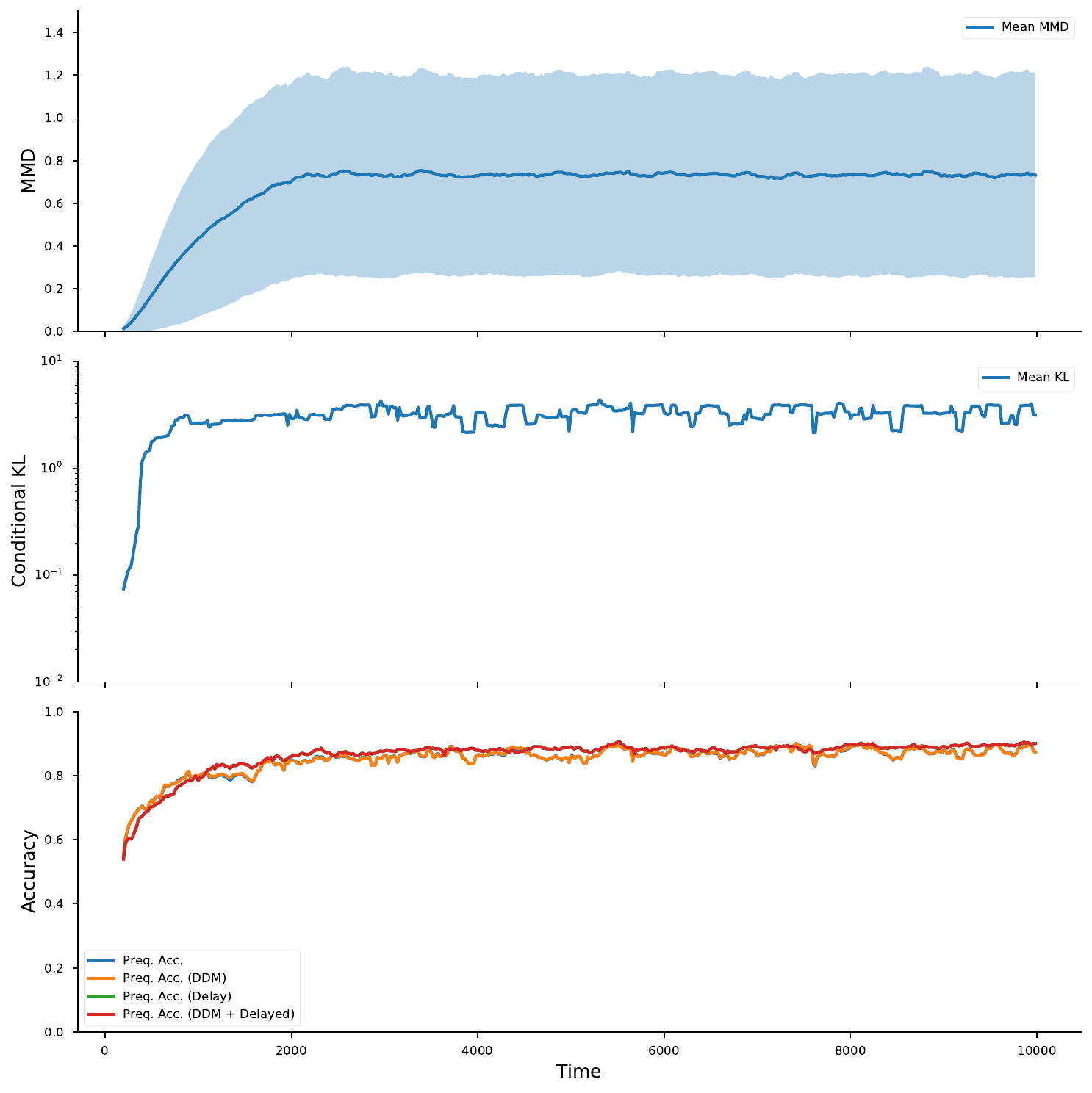}
    } \hspace{3mm}
    \subfloat[Incremental Endogenous drift -- Node $X_8$.]{\label{subfig:incremental-endogenous-x8-mmd-time}
        \includegraphics[width=0.3\linewidth]{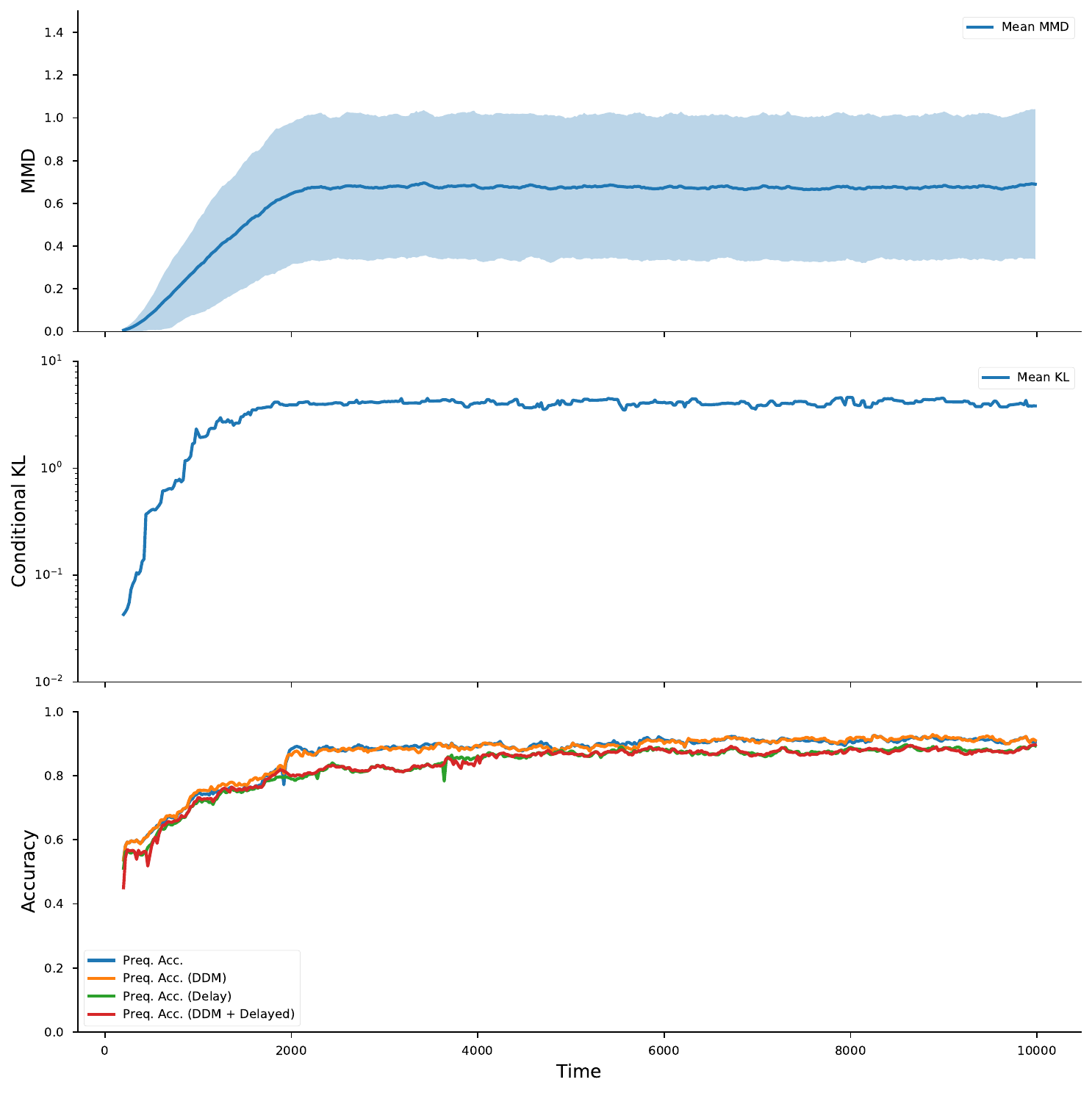}
    }
    
    \caption{MMD, KL divergence, and prequential accuracy over time across concepts of incremental causal drift events on specific nodes (DAG \#2).}
    \label{fig:mmd-over-time-incremental-nodes}
\end{figure}

\section{ACF Plots for datasets generated by DAGs \#1 and \#3}

The ACF plots for \ac{DAG} \#1 are shown in Figures~\ref{fig:acf-dag1} (without seasonality) and \ref{fig:acf-seasonality-dag1} (with seasonality).
For \ac{DAG} \#3, the ACF plots are in Figures~\ref{fig:acf-dag3} (without seasonality) and \ref{fig:acf-seasonality-dag3} (with seasonality).
The observations are the same as in the main text.

\begin{figure}
    \centering
    \includegraphics[width=0.3\linewidth]{figures/legend_rho.pdf}
    \includegraphics[width=.8\linewidth]{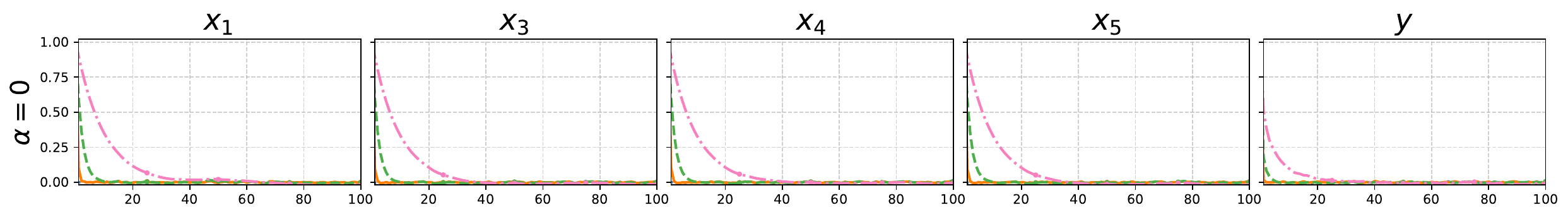}
    \includegraphics[width=.8\linewidth]{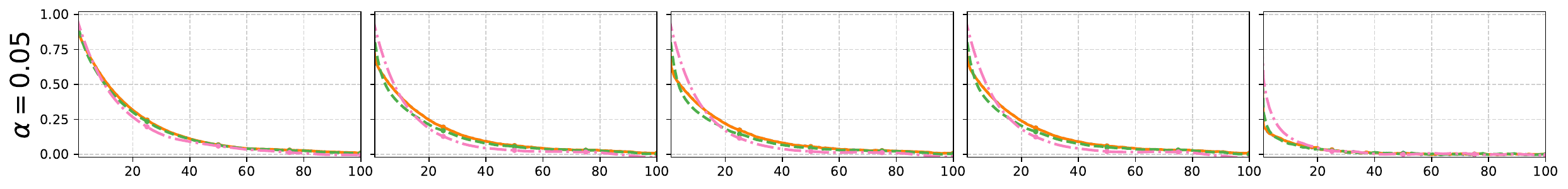}
    \includegraphics[width=.8\linewidth]{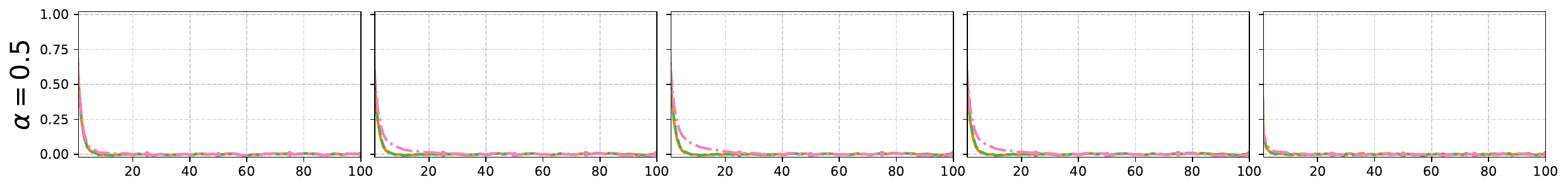}
    \includegraphics[width=.8\linewidth]{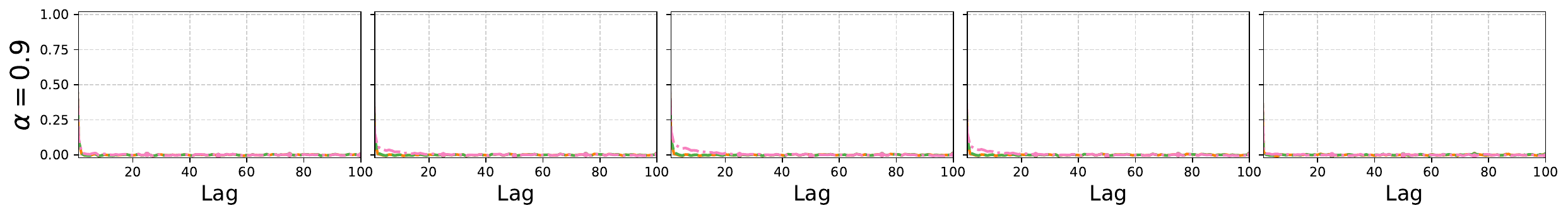}
    
    \caption{ACF plot on features with seasonality in DAG \#1 without seasonality.}
    \label{fig:acf-dag1}
\end{figure}

\begin{figure}
    \centering
    \includegraphics[width=0.3\linewidth]{figures/legend_rho.pdf}
    \includegraphics[width=.8\linewidth]{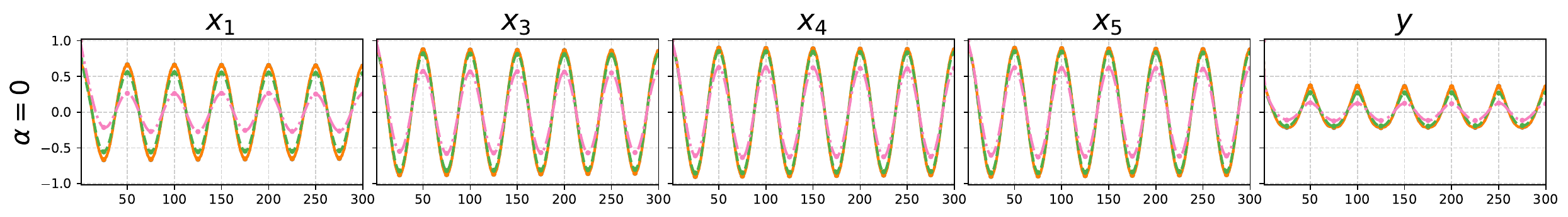}
    \includegraphics[width=.8\linewidth]{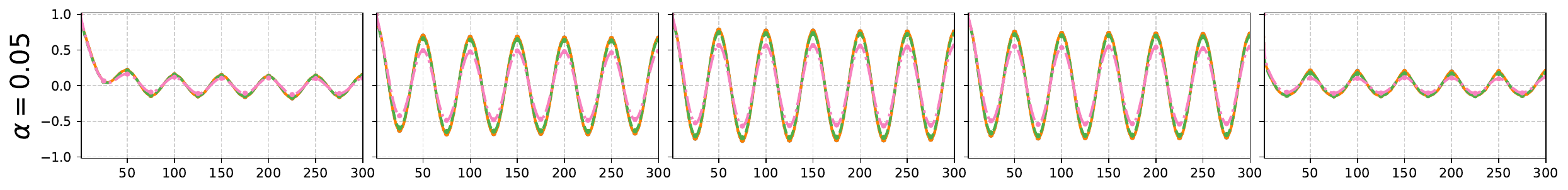}
    \includegraphics[width=.8\linewidth]{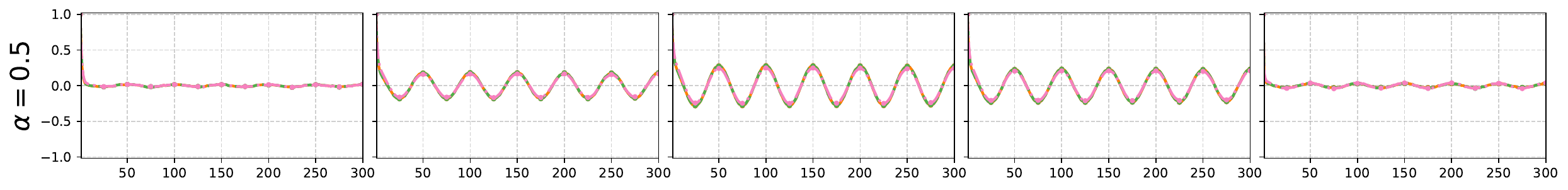}
    \includegraphics[width=.8\linewidth]{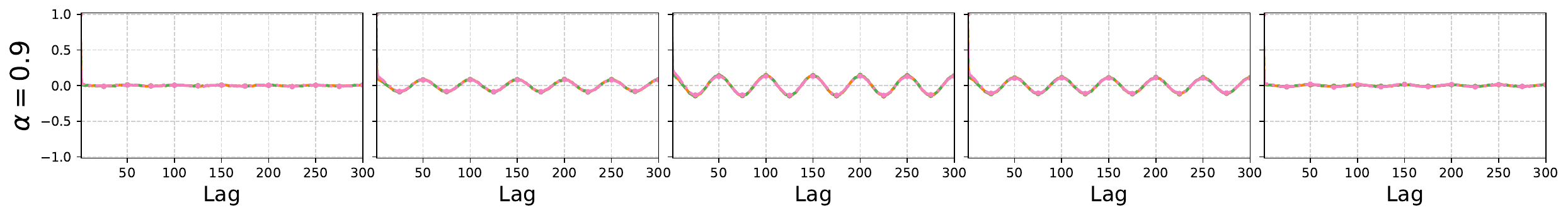}
    
    \caption{ACF plot on features with seasonality in DAG \#1 with seasonality.}
    \label{fig:acf-seasonality-dag1}
\end{figure}

\begin{figure}
    \centering
    \includegraphics[width=0.3\linewidth]{figures/legend_rho.pdf}
    \includegraphics[width=.8\linewidth]{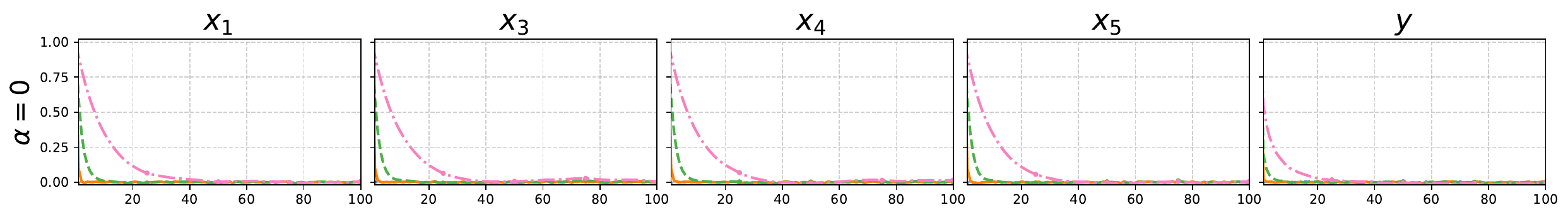}
    \includegraphics[width=.8\linewidth]{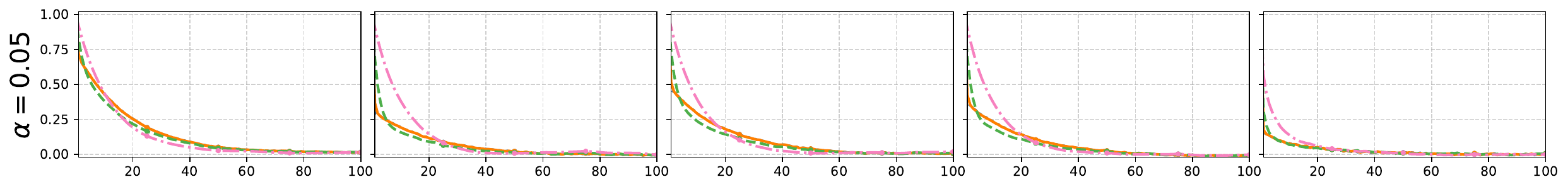}
    \includegraphics[width=.8\linewidth]{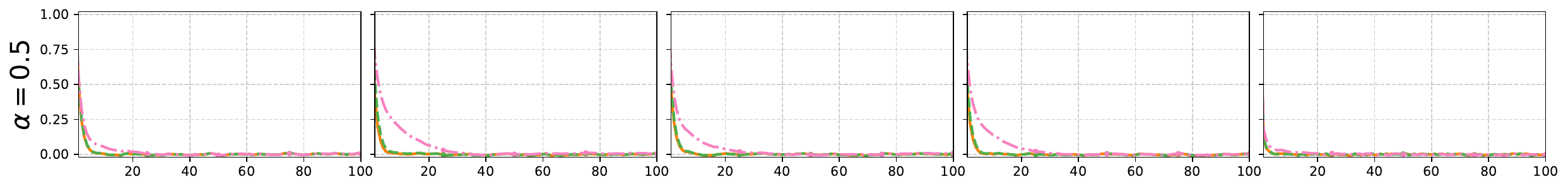}
    \includegraphics[width=.8\linewidth]{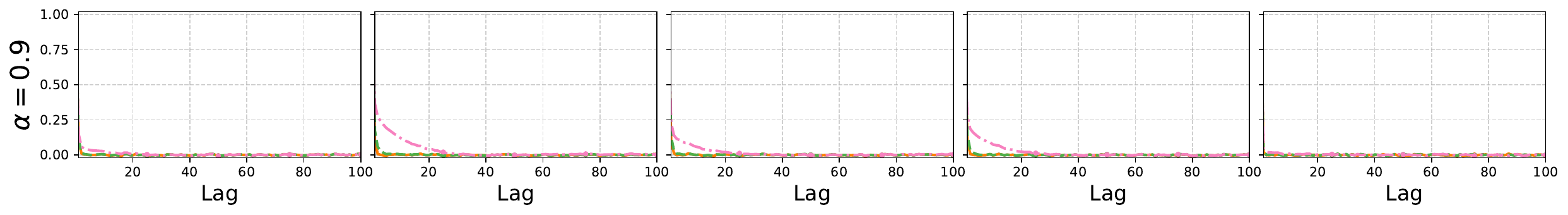}
    
    \caption{ACF plot on features with seasonality in DAG \#3 without seasonality.}
    \label{fig:acf-dag3}
\end{figure}

\begin{figure}
    \centering
    \includegraphics[width=0.3\linewidth]{figures/legend_rho.pdf}
    \includegraphics[width=.8\linewidth]{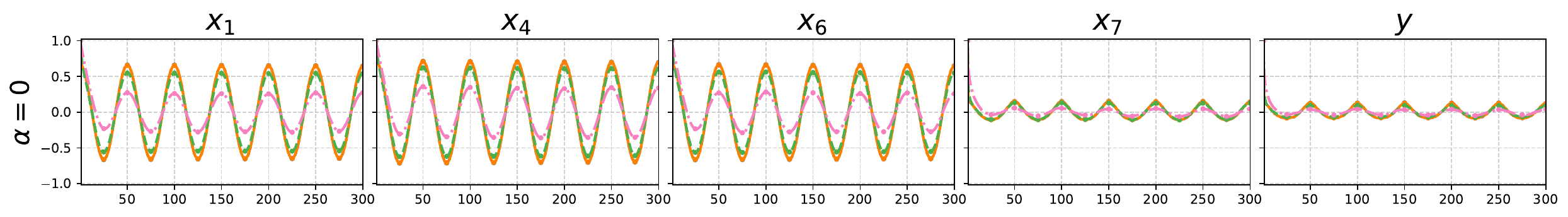}
    \includegraphics[width=.8\linewidth]{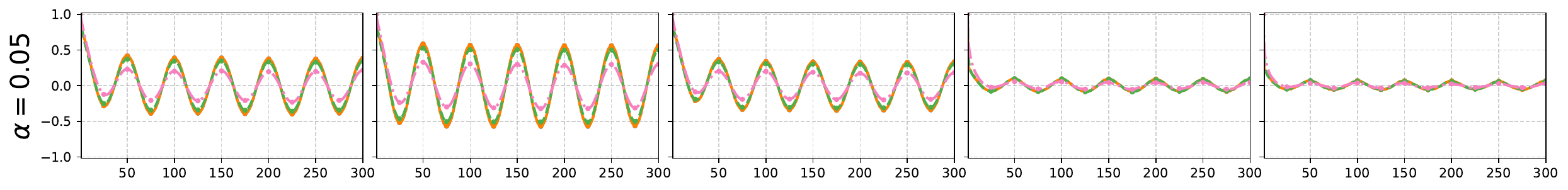}
    \includegraphics[width=.8\linewidth]{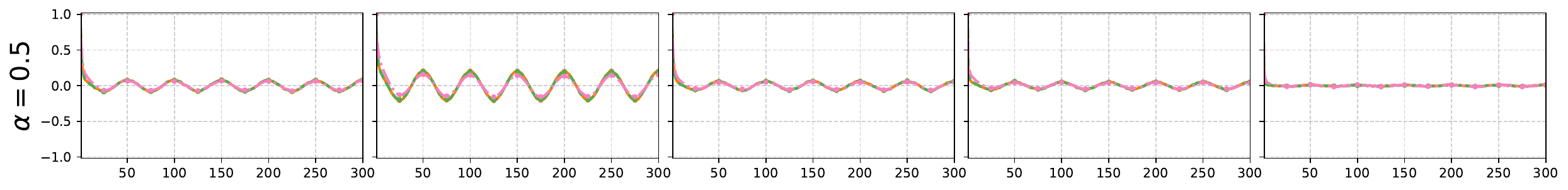}
    \includegraphics[width=.8\linewidth]{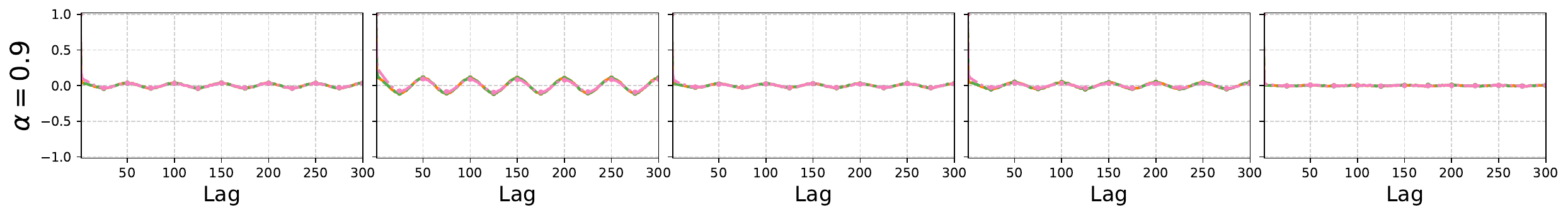}
    
    \caption{ACF plot on features with seasonality in DAG \#3 with seasonality.}
    \label{fig:acf-seasonality-dag3}
\end{figure}

\section{Ljung-box test}
\label{app:lb-test}

We report the Ljung–Box (LB) test results for data generated by \ac{DAG} \#2 in Table~\ref{tab:ljung-box-synth}.
When no component inducing temporal dependence is active, we fail to reject the null hypothesis of no autocorrelation across all features.
In contrast, when AR, EWMA, or seasonal components are included, the test consistently rejects the null hypothesis ($p<0.001$), indicating the presence of temporal dependence in the generated samples.

Because the temporal components are introduced through the same generation mechanisms across all \ac{DAG} configurations, the qualitative effect of inducing serial correlation is consistent for \acp{DAG} \#1 and \#3. 
Therefore, for brevity, we report the LB analysis only for \ac{DAG} \#2.

\begin{table}[htb]
    \centering
    \caption{Ljung--Box test (lag $h = 10$) p-values for synthetic data generated by \ac{CaDrift} using \ac{DAG} \#2 under different temporal dependence configurations.}
    \small
    \begin{tabular}{l c c c c c}
    \toprule
        & None (i.i.d.) & EWMA ($\alpha=0.05$) & AR ($\rho=0.6$) & EWMA+AR & Seasonality \\
    \midrule
        $X_1$  & 0.83 & \textbf{$<0.001$} & \textbf{$<0.001$} & \textbf{$<0.001$} & \textbf{$<0.001$} \\
        $X_2$  & 0.54 & \textbf{$<0.001$} & \textbf{$<0.001$} & \textbf{$<0.001$} & \textbf{$<0.001$} \\
        $X_3$  & 0.56 & \textbf{$<0.001$} & \textbf{$<0.001$} & \textbf{$<0.001$} & \textbf{$<0.001$} \\
        $X_4$  & 0.45 & \textbf{$<0.001$} & \textbf{$<0.001$} & \textbf{$<0.001$} & \textbf{$<0.001$} \\
        $X_5$  & 0.56 & \textbf{$<0.001$} & \textbf{$<0.001$} & \textbf{$<0.001$} & \textbf{$<0.001$} \\
        $X_6$  & 0.83 & \textbf{$<0.001$} & \textbf{$<0.001$} & \textbf{$<0.001$} & \textbf{$<0.001$} \\
        $X_7$  & 0.79 & \textbf{$<0.001$} & \textbf{$<0.001$} & \textbf{$<0.001$} & \textbf{$<0.001$} \\
        $X_8$  & 0.85 & \textbf{$<0.001$} & \textbf{$<0.001$} & \textbf{$<0.001$} & \textbf{$<0.001$} \\
        $X_9$  & 0.15 & \textbf{$<0.001$} & \textbf{$<0.001$} & \textbf{$<0.001$} & \textbf{$<0.001$} \\
        $X_{10}$ & 0.85 & \textbf{$<0.001$} & \textbf{$<0.001$} & \textbf{$<0.001$} & \textbf{$<0.001$} \\
        $y$    & 0.63 & \textbf{$<0.001$} & \textbf{$<0.001$} & \textbf{$<0.001$} & \textbf{$<0.001$} \\
    \bottomrule
    \end{tabular}
    \label{tab:ljung-box-synth}
\end{table}

\section{Additional experimental details: causal discovery on the ELEC2 dataset}
\label{app:detail-exp-elec2}

The experiments in Section 6.3 follow a test-then-train protocol with immediate label availability \citep{gama2014survey}, and one with a 100-sample delay in label availability.
The results are averaged over 5 runs. 
This appendix provides additional detail regarding root mappers, ACF plots, and learners' hyperparameters.
Before causal discovery, we set the constraint that the nodes $day$ and $period$ are root nodes.

The $day$ feature is a categorical variable taking values in $[1,7]$, where each value corresponds to a day of the week. 
Since each sample in the ELEC2 dataset is collected every 30 minutes, one day corresponds to 48 consecutive samples. To preserve this temporal structure in the synthetic stream, we define the root node as a deterministic periodic mapping from time:

\[
X_{day}^{(t)} := \left( \left\lfloor \frac{t}{48} \right\rfloor \bmod 7 \right) + 1.
\]

This mapping generates a weekly categorical cycle in which each day label remains constant for 48 samples (i.e., 24 hours), after which it advances to the next day and repeats after seven days. 
The $period$ feature represents a day period, ranging in $[1, 48]$, and has been normalized to $[0, 1]$. To simulate this feature, we use the following equation:

\[
    X_{period}^{(t)} := \frac{t\bmod{p}}{p-1},
\]

\noindent where $p$ is set to 48, referring to the period in which the feature changes.

These are the only two features for which we needed to explicitly define custom mappers to faithfully reproduce the temporal dynamics at the root nodes of the original dataset. 
Inner nodes and the target variable are mapped through learned small neural networks, fitted on the first 50\% of the data, following the cause--effect relationships present in Figure \ref{fig:elec-dag-pc}.

Among the original features, the FFT identified a clear seasonal period only for the $period$ node, with a period $T_{period}$ of 48.
Since no consistent seasonality was detected in the remaining nodes, no explicit seasonal components were added to their corresponding learned mappers. 
The only exception is the autoregressive noise term, for which the parameter $\rho$ was set to $0.6$ in order to induce moderate temporal dependence across consecutive samples.
With this setup, we can synthesize the data samples. 

For prequential accuracy, we use a sliding window of 500 samples under immediate feedback, providing a smoother visualization of the long-term evolution of predictive performance over the stream.
Under delayed feedback, the effects of augmentation are concentrated in the early stages following adaptation. 
Therefore, we use a smaller sliding window of 100 samples to better capture short-term performance changes that would otherwise be smoothed out by larger windows.

\subsection{Additional experiments: Synthesized vs. original ELEC2}
\label{app:acf-elec2-synth-real}

In Figure \ref{fig:acf-real-synth-elec}, we show the \ac{ACF} plots for all features of the datasets generated by \ac{CaDrift} and compare them to the real-world datasets. 
We notice that the seasonality on the features $day$ and $period$ is very similar.
However, many features exhibit seasonality in the real-world dataset that the FFT did not detect. 
Nevertheless, we can still generate synthetic time-dependent samples that follow the learned cause--effect relationships from the source data.

\begin{figure}
    \centering

    \includegraphics[width=0.2\linewidth]{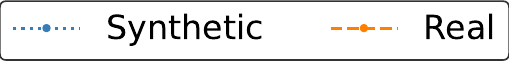}
    
    \includegraphics[width=0.9\linewidth]{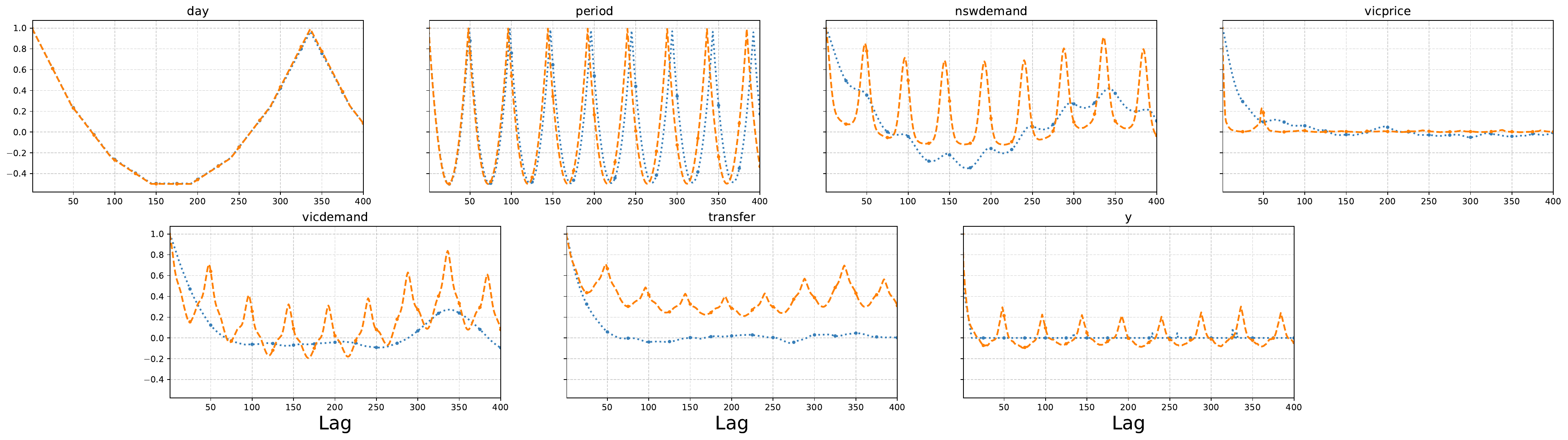}
    \caption{Synthetic vs real-world ACF plots for ELEC2. The y-axis refers to the autocorrelation, and the x-axis to the lag.}
    \label{fig:acf-real-synth-elec}
\end{figure}

\subsection{MMD: synthesized vs. real-world ELEC2}

In Figure~\ref{fig:mmd-real-generated-elec2}, we present the evolution of the MMD over time for the real ELEC2 stream and the stream synthesized by CaDrift.
We use two complementary MMD plots to assess the fidelity of the synthesized stream in terms of marginal distribution.
First, the ``Real vs Real'' curve corresponds to the MMD computed between windows from the original ELEC2 dataset and a reference window containing the first 200 samples, capturing the natural distributional evolution of the real stream over time.
Second, the ``Real vs Generated'' curve corresponds to the MMD computed between aligned windows from the real and synthesized streams at the same temporal positions.
This design allows us to evaluate not only the similarity between real and generated samples, but also whether the synthesized stream reproduces the temporal distributional dynamics of the original data.

We observe that the synthesized stream follows the same seasonal MMD patterns present in the original ELEC2 dataset, with peaks occurring at similar temporal regions.
Moreover, the divergence between aligned real and synthesized windows remains relatively stable over time, suggesting that CaDrift reproduces key temporal and distributional characteristics of the original stream, although the seasonal dynamics are not perfectly matched.

\begin{figure}[htb]
    \centering
    \includegraphics[width=0.5\linewidth]{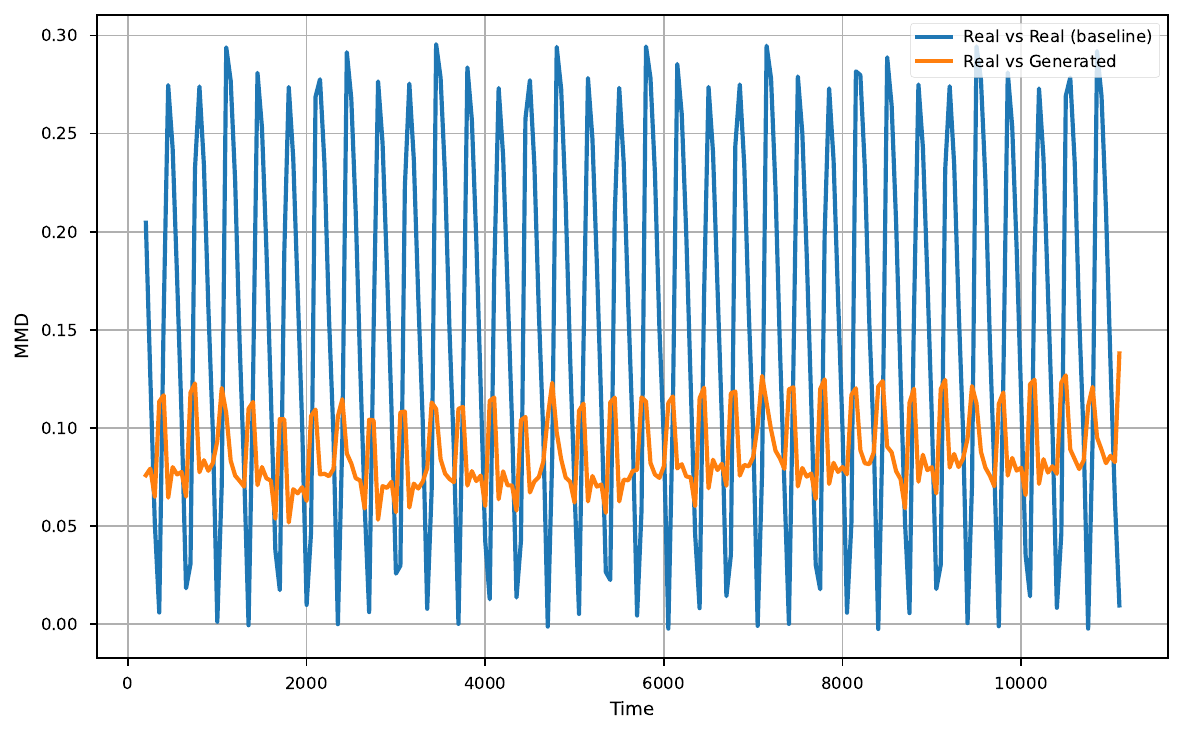}
    \caption{MMD over time: real vs. generated ELEC2.}
    \label{fig:mmd-real-generated-elec2}
\end{figure}

\subsection{Online Learners Hyperparameters}
\label{app:learners-hyperparameters}

Table~\ref{tab:learners-hyperparameters} reports the hyperparameters of the online learners used in the experiments in Section~\ref{subsec:case-study-elec2}.
For TabPFN, we use a $context\_window$ of 10,000 samples.
This implies that, when $n=5,000$, 50\% of the context window consists of synthetic data generated by \ac{CaDrift}.
When $n=10,000$, the whole initial context window consists of synthetic data.

\begin{table}[htb]
    \centering
    \caption{Learners' hyperparameters. All ensemble methods use an HT as a base classifier.}
    \begin{tabular}{l p{10cm}}
    \toprule
        Method & Hyperparameters \\
        \midrule
        ARF & $change\_detector:ADWIN$, $ensemble\_size=100$ \\ \hline
        IncA-DES & $change\_detector:RDDM$, $pool\_size=100$, $F=200$, $k=5$ \\ \hline
        TabPFN$_{v2.5}$$^{\text{Stream}}$ & $context\_window=10,000$, $short\_term\_window=7,500$, $long\_term\_window=2,500$ \\
        \bottomrule
    \end{tabular}
    \label{tab:learners-hyperparameters}
\end{table}

\end{document}